\newcommand{\RR}{\mathbb{R} }
\newcommand{\NN}{\mathbb{N}}
\newcommand{\act}[2]{\textsf{act}(#1;#2)}
\newcommand{\n}[2]{\textsf{n}(#1;#2)}
\newcommand{\actphi}[2]{\textsf{act}^\varphi(#1;#2)}
\newcommand{\nphi}[2]{\textsf{n}^\varphi(#1;#2)}
\newcommand{\nv}{\textbf{n}}
\newcommand{\av}{\textbf{a}}
\newcommand{\h}[2]{h_{\parm,#1}(#2)}
\newcommand{\hphi}[2]{h^\varphi_{\parm,#1}(#2)}
\newcommand{\w}{\mathbf{w}}
\newcommand{\wrest}{\mathbf{\bar{w}}}
\newcommand{\Loss}{\mathcal{L}}
\newcommand{\loss}{\ell}
\newcommand{\textdef}[1]{\textbf{#1}}
\newcommand{\parm}{\mathord{\color{black!35}\bullet}}%
\newcommand{\parmBlue}{\mathord{\color{blue!50}\bullet}}%
\DeclareMathOperator{\spn}{span}
\newtheorem{theorem}{Theorem}
\newtheorem{remark}[theorem]{Remark}
\newtheorem{definition}[theorem]{Definition}
\newtheorem{corollary}[theorem]{Corollary}
\newtheorem{appendixLemma}{Lemma B.\hspace{-0.12cm}}
\begin{document}
%
\title{Non-attracting Regions of Local Minima\\ in Deep and Wide Neural Networks}


\author{\IEEEauthorblockN{Henning Petzka\IEEEauthorrefmark{1},
Cristian Sminchisescu\IEEEauthorrefmark{1,2}}
\IEEEauthorblockA{\IEEEauthorrefmark{1}Lund University}
\IEEEauthorblockA{\IEEEauthorrefmark{2}Google Research}
\thanks{This work was supported in part by the European Research Council Consolidator grant SEED, CNCS-UEFISCDI (PN-III-P4-ID-PCE-2016-0535, PN-III-P4-ID-PCCF-2016-0180), the EU Horizon 2020 grant DE-ENIGMA (688835), and SSF.\newline
H. Petzka (email: henning.petzka@math.lth.se), C. Sminchisescu (cristian.sminchisescu@math.lth.se)}}

\markboth{}%
{ }
%



\IEEEtitleabstractindextext{%
\begin{abstract}
Understanding the loss surface of neural networks is essential for the design of models with predictable performance and their success in applications. Experimental results suggest that sufficiently deep and wide neural networks are not negatively impacted by suboptimal local minima. Despite recent progress, the reason for this outcome is not fully understood. Could deep networks have very few, if at all, suboptimal local optima? or could all of them be equally good? We provide a construction to show that suboptimal local minima (i.e., non-global ones), even though degenerate, exist for fully connected neural networks with sigmoid activation functions. The local minima obtained by our construction belong to a connected set of local solutions that can be escaped from via a non-increasing path on the loss curve. For extremely wide neural networks of decreasing width after the wide layer, we prove that every suboptimal local minimum belongs to such a connected set.  This provides a partial explanation for the successful application of deep neural networks. In addition, we also characterize under what conditions the same construction leads to saddle points instead of local minima for deep neural networks. 

\end{abstract}

}

\maketitle

\IEEEdisplaynontitleabstractindextext

%
\IEEEpeerreviewmaketitle

\section{Introduction}

At the heart of most optimization problems lies the search for the global minimum of a loss function. The common approach to finding a solution is to initialize at random in parameter space and subsequently follow directions of decreasing loss based on local methods. This approach lacks a global progress criteria, which leads to descent into one of the nearest local minima. The common approach of using gradient descent variants on non-convex loss curves of deep neural networks is vulnerable precisely to that problem.

Authors pursuing the early approaches to local descent by back-propagating gradients \citep{Rumelhart} experimentally noticed that suboptimal local minima appeared surprisingly harmless. More recently, for deep neural networks, the earlier observations were further supported by experiments of, e.g., \citet{Zhang}. Several authors aimed to provide theoretical insight for this behavior. 
Some, aiming at explanations, rely on simplifying modeling assumptions. Others investigate neural networks under realistic assumptions, but often focus on failure cases only. Recently, \citet{NguyenHein} provide partial explanations for deep and {\it extremely wide} neural networks for a class of activation functions including the commonly used sigmoid. Extreme width is characterized by a  ``wide'' layer that has more neurons than input patterns to learn. For almost every instantiation of parameter values $\w$ (i.e., for all but a set of parameter values of measure zero) it is shown that, if the loss function has a local minimum at $\w$, then this local minimum must be a global one. This extends results  by \cite{GoriTesi} who required the input layer to be extremely wide. This suggests that for deep and wide neural networks, possibly every local minimum is global. The question on what happens at the null set of parameter values, for which the result does not hold, remained unanswered.

Similar observations for shallow neural networks with one hidden layer were made earlier by \citet{Poston}. \citet{Poston} show for a neural network with one hidden layer and sigmoid activation function that, if the hidden layer has more nodes than there are training patterns, then the error function (squared sum of prediction losses over the samples) has no suboptimal ``local minimum'' and ``each point is arbitrarily close to a point from which a strictly decreasing path starts, so such a point cannot be separated from a so called “good” point by a barrier of any positive height'' \citep{Poston}. It was criticized by \citet{Sprinkhuizen} that the definition of a local minimum used in the proof of \citet{Poston} was rather strict and unconventional. In particular, the results do not imply that no suboptimal local minima, defined in the usual way, exist. As a consequence, the notion of attracting and non-attracting regions of local minima were introduced and the authors prove that non-attracting regions exist by providing an example for the extended XOR problem. The existence of these regions imply that a gradient-based approach descending the loss surface using local information may still not converge to the global minimum. 
The main objective of this work is to revisit the problem of such non-attracting regions and show that they also exist in deep and extremely wide networks. In particular, a gradient based approach may get stuck in a suboptimal local minimum also in these networks. Most importantly, the performance of deep and wide neural networks cannot be explained by the analysis of the loss curve alone, without taking proper initialization or the stochasticity of stochastic gradient descent (SGD) into account.

Our observations are not fundamentally negative. At first, the local minima we find are rather degenerate. With proper initialization, a local descent technique is unlikely to get stuck in one of the degenerate, suboptimal local minima.\footnote{That a proper initialization largely improves training performance is well-known. See, e.g., \citet{Wessels2}.} Secondly, the minima reside on a non-attracting region of local minima (see Definition~\ref{def:nonattractRegion}). Due to its exploration properties, stochastic gradient descent will eventually be able to escape from such a region \citep[see][]{Wei}. It is conceivable that  
in sufficiently wide and deep networks, except for a null set of parameter values as starting points, there is always a monotonically decreasing path down to the global minimum. This was shown for neural networks with one hidden layer, sigmoid activation function and square loss \citep{Poston}, and we generalize this result to deep neural networks. This implies that in such networks every local minimum belongs to a non-attracting region of local minima. (More precisely, our result holds for all extremely wide neural networks with square loss and a class of activation functions including the sigmoid, where the sequence of dimensions of hidden layers is non-increasing between the extremely wide layer and the output layer, i.e., the network architecture has no bottleneck layer of strictly lower dimension than both its neighboring layers.) 

Our proof of the existence of suboptimal local minima even in extremely wide and deep networks is based on a construction of local minima in shallow neural networks given by \citet{FukumizuAmari}. By relying on a careful computation we are able to characterize when this construction is applicable to deep neural networks. Interestingly, in deeper layers, the construction rarely seems to lead to local minima, but more often to saddle points. The argument that saddle points rather than suboptimal local minima are the main problem in deep networks has been raised before \citep{Dauphin} but a theoretical justification \citep{Choromanska} uses strong assumptions that do not exactly hold in neural networks. Here, we provide the first analytical argument, under realistic assumptions on the neural network structure, describing when certain critical points (i.e., points with gradient zero) of the training loss lead to saddle points in deeper networks. 

In summary, our results contain the following insight: There exist non-attracting regions of local minima and, in particular, suboptimal local minima in the loss surface of arbitrarily wide neural networks for a class of analytic activation functions including the sigmoid function. The minima can be both of finite type or only exist in the limit as some parameters converge to infinity. This disproves a conjecture made by \citet{NguyenHein} stating that for the therein studied extremely wide neural networks all local minima are globally optimal. Non-attracting regions of local minima, however, allow for non-increasing paths to the global minimum by first following degenerate directions of the local minimum. In sufficiently wide neural networks with no bottleneck layer, all local minima belong to non-attracting regions of local minima.

The extremely wide neural networks considered have zero loss at global minima. Naturally, training for zero global loss is not desirable in practice, neither is the use of fully connected extremely wide deep neural networks necessarily. The results of this paper are of theoretical importance. To be able to understand the complex learning behavior of deep neural networks in practice, it is a necessity to understand the networks with the most fundamental structure. In this regard, our results offer new understanding of the multidimensional loss surface of deep neural networks and their learning behavior.  

\newpage

\section{Related Work}
We discuss related work on suboptimal minima of the loss surface. In addition, we refer the reader to the overview article by \citet{Vidal} for a discussion on the non-convexity in neural network training. 

It is known that learning the parameters of neural networks is, in general, a hard problem. \citet{BlumRivest} prove NP-completeness for a specific neural network. It has also been shown that local minima and other critical points exist in the loss function of neural network training \citep{Auer,FukumizuAmari,Nitta,sminchisescu_ijcv04,Sprinkhuizen,Wessels, Yun}. The understanding of these critical points has led to significant improvements in neural network training. This includes weight initialization techniques \citep{Wessels2}, improved backpropagation algorithms to avoid saturation effects in neurons \citep{Wang}, entirely new activation functions, or the use of second order information \citep{MizutaniDreyfus,Amari}. That suboptimal local minima must become rather degenerate if the neural network becomes sufficiently large was observed for networks with one hidden layer by \citet{Poston}. Extending work by \citet{GoriTesi}, \citet{NguyenHein,NguyenHein2} generalized this result to deeper networks containing an extremely wide hidden layer. Our contribution can be considered as a continuation of this work.

To explain the persuasive performance of deep neural networks, \citet{Dauphin} experimentally show that there is a similarity in the behavior of critical points of the neural network's loss function with theoretical properties of critical points found for Gaussian fields on high-dimensional spaces \citep{BrayDean}. \citet{Choromanska} supply a theoretical connection, but they also require strong (arguably unrealistic) assumptions on the network structure. The results imply that (under their assumptions on the deep network) the loss at a local minimum must be close to the loss of the global minimum with high probability. In this line of research, \citet{Sagun} experimentally show a similarity between spin glass models and the loss curve of neural networks. 

There is a growing number of papers considering the existence of suboptimal local minima  for ReLU and LeakyReLU networks, where the space becomes combinatorial in terms of a positive activation, compared to a stalled (or weak) signal.  \citet{Yun} prove existence of bad local minima in ReLU networks for generic data sets by tuning the weight parameters in such a way that all neurons are active and the network becomes locally linear. The existence of bad local minima had previously been shown under stronger assumptions by \citet{Du,Zhou} and \citet{Swirszcz}, who construct data sets that allow them to find suboptimal local minima in overparameterized networks. For the hinge loss, \citet{Laurent2018b} study one-hidden-layer networks  and show that Leaky-ReLU networks don't have bad local minima, while ReLU networks do. Conditions for ReLU networks characterizing when no bad local minima exist or how to eliminate them is discussed by \citet{Liang,LiangB}. \citet{SoudryHoffer} probabilistically compare the volume of regions (for a specific measure) containing bad local and global minima in the limit, as the number of data points goes to infinity.  For networks with one hidden layer and ReLU activation function, \citet{Freeman} quantify the amount of hill-climbing necessary to go from one point in the parameter space to another and finds that for increasing overparameterization, all level sets become connected.  Instead of analyzing local minima, \citet{Xie} consider regions where the derivative of the loss is small for two-layer ReLU networks. \citet{SoudryCarmon} consider leaky ReLU activation functions to find, similarly to the result of \citet{NguyenHein}, that for almost every combination of activation patterns in two consecutive mildly wide layers, a local minimum has global optimality.

To gain better insight into theoretical aspects, some papers consider linear networks, where the activation function is the identity. The classic result by \citet{BaldiHornik} shows that linear two-layer neural networks have a unique global minimum and all other critical values are saddle points.  \citet{Kawaguchi},  \citet{LuKawaguchi} and \citet{Yun} discuss generalizations of the results by \citet{BaldiHornik} to deep linear networks, and \citet{Laurent2018b} finally show that for linear networks with no bottleneck layer, all minima are global.

The existence of non-increasing paths on the loss curve down to the global minimum is studied by \citet{Poston} for extremely wide two-layer neural networks with sigmoid activation functions. \citet{NguyenHein3} generalize this and show existence of such paths for a special type of architecture having as many skip connections to the output as there are input patterns to learn. For ReLU networks, \citet{SafranShamir} show that, if one starts at a sufficiently high initialization loss, then there is a strictly decreasing path of parameters into the global minimum.  \citet{Haeffele} consider a specific class of ReLU networks with regularization, give a sufficient condition that a local minimum is globally optimal, and show that a non-increasing path down to the global minimum exists.

Finally, worth mentioning is the study of \citet{LiaoPoggio} who use polynomial approximations to argue, by relying on Bezout's theorem, that the loss function should have many local minima with zero empirical loss. Why deep networks perform better than shallow ones is also investigated by \citet{Poggio} by considering a class of compositional functions. Also relevant is the observation by  \citet{Brady} showing that, if the global minimum is not of zero loss, then a perfect predictor may have a larger loss in training than one producing worse classification results.


\section{Main Results}\label{sct:problemDef}


We consider \textdef{neural network functions} with fully connected layers of size $n_l,\ 0\leq l\leq L$ given by
$$f(x)=\w^L(\sigma(\w^{L-1}(\sigma(\ldots\sigma (\w^2(\sigma(\w^1 x+\w^1_0))+\w^2_0) \ldots))+\w^{L-1}_0))+\w_0^L,$$
where $\mathbf{\w^l\in \RR ^{n_{l}\times n_{l-1}}}$ denotes the \textdef{weight matrix} of the $l$-th layer, $1\leq l\leq L$, $\w^l_0$ the \textdef{bias} terms, and $\sigma$ a nonlinear \textdef{activation function}. The neural network function is denoted by $f$ and we notationally suppress dependence on parameters. We assume the activation function $\sigma$ to belong to the class of strict monotonically increasing, analytic, bounded functions on $\RR$ with image an interval $(c,d)$ such that $0\in [c,d]$, a \textdef{class denoted by $\mathbf{\mathcal{A}}$}. As prominent examples, the sigmoid activation function $\sigma(t)=\frac{1}{1+\text{exp}(-t)}$ and $\sigma(t)=\text{tanh}(x)$ lie in $\mathcal{A}$. We assume no activation function at the output layer. All the networks considered in this paper are \textdef{regression networks} mapping into the real numbers $\RR $, i.e., $\mathbf{n_L=1}$ and $\w^L\in \RR ^{1\times n_{L-1}}$. We train on a \textdef{finite data set} $(x_\alpha,y_\alpha)_{1\leq \alpha\leq N}$ of size $N$ with \textdef{input patterns} $x_\alpha\in\RR^{n_0}$ and desired \textdef{target value} $y_\alpha\in \RR $. We suppose throughout that the input patterns are pairwise different. We aim to minimize the \textdef{squared loss }$\Loss= \sum_{\alpha=1}^N (f(x_\alpha) -y_\alpha)^2$. Further, $M$ denotes the \textdef{total number of parameters} and $\w\in\RR^M$ denotes the \textdef{collection of all $\w^l$ and $\w_0^l$}. 

The dependence of the neural network function $f$ on $\w$ translates into a dependence $\Loss=\Loss(\w)$ of the loss function on the parameters $\w$. Due to assumptions on $\sigma$, $\Loss(\w)$ is twice continuously differentiable. The goal of training a neural network consists of minimizing $\Loss(\w)$ over $\w$. There is a unique value $\Loss_0$ denoting the infimum of the neural network's loss (most often $\Loss_0=0$ in our examples). Any set of weight parameters $\w_{\parm}$ that satisfies $\Loss(\w_{\parm})=\Loss_0$ is called a \textdef{global minimum}.  Due to its non-convexity, the loss function $\Loss(\w)$ of a neural network is in general known to potentially suffer from local minima (precise definition of a local minimum below).  We will study the existence of \textdef{suboptimal local minima} in the sense that a local minimum $\w_*$ is suboptimal if its loss $\Loss(\w_*)$ is strictly larger than $\Loss_0$. 

We refer to \textdef{deep neural networks} as networks with more than one hidden layer. Further, we refer to \textdef{extremely wide neural networks} as the type of networks considered in other theoretical work \citep{GoriTesi, Poston, NguyenHein, NguyenHein2} with one hidden layer containing at least as many neurons as input patterns (i.e., $n_l\geq N$ for some $1\leq l<L$ in our notation).
\vspace{0.4cm}


\subsection{A Special Kind of Local Minimum}

The standard definition of a \textdef{local minimum}, which is also used here, is a point $\w_*$ such that $\w_*$ has a neighborhood $U$ with $\Loss(\w)\geq \Loss(\w_*)$ for all $\w\in U$. Since local minima do not need to be \textdef{isolated} (i.e., $\Loss(\w)> \Loss(\w_*)$ for all $\w\in U\setminus \{w_*\}$) two types of connected regions of local minima may be distinguished.  In the following definition,  a \textdef{continuous path} is a continuous map $\w_\Gamma:[0,1]\rightarrow \RR^{M}$ assigning each $t\in[0,1]$ a choice of parameters values $\w_\Gamma(t)$ with loss $\Loss(\w_\Gamma(t))$. We call the path \textdef{non-increasing} in $\Loss$ if $\Loss(\w_\Gamma(t))\leq \Loss(\w_\Gamma(s))$ for all $t\geq s$. A non-increasing path $\w_\Gamma(t)$ \textdef{decreases the loss maximally}, if it cannot be extended as a non-increasing path to a parameter setting of lower loss, or formally, if there exists no non-increasing path $\tilde \w_\Gamma(t)$ such that for each $t$ in $[0,1]$ there is $s$ in $[0,1]$ with $\w_\Gamma(t)=\tilde \w_\Gamma(s)$ and such that $\Loss(\tilde \w_\Gamma(1))<\Loss(\w_\Gamma(1))$. 

\begin{definition}\citep{Sprinkhuizen}\label{def:nonattractRegion}
Let $\Loss:\RR^n\rightarrow \RR$ be a differentiable function. Suppose $R$ is a maximal connected subset of parameter values $\w\in \RR^m$, such that every $\w\in R$ is a local minimum of $\Loss$ with value $\Loss(\w)=c$. 
\begin{itemize}
\item R is called an \textdef{attracting region of local minima}, if there is a neighborhood $U$ of $R$ such that every continuous path $\w_\Gamma(t)$, which is non-increasing in $\Loss$, which starts at some $\w_\Gamma(0)=\w\in U$ and which decreases the loss maximally, ends in $R$.   
\item R is called a \textdef{non-attracting region of local minima}, if every neighborhood $U$ of $R$ contains a point from where a continuous path $\w_\Gamma(t)$ exists that is non-increasing in $\Loss$ and ends in a point $\w_\Gamma(1)$ with $\Loss(\w_\Gamma(1))<c$. 
\end{itemize}
\end{definition}

\begin{figure}
\begin{center}
\includegraphics[height=5.8cm]{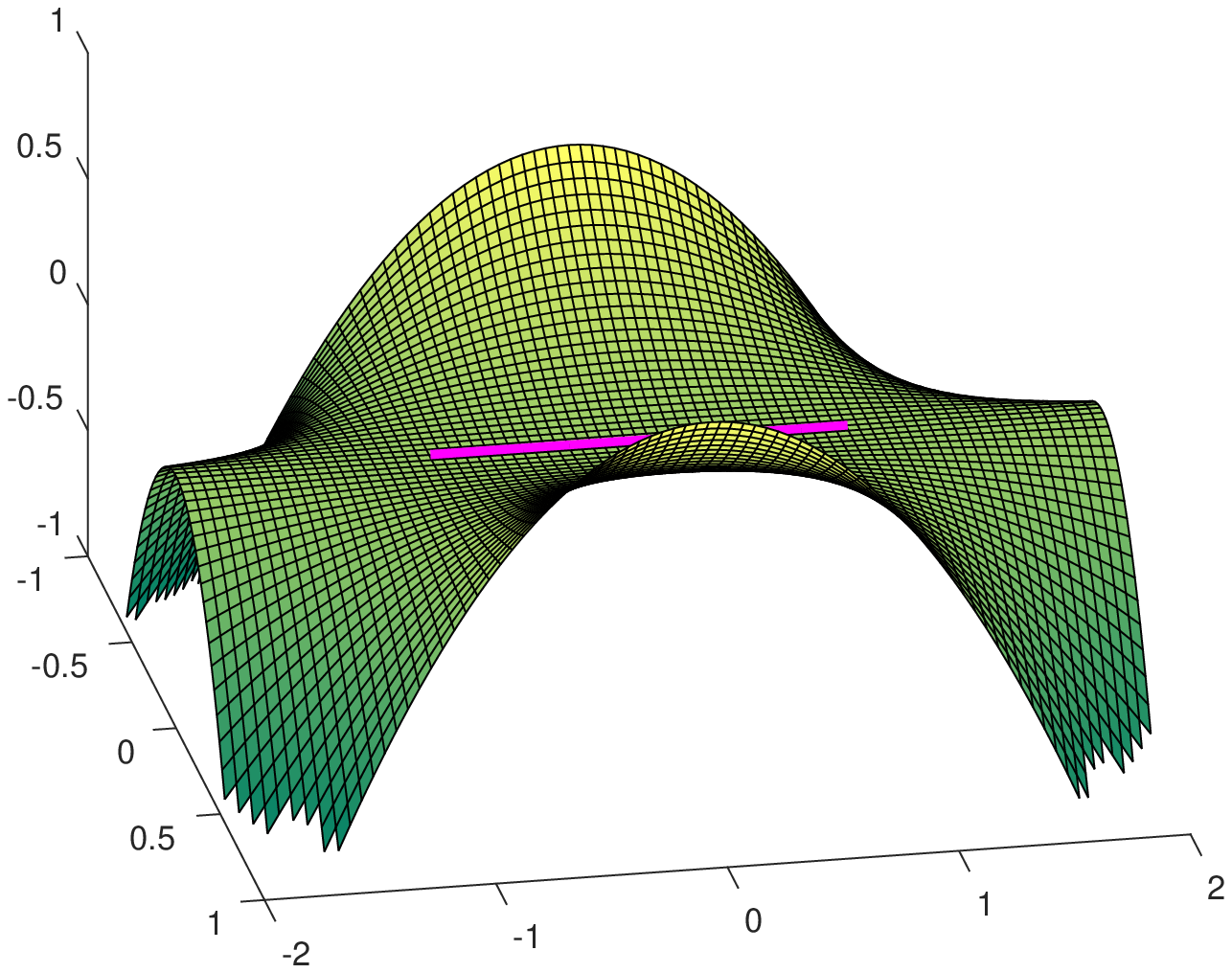}
\includegraphics[height=5.8cm]{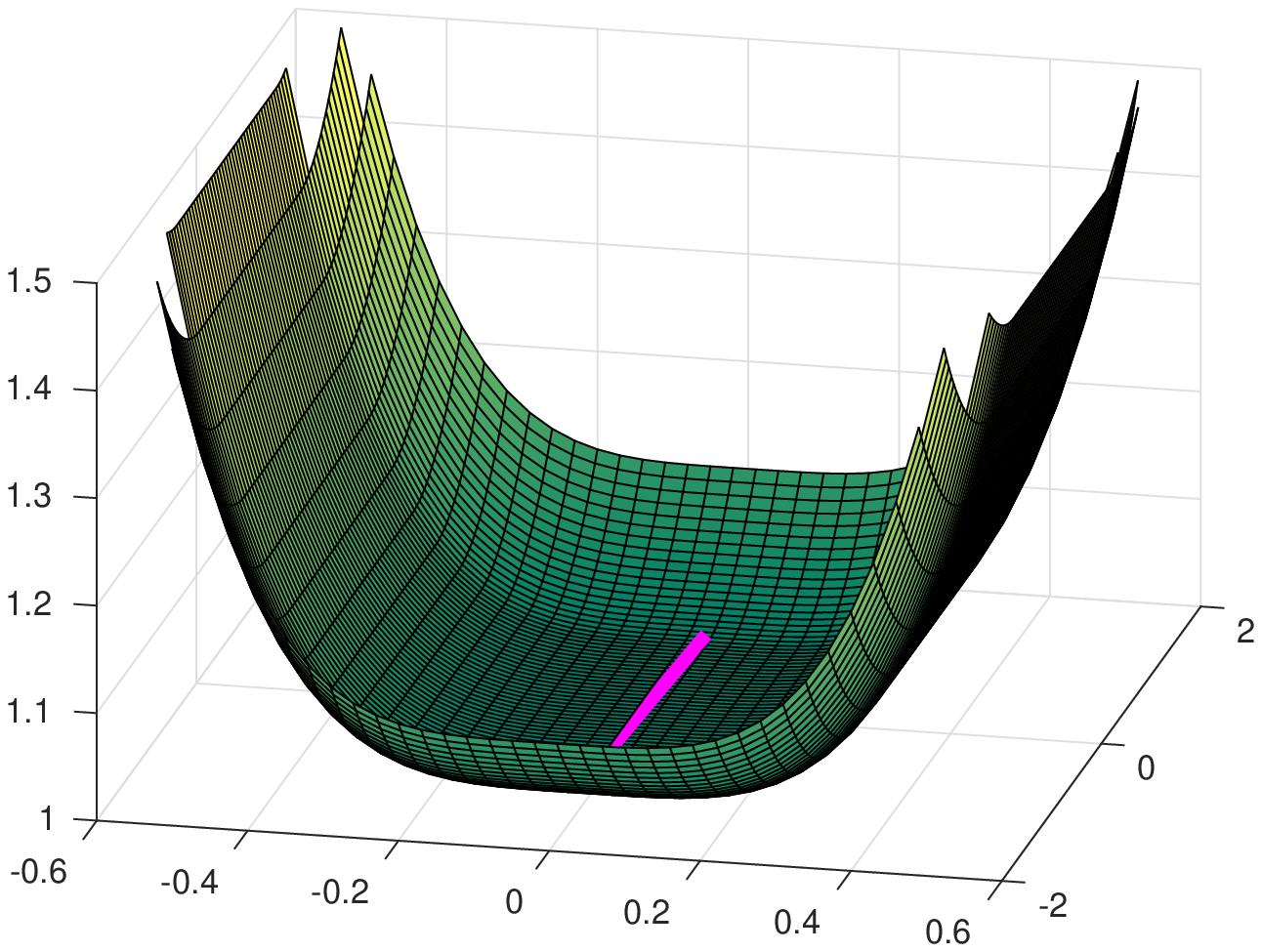}
\caption{Left: A non-attracting region of local minima given by $R=\{(x,y)\ |\ x=0, y\in(-1,1)\}$ illustrated by the function $f(x,y)=x^2(1-y^2)$. Right: An attracting region of local minima at the same region $R$ for comparison. (These examples do not exactly appear in neural networks considered in this paper, but are of similar nature.)}
\label{fig:minima}
\end{center}
\end{figure}

Attracting regions of local minima $R$ are called attracting, as decreasing paths starting in a neighborhood of $R$ eventually end up in $R$. Our notion differs from the one of \citet{Sprinkhuizen} by considering non-increasing paths instead of strictly increasing ones \citep[see also][]{Hamey}. Despite its non-attractive nature, a non-attracting region $R$ of local minima may be harmful for a gradient descent approach. A path of greatest descent can end in a local minimum on $R$. However, no point $z$ on $R$ needs to have a neighborhood of attraction in the sense that following the path of greatest descent from a point in a neighborhood of $z$ will lead back to $z$. (The path can lead to a different local minimum on $R$ close by or reach points with strictly smaller values than $c$.) A rough illustration of a non-attracting region of local minima is depicted in \figurename~\ref{fig:minima}.\footnote{While one might be tempted to term regions of local minima ``generalized saddle points'', we note that, under the usual mathematical definition, they do consists of a set of local minima.} 
Such non-attracting regions of local minima are considered for neural networks with one hidden layer by \citet{FukumizuAmari} and \citet{Wei} under the name of {\it singularities}. Their regions of local minima are characterized by singularities in the parameter space 
leading to a loss value strictly larger than the global loss. The dynamics around such a region are investigated by \citet{Wei}. 

Non-attracting regions of local minima do not only exist for shallow two-layer neural networks, but also for deep and arbitrary wide networks. {A construction of such regions is shown in Section~\ref{sct:constructionDeep}, proving the following result.}

\begin{theorem}\label{thm:existenceInWideAndDeep}
There exist deep and extremely wide fully-connected neural networks with sigmoid activation function such that the squared loss function of a finite data set has a non-attracting region of local minima (at finite parameter values).
\end{theorem}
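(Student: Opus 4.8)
\emph{Proof sketch.}\ The plan is to transport the neuron‑splitting construction of suboptimal local minima for one‑hidden‑layer networks due to \citet{FukumizuAmari} into a deep, extremely wide architecture, and then to read off non‑attraction from the sign of the curvature transverse to the resulting family of critical points.

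I would start from a small ``seed'' network with $L\ge 3$ layers in which every hidden layer is narrow, fix the input patterns $(x_\alpha)$ pairwise distinct and generic, and choose a data set for which the seed network has a global minimum $\w_*$ of strictly positive loss $c$; this is possible precisely because a network all of whose hidden layers have fewer than $N$ neurons cannot in general interpolate $N$ points. Viewing the last two layers as a one‑hidden‑layer network on the effective inputs $z_\alpha:=(\text{layers }1,\dots,L{-}2)(x_\alpha)$, the restriction of $\w_*$ to those layers is a global minimum of that subproblem with the earlier layers frozen. Following \citet{FukumizuAmari}, I split a neuron of the last hidden layer — replace a neuron with incoming weight $w$ and outgoing weight $v\ne 0$ by two neurons with identical incoming weight $w$ and outgoing weights $\lambda v$ and $(1-\lambda)v$: the resulting parameter setting of the enlarged network computes exactly the same function as $\w_*$, hence has loss $c$, and is a critical point of the enlarged loss, for every $\lambda$. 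Iterating until the last hidden layer has $n_{L-1}\ge N$ neurons produces an extremely wide deep network, and the critical points so obtained form a connected, finitely parametrized family $R_0$ (parametrized by the split ratios), every member of which computes the seed function and is therefore \emph{suboptimal}, since an extremely wide network has global loss $0$.

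The delicate step — the ``careful computation'' — is that a point of $R_0$ minimizes the last‑two‑layer subproblem with the earlier layers held fixed, but this alone does not make it a local minimum of the \emph{full} network: one must exclude descent directions that move the weights of layers $1,\dots,L{-}2$. I would compute the Hessian of $\Loss$ at a point of $R_0$ block by block; by criticality and the product structure of $f$, the blocks coupling the earlier layers to the rest factor through the residual vector $(f(x_\alpha)-y_\alpha)_\alpha$ and through $\sigma'$ at the split neuron, and the curvature that remains along the earlier‑layer directions is governed by a sign condition on a second‑derivative quantity of the seed loss — the natural analogue of the condition of \citet{FukumizuAmari} — while the null directions of the Hessian are exactly the split ratios parametrizing $R_0$. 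It then remains to exhibit one concrete choice of $N$, data set, deep‑and‑wide architecture, and finite seed parameters $\w_*$ for which the sign condition holds; here I would use the analyticity of $\sigma$ together with a genericity/perturbation argument for the effective inputs $z_\alpha$. I expect this Hessian analysis — in particular controlling the cross‑terms between the split layer and the preceding layers — to be the main obstacle; it is precisely the point at which, for splits carried out in still deeper layers, the construction degrades from local minima to saddle points, which is why one can only \emph{characterize} when it succeeds.

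It remains to check non‑attraction. Along $R_0$ the loss is constant and equal to $c>0$, while, as in the shallow analyses of \citet{FukumizuAmari} and \citet{Wei}, the Hessian eigenvalue transverse to the split direction varies continuously with the split ratio and changes sign, so that $R_0$ decomposes into a maximal connected set $R$ of local minima and an adjacent portion of saddle points carrying a direction of strictly negative curvature. Since $R$ is not open, every neighborhood $U$ of $R$ contains points of this adjacent saddle portion; starting from such a point, where $\Loss=c$, and following the negative‑curvature direction yields a continuous path that is non‑increasing in $\Loss$ and ends at loss $<c$. By Definition~\ref{def:nonattractRegion}, $R$ is a non‑attracting region of local minima at finite parameter values, which proves the theorem.
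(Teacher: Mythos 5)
Your plan is essentially sound and tracks the paper's strategy — seed, neuron-split via $\gamma_\lambda$, check a curvature sign, widen by iteration, read off non-attraction from $\lambda$ crossing out of $(0,1)$ — but you take a genuinely different, and in fact simpler, route through one of the two hidden layers. The paper constructs data from a $2$---$n_1$---$n_2$---$1$ teacher, finds an isolated local minimum of a $2$---$1$---$1$---$1$ student, and then widens \emph{both} hidden layers: the first using Proposition~\ref{prop:constructionDeep}(ii) (which is vacuous when the next layer has a single neuron) and the second using Proposition~\ref{prop:constructionDeep}(i). Suboptimality then follows because $2$---$m_1$---$m_2$---$1$ with $m_i\ge n_i$ embeds the teacher and so has a zero-loss global minimum, and they must additionally handle the fact that after widening the first layer the $B$-matrix for the second layer collapses to a rank-one constant matrix (only positive \emph{semi}definite), which forces a dedicated argument about kernel directions. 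You, by contrast, keep every layer but the last narrow and split only the last hidden layer. This is a legitimate simplification: it requires only one $B$-condition, and — the point you elide but should state explicitly — the required vanishing of the $D$-matrix in Theorem~\ref{thm:construction} is automatic for splits in the last hidden layer by Proposition~\ref{prop:constructionDeep}(i) (granting $v^*_{\parm,r}\ne 0$). In exchange, your suboptimality step must argue that the resulting bottlenecked-then-wide architecture still has global loss zero; this needs the intermediate narrow activations $z_\alpha$ to remain pairwise distinct so the wide layer's activation vectors can be made to span $\RR^N$ (Lemma~\ref{lma:linearIndependence}), which holds generically but must be asserted.

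Two points to tighten. First, you write that ``the null directions of the Hessian are exactly the split ratios parametrizing $R_0$''; after the first split the critical point is no longer isolated (the split creates a reparametrization symmetry), so the seed's Hessian block is itself degenerate, and a one-line claim is not enough — the paper explicitly argues that every kernel direction is one along which $f$ is locally constant, so no higher-order term can produce descent. You need the analogous argument. Second, you anchor the construction at a \emph{global} minimum of the seed; nothing requires this, and an \emph{isolated local} minimum with a verified sign on $[B^r_{i,j}]_{i,j}$ (an $n_{L-2}\times n_{L-2}$ matrix, not necessarily a scalar, so definiteness is a genuine check) is both sufficient and what the argument actually consumes. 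With those points filled in, your route yields the theorem, and is arguably the cleaner of the two since it sidesteps the semidefinite $B(2)$ complication entirely. Your closing observation — that splits in non-terminal layers generically force $D\ne 0$ and hence saddle points — is exactly the message of Theorem~\ref{thm:construction} and of the paper's limitation discussion.
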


\begin{corollary}
Any attempt to show for fully connected deep neural networks that a gradient descent technique will always lead to a global minimum only based on a description of the  loss curve will fail if it doesn't take into consideration properties of the learning procedure (such as the stochasticity of stochastic gradient descent), properties of a suitable initialization technique, or assumptions on the data set.
\end{corollary}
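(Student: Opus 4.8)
The plan is to manufacture the suboptimal local minimum in a \emph{shallow} network first and then to both widen and deepen that network while keeping tight control of the loss landscape around the constructed parameter point $\w_*$. First I would fix a small data set and a one-hidden-layer network with few hidden units that admits a genuine suboptimal local minimum $\wrest$ at finite parameters (such base examples are classical; see for instance the constructions of \citet{FukumizuAmari} and the references therein), so that at $\wrest$ the loss lies strictly above the global value and the residual vector $(f(x_\alpha)-y_\alpha)_\alpha$ is nonzero. The key structural tool is the neuron-duplication mechanism of \citet{FukumizuAmari}: replacing a hidden unit with incoming weight $w$ and outgoing weight $v$ by $k$ copies, all with incoming weight $w$ and outgoing weights $\lambda_1 v,\dots,\lambda_k v$ subject to $\sum_j \lambda_j = 1$, produces a $(k-1)$-dimensional simplex of critical points, all computing the \emph{same} function and hence all sharing the same suboptimal loss.

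To obtain \emph{extreme width} I would duplicate one hidden unit into enough copies that this layer has more than $N$ units; to obtain \emph{depth} I would then insert additional layers whose placement in the architecture and whose weights at $\w_*$ are chosen so that the overall network computes the same function as the shallow example on the finitely many inputs $x_\alpha$ — for instance by letting the inserted maps act near-affinely on the finite set of activations that actually occur and absorbing the affine correction downstream. By construction $f$ at $\w_*$ is the shallow example's function, so $\Loss(\w_*) > 0 = \Loss_0$, the global value being zero because the wide layer makes the network an interpolator of the $N$ distinct patterns.

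The technical heart, and the step I expect to be the main obstacle, is to verify that $\w_*$ is genuinely a local minimum of the \emph{deep} loss rather than a saddle. I would decompose perturbations into the flat directions of the duplication simplex together with the other gauge directions created by the deepening, along which $\Loss$ is locally constant, and their complement, on which a careful second-order expansion of $\Loss$ must come out positive semidefinite, with directions that are flat to second order settled by higher-order terms. This computation is delicate precisely because the cross-terms between an outgoing weight and the incoming weights of the same unit tend to make the Hessian indefinite: one has to use the first-order criticality conditions of the still-active sibling unit to cancel exactly those cross-terms, and an analogous cancellation has to be engineered through the inserted layers, which also constrains how they may be placed. It is here that the construction can instead produce a saddle point, so the data set, the duplication pattern and the inserted layers must be chosen so that a positivity condition in the spirit of \citet{FukumizuAmari} together with its deep analogue holds; exhibiting such a choice at finite parameter values is the crux.

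It then remains to show that the connected component $R$ of the set of local minima of value $\Loss(\w_*)$ that contains $\w_*$ — which includes at least the relative interior of the duplication simplex — is \emph{non-attracting} in the sense of Definition~\ref{def:nonattractRegion}. Given any neighbourhood $U$ of $R$, I would look at the point where the simplex degenerates so that one duplicated copy carries a vanishing outgoing weight while still carrying its inherited incoming weight $w$; a small perturbation of it lies in $U$. From there I build an explicit continuous non-increasing path: first move inside $R$ to that degenerate point (loss constant); then, with the dormant copy's outgoing weight equal to zero, retune its incoming weights to some $w'$ (loss still constant) for which $\sum_\alpha (f(x_\alpha) - y_\alpha)\, \sigma(w' \cdot x_\alpha + b') \neq 0$, which is possible because the residual vector is nonzero and the $x_\alpha$ are pairwise distinct; finally turn the outgoing weight on in the sign that decreases $\Loss$. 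Concatenating these pieces yields a non-increasing path ending strictly below $\Loss(\w_*)$, so $R$ is non-attracting, which proves the theorem. The corollary is then immediate, since a greatest-descent path may terminate at a suboptimal local minimum on $R$, and only the stochasticity of the optimizer, a suitable initialization, or a restriction on the data set can preclude that.
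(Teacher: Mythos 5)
The corollary itself, in both your argument and the paper's, is an immediate consequence of Theorem~\ref{thm:existenceInWideAndDeep}: once a non-attracting region of suboptimal local minima exists in a fully connected deep network, a descent trajectory can terminate there, so no convergence-to-global-optimum claim can rest on the loss surface alone. Your final sentence captures this correctly. The difficulty is that nearly all of your proposal is an attempted reconstruction of Theorem~\ref{thm:existenceInWideAndDeep}, and there your route diverges from the paper's in a way that leaves a genuine gap.

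The paper proves that theorem by starting \emph{deep but narrow}: it builds the data set from a network of shape $2$---$n_1$---$n_2$---$1$, trains a network of shape $2$---$1$---$1$---$1$ to an isolated suboptimal local minimum, and then widens each hidden layer in turn by iterated application of $\gamma_\lambda^r$. The reason this works is Theorem~\ref{thm:construction}: in a deep network, the embedded critical point is a local minimum only if, in addition to definiteness of the $B$-matrix from Fukumizu--Amari, the extra matrix $[D_i^{r,s}]_{i,s}$ of Equation~\ref{eq:D} vanishes. Proposition~\ref{prop:constructionDeep} shows that $D=0$ is automatic when the layer being widened is the last hidden layer, or when the next layer has a single neuron --- which is exactly the situation that arises when one starts from a deep network with one neuron per hidden layer and widens layer by layer, always using the same neuron $r$.

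Your proposal starts \emph{shallow} and tries to deepen by inserting sigmoid layers that ``act near-affinely on the finite set of activations that actually occur,'' with an affine correction ``absorbed downstream.'' This step does not go through. A sigmoid layer cannot be made exactly affine, and any approximation already changes the residuals, so the criticality inherited from the shallow example is destroyed: the gradient of $\Loss$ with respect to the inserted layer's parameters, and with respect to the original layers whose backpropagated errors now pass through a new nonlinearity, need not vanish. Even granting that one could patch up criticality, the deep Hessian analysis produces a \emph{new} obstruction --- the $D$-matrix of Theorem~\ref{thm:construction} --- that has no shallow counterpart. You gesture at ``an analogous cancellation... through the inserted layers'' but give no mechanism; the paper's mechanism is precisely the choice of a narrow deep base so that Proposition~\ref{prop:constructionDeep} makes the cancellation free. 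Without this, the constructed critical point may well be a saddle, which is the very distinction the paper is at pains to draw, and indeed emphasizes becomes the generic outcome when the conditions fail.

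Your escape argument for non-attractingness --- slide to the $\lambda\in\{0,1\}$ corner of the duplication simplex, retune the dormant unit's incoming weights with its outgoing weight held at zero, then switch the outgoing weight on in the descending direction --- is a legitimate alternative to the paper's escape, which pushes $\lambda$ continuously outside $(0,1)$ and uses the resulting negative eigenvalue of the Hessian in Lemma~\ref{lma:hessian}. Either escape would do once the local-minimum claim is established, but neither can be invoked until the deepening step is made rigorous.
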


On the positive side, we point out that a stochastic method such as stochastic gradient descent has a good chance to escape a non-attracting region of local minima due to noise. With infinite time at hand and sufficient exploration, the region can be escaped from with high probability \citep[see][for a more detailed discussion]{Wei}. In Section~\ref{sct:characterization} we will further characterize when the method used to construct examples of regions of non-attracting local minima is applicable. This characterization limits us to the construction of extremely degenerate examples. We argue why assuring the necessary assumptions for the construction becomes difficult for wider and deeper networks and why it is natural to expect a lower suboptimal loss (where the suboptimal minima are less ``bad'') the less degenerate the constructed minima are and the more parameters a neural network possesses.

A different type of non-attracting regions of local minima is considered for the 2---3---1 XOR network by \citet{Sprinkhuizen}, where the region of local minima (of higher loss than the global loss) resides at points in parameter space with some coordinates being infinite. {For this, we consider the extended parameter space, where parameters can take on values $\pm \infty$. The standard topology on this space considers open neighborhoods of $\infty$ defined by sets $\{v\ |\ v>a\}$ for some $a$. A \textdef{local minimum at infinity} then satisfies, by definition, that for sufficiently large values of a parameter, the loss is higher at finite values than at the limit as the parameter tends to infinity.} In particular, a gradient descent approach may lead to diverging parameters in that case. However, a different non-increasing path down to the global minimum always exists for the network. It can be shown that such {generalized} local minima at infinity also exist for deep neural networks. (Our proof uses similar ideas as the proof for the 2---3---1- XOR network by \citet[Section III]{Sprinkhuizen}, but needs additional arguments due to a more general setting. The proof can be found in Appendix~\ref{app:infinity}.)

\begin{restatable}{theorem}{infinity}
   \label{thm:infinity}
Let $\Loss$ denote the squared loss of a fully connected regression neural network with sigmoid activation functions, having at least one hidden layer and each hidden layer containing at least two neurons. Then, for almost every finite data set, the loss function $\Loss$ possesses a {generalized local minimum in the extended parameter space with some coordinates being infinite}. The {generalized} local minimum is suboptimal whenever data set and neural network are such that a constant function is not an optimal solution. 
\end{restatable}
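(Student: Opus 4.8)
The plan is to exhibit, as the claimed generalized local minimum, the configuration at which the network computes the \emph{best constant predictor} $c_0:=\tfrac{1}{N}\sum_{\alpha}y_\alpha$, obtained by driving an entire hidden layer to saturation ``at infinity''. This is in the spirit of the diverging-parameter construction used by \citet[Section III]{Sprinkhuizen} for the $2$-$3$-$1$ XOR network, but organised so that the limiting loss is transparent. Concretely, I would take the last hidden layer (index $L-1$, which has $n_{L-1}\ge 2$ neurons), send all of its biases $\w^{L-1}_{0,j}$ to $+\infty$, and note that, since $\sigma$ is the sigmoid, each of its neurons then outputs $1$, so the network function degenerates to the constant $C:=\sum_j\w^L_j+\w^L_0$ and the limiting loss is $\tilde\Loss=\sum_\alpha(C-y_\alpha)^2$, whose minimum over $C$ is attained at $C=c_0$ with value $\Loss_\infty:=\sum_\alpha(c_0-y_\alpha)^2$. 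Setting $C=c_0$, this value exceeds $\Loss_0$ exactly when a constant function is not optimal, which is the suboptimality claim. The reason for going to infinity rather than using a finite critical point (e.g.\ $\w^L=0$): at such a finite point a strict descent direction always exists, whereas along the diverging path the corresponding gradient component decays to zero.

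Next I would set up the relevant perturbation. Write $\w_{\mathrm{rest}}$ for all parameters other than the biases $\w^{L-1}_{0,j}$, and let $p_j(x_\alpha):=\w^{L-1}_{j,\cdot}\cdot h(x_\alpha)$ where $h(x_\alpha)\in\RR^{n_{L-2}}$ is the layer-$(L-2)$ output at $x_\alpha$ (for $L=2$ read $h(x_\alpha)=x_\alpha$). I would first choose the parameters of layers $1,\dots,L-2$ generically so that the $h(x_\alpha)$ are pairwise distinct (possible since the $x_\alpha$ are distinct and $\sigma$ is injective), and then choose each $\w^{L-1}_{j,\cdot}$ so that the ``residual--exponential covariance'' $s_j:=\sum_\alpha(c_0-y_\alpha)e^{-p_j(x_\alpha)}$ is nonzero; this is achievable for almost every data set, because the functions $w\mapsto e^{-w\cdot h(x_\alpha)}$ are linearly independent when the $h(x_\alpha)$ are distinct, so $s_j\equiv 0$ in $\w^{L-1}_{j,\cdot}$ would force $y$ to be constant. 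Finally I would set $\w^L_j:=-\mathrm{sign}(s_j)$ and $\w^L_0:=c_0-\sum_j\w^L_j$, and let $\w^\ast$ be this configuration with $\w^{L-1}_{0,j}=+\infty$.

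To verify that $\w^\ast$ is a generalized local minimum I would check, on a neighbourhood of $\w^\ast$ in the extended parameter space (finite coordinates near their values, all biases $>a$ for $a$ large, or $+\infty$), that $\Loss\ge\Loss(\w^\ast)=\Loss_\infty$. Two facts give this. First, the limiting loss $\tilde\Loss(\w_{\mathrm{rest}})=\sum_\alpha(C(\w_{\mathrm{rest}})-y_\alpha)^2$ depends on the finite parameters only through $C$, hence $\tilde\Loss(\w_{\mathrm{rest}})\ge\Loss_\infty$ for \emph{all} $\w_{\mathrm{rest}}$. Second, for large finite biases $B_j$, expanding in $\eta_{j\alpha}:=1-\sigma(p_j(x_\alpha)+B_j)=e^{-B_j}e^{-p_j(x_\alpha)}\bigl(1+O(e^{-B_j})\bigr)$ gives
\[
\Loss=\tilde\Loss(\w_{\mathrm{rest}})-2\sum_j\w^L_j\sum_\alpha(C-y_\alpha)\eta_{j\alpha}+\sum_\alpha\Bigl(\sum_j\w^L_j\eta_{j\alpha}\Bigr)^2,
\]
where the middle term equals $2\sum_j e^{-B_j}|\tilde s_j|\,\bigl(1+O(e^{-B_j})\bigr)$ with $\tilde s_j:=\sum_\alpha(C-y_\alpha)e^{-p_j(x_\alpha)}$, which stays bounded away from $0$ in modulus, for each $j$, on a small neighbourhood of $\w^\ast_{\mathrm{rest}}$ by continuity. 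Hence, provided $a$ is large enough, the middle term dominates the $O(e^{-2\min_j B_j})$ last term, so $\Loss>\tilde\Loss(\w_{\mathrm{rest}})\ge\Loss_\infty$. This is exactly the defining property of a generalized local minimum at infinity, and suboptimality follows as above.

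The step I expect to be the main obstacle is this last one: arranging the sign pattern so that the first-order correction is \emph{positive}, and making this uniform over a full extended-space neighbourhood --- simultaneously for all sufficiently large finite biases and all small perturbations of $\w_{\mathrm{rest}}$ --- while dominating both the quadratic remainder and the subexponential corrections in the sigmoid tail. This needs a uniform lower bound $|\tilde s_j(\w_{\mathrm{rest}})|\ge\delta>0$ for all $j$ and all $\w_{\mathrm{rest}}$ in a compact neighbourhood, together with a uniform remainder estimate, both taken at the scale $e^{-\min_j B_j}$, which is what makes the bookkeeping close. Secondary care is needed to make the data genericity precise --- which finite family of analytic conditions must be avoided for the $s_j\ne0$ to be realizable and the $h(x_\alpha)$ separated --- and to record where the hypothesis that every hidden layer has at least two neurons enters (it guarantees enough room to separate the pre-activations $p_j(x_\alpha)$ and realize the required signs, and avoids the single-neuron bottleneck case, which behaves differently). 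Since only the exponential decay rate of $1-\sigma$ at $+\infty$ is used, the argument is otherwise insensitive to the precise activation, and the analogous rate would handle a general $\sigma\in\mathcal A$.
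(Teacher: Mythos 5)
Your proof is correct and takes the same strategic route as the paper: drive the biases of the last hidden layer to $+\infty$ so the network degenerates to a constant predictor, tune the finite parameters so the constant is $c_0$, Taylor-expand the sigmoid tail in $e^{-B_j}$, and arrange the first-order correction to be strictly positive and to dominate the quadratic remainder. The genuinely different step, which is a real simplification, is how you arrange the signs. The paper does not exploit the output bias $\w_0^L$: it imposes $\sum_i v_{\parm,i}=c$, so there is an obstruction when all residual--exponential covariances share the sign of $c$, and it resolves this via Lemma~B.6 by constructing two incoming weight vectors $u^{>},u^{<}$ yielding covariances of opposite sign and placing them on two distinct hidden neurons --- which is where the ``each hidden layer has at least two neurons'' hypothesis is consumed. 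You instead absorb the constraint into $\w_0^L$ (which the paper's own network definition includes), so each $\w^L_j$ can be set to $-\mathrm{sign}(s_j)$ independently and no obstruction arises. This sidesteps Lemma~B.6 entirely and, strictly speaking, makes the two-neurons-per-layer hypothesis unnecessary for your argument; what you actually use is only that $s_j\not\equiv 0$, which you correctly reduce (via linear independence of the maps $w\mapsto e^{-w\cdot h(x_\alpha)}$ for distinct $h(x_\alpha)$, obtained by a generic choice of earlier layers) to $y$ being non-constant --- a marginally cleaner data condition than the paper's requirement that $\sum_\alpha(c-y_\alpha)\act{L-2,r}{x_\alpha}\neq 0$ for some $r$. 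Your uniformity bookkeeping (compact neighbourhood, lower bound $|\tilde s_j|\ge\delta$, remainder $O(e^{-2\min_j B_j})$ vs.\ leading term $\Omega(e^{-\min_j B_j})$, and the mixed finite/infinite-bias cases in the extended-space neighbourhood) is the step that genuinely needs care, as you flag, and it closes as you describe.
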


\subsection{Non-increasing Path to a Global Minimum}

By definition, all points belonging to a non-attracting region of local minima $R$ are local minima with the same loss value. Further, being non-attractive means that every neighborhood of $R$ contains points from where a non-increasing 
path to a value less than the value of the region exists. 
The question therefore arises under what conditions there is such a non-increasing path all the way down to a global minimum from almost everywhere in parameter space. The measure-theoretic term \textdef{almost everywhere} here refers to the Lebesgue measure, i.e., a condition holds almost everywhere when it holds for all points except for a set of Lebesgue measure zero. If the last hidden layer is the extremely wide layer having more neurons than input patterns (for example consider an extremely wide two-layer neural network), then indeed it holds true that non-increasing paths to the global minimum exist from almost everywhere in parameter space by the results of \citet{NguyenHein} (and \citet{GoriTesi,Poston}). We show the same conclusion to hold for extremely wide deep  neural networks, whenever the sequence of hidden dimensions is non-increasing, $n_{l+1}\leq n_l$, for all layers following the wide layer.

\begin{restatable}{theorem}{pathToGlobal}\label{thm:pathToGlobal}
Consider a fully connected regression neural network with activation function in the class $\mathcal{A}$ (as defined in the beginning of Section~\ref{sct:problemDef}) equipped with the squared loss function for a finite data set. Assume that {a} hidden layer contains more neurons than the number of input patterns {and the sequence of dimensions of all subsequent layers is non-increasing}. Then, for each set of parameters $\w$ and all $\epsilon>0$, there is $\w'$ such that $||\w-\w'||<\epsilon$ and such that a path, non-increasing in loss from $\w'$ to a global minimum (where $f(x_\alpha)=y_\alpha$ for each $\alpha$), exists.
\end{restatable}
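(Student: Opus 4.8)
\emph{Proof idea.} The plan is to prove the statement in two parts, following the two-layer argument of \citet{Poston} (see also \citet{GoriTesi,NguyenHein}). Let $k$ be the index of the hidden layer with $n_k>N$, write $\phi(x)\in\RR^{n_{k-1}}$ for the post-activation at layer $k-1$ and $h(x)=\sigma(\w^k\phi(x)+\w^k_0)\in\RR^{n_k}$ for the post-activation at the wide layer. Part (i): find an arbitrarily small perturbation $\w\mapsto\w'$ after which $h(x_1),\dots,h(x_N)$ are linearly independent in $\RR^{n_k}$. Part (ii): with the weights of layers $1,\dots,k$ frozen at $\w'$, construct a path in the remaining weights $(\w^{k+1},\dots,\w^L)$ that is non-increasing in $\Loss$ and ends with $f(x_\alpha)=y_\alpha$ for all $\alpha$. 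Note the segment from $\w$ to $\w'$ need not be non-increasing, since the statement only asks for a non-increasing path issuing from $\w'$.

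For (i) I would first perturb $\w^1,\dots,\w^{k-1}$ so that $\phi(x_1),\dots,\phi(x_N)$ become pairwise distinct; since the $x_\alpha$ are pairwise distinct and $\sigma$ is injective and real-analytic, the parameters violating this form a proper analytic set and are nowhere dense. Given distinct $\phi(x_\alpha)$, the Gram determinant of $[\,h(x_1)\ \cdots\ h(x_N)\,]$ is a real-analytic function of $\w^k$ that is not identically zero --- exhibiting a nonzero value is the familiar consequence of $\sigma$ being analytic and bounded, hence non-polynomial, so the maps $\phi\mapsto\sigma(a^{\!\top}\phi+b)$ satisfy no nontrivial linear relation on $\{\phi(x_\alpha)\}$, exactly as in \citet{NguyenHein}. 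Hence the determinant is nonzero on a dense set near the current $\w^k$, and (using $n_k\ge N$) we obtain $\w'$ with $||\w-\w'||<\epsilon$ and linearly independent features.

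For (ii) I would freeze layers $1,\dots,k$, so that the $h(x_\alpha)$ are fixed and linearly independent, and regard $\Loss$ as a function of $(\w^{k+1},\dots,\w^L)$. The base case is $k=L-1$: then $\Loss$ is a convex quadratic in $(\w^L,\w^L_0)$, its minimum value is $0$ (linear independence of the $h(x_\alpha)$ together with $n_k\ge N$ makes $\w^L\mapsto(\langle\w^L,h(x_\alpha)\rangle)_\alpha$ surjective onto $\RR^N$), and the segment from the current $(\w^L,\w^L_0)$ to a minimizer is a non-increasing path to zero loss. For $k<L-1$ I would reduce to this base case: build a path that is non-increasing in the best-last-layer loss $G(\w^{k+1},\dots,\w^{L-1})=\min_{\w^L,\w^L_0}\Loss$, and that is liftable to a path non-increasing in $\Loss$ by moving $(\w^L,\w^L_0)$ continuously along a minimizer, and that ends at a configuration in which the post-activations of layer $L-1$ together with the constant vector $\mathbf 1$ span the target $(y_\alpha)_\alpha$ (so $G=0$ there and the base-case segment in $(\w^L,\w^L_0)$ finishes). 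Such a terminal configuration exists --- e.g. the ``collapsed'' one in which a single neuron per layer $l\in\{k+1,\dots,L-1\}$ carries, as its pre-activation, the $\alpha$-vector obtained by peeling $\sigma^{-1}$ and a small affine map off the running target (the remaining neurons zeroed), the affine scalings chosen so intermediate values stay inside the image $(c,d)$ of $\sigma$; here one uses that $\w^{k+1}$ can realise any prescribed layer-$(k+1)$ pre-activation because the $h(x_\alpha)$ are linearly independent. The essential point is that such a configuration can be \emph{reached} along a non-increasing path, and this is exactly where the hypothesis $n_{k+1}\ge n_{k+2}\ge\dots\ge n_L$ is used: because no layer after the wide one is a bottleneck, the maximal rank $\min(n_l,N)$ of the feature matrix available at the wide layer can be propagated forward layer by layer without a forced rank drop, so that no loss barrier separates a generic starting configuration from one whose layer-$(L-1)$ features represent the target.

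The routine ingredients are the genericity perturbation in (i) and the convex base case in (ii). The main obstacle I expect is the monotone reduction in (ii) when some post-wide layer has width $<N$: then that layer's post-activations cannot be made linearly independent, so the path must vary several layers simultaneously while keeping $\Loss$ non-increasing, and showing that no separating ridge appears is precisely what the no-bottleneck hypothesis $n_{k+1}\ge\dots\ge n_L$ secures; it is also where one must be careful that the last layer is re-optimised continuously along the base path so that monotonicity is preserved under the lift. (Analyticity of $\sigma$ enters only through the full-rank genericity arguments; the rest uses only that $\sigma$ is a continuous, strictly increasing bijection onto $(c,d)$.)
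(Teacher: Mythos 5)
Your Part (i) matches the paper's first step essentially verbatim: it is Lemma~\ref{lma:linearIndependence} (quoted from \citet{NguyenHein}), which additionally secures full rank of $\w^l$ for all $l>l^*+1$ — a detail you do not mention but will need.

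Your Part (ii), however, is a genuinely different strategy from the paper's, and as written it has a gap at exactly the point you flag as ``the main obstacle.'' You propose to reduce to a non-increasing path of the best-last-layer loss $G(\w^{k+1},\dots,\w^{L-1})=\min_{\w^L,\w^L_0}\Loss$ ending at a configuration whose layer-$(L-1)$ features span the target, and you assert that the no-bottleneck hypothesis prevents ``a separating ridge.'' That assertion is the whole theorem; you give no mechanism for it. The difficulty is real: when $n_{L-1}<N$, the layer-$(L-1)$ features live in an $n_{L-1}$-dimensional subspace of $\RR^N$ that must be steered to contain $\mathbf y$, and steering it by moving $\w^{k+1},\dots,\w^{L-1}$ generically passes through configurations of larger $G$ unless one compensates elsewhere. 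Your proposal never says how to compensate.

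The paper avoids this issue entirely by not reasoning about subspaces or ranks of intermediate feature matrices at all. Instead it shows (Lemmas~\ref{lma:ctsPath1}--\ref{lma:finalPath}) that \emph{any} continuous path of outputs $f_\Gamma:[0,1]\to\RR^N$ is realizable by a continuous path in parameter space, and then simply picks the straight line $f_\Gamma(t)=\mathbf z+t(\mathbf y-\mathbf z)$, whose loss $(1-t)\|\mathbf y-\mathbf z\|_2^2$ decreases automatically. The inductive step (Lemma~\ref{lma:ctsPathInductive}) is the concrete mechanism your argument is missing: given a desired path $\nv^{l+1}_\Gamma(t)$ of pre-activations at layer $l+1$, one sets $\w_\Gamma^{l+1}(t)=\lambda(t)\w^{l+1}$ for a large scalar $\lambda(t)>0$, adjusts the bias, and rescales the layer-$l$ activations by $1/\lambda(t)$ while absorbing the desired change through the invertible $n_{l+1}\times n_{l+1}$ submatrix $W$ of $\w^{l+1}$; the factor $\lambda(t)$ shrinks the required layer-$l$ activations into $\text{Im}(\sigma)^{n_l\times N}$, keeping the path feasible. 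This is where the full-rank condition from Lemma~\ref{lma:linearIndependence} and the hypothesis $n_{l+1}\le n_l$ are both used — not to avoid ``ridges'' but to guarantee the invertible submatrix $W$ exists. Because the induction only ever tracks a single curve of activations (not a subspace), there is no rank obstruction when some post-wide layer has width below $N$. Without an analogue of this rescaling trick, your ``no loss barrier'' claim does not follow.
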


\begin{corollary}
Consider an extremely wide, fully connected regression neural network with {non-increasing hidden dimensions following the wide layer,} activation function in the class $\mathcal{A}$ and trained to minimize the squared loss over a finite data set. Then all suboptimal local minima are contained in a non-attracting region of local minima.
\end{corollary}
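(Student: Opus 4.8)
The corollary is essentially an immediate consequence of Theorem~\ref{thm:pathToGlobal}, obtained by unwinding the definition of a non-attracting region. First I would record that for these networks the global loss is $\Loss_0=0$: since a hidden layer is extremely wide, Theorem~\ref{thm:pathToGlobal} already produces parameters at which $f(x_\alpha)=y_\alpha$ for every $\alpha$, so the infimum of $\Loss$ (which is a sum of squares, hence $\ge 0$) is attained and equals $0$. Consequently, if $\w_*$ is a suboptimal local minimum then $c:=\Loss(\w_*)>0$.

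Next I would exhibit a region $R$ of local minima containing $\w_*$. Let $R$ be the union of all connected subsets of $\RR^M$ that contain $\w_*$ and all of whose points are local minima of $\Loss$ with value $c$. Being a union of connected sets through the common point $\w_*$, $R$ is connected; every point of $R$ is, by construction, a local minimum of value $c$; and $R$ is maximal with these two properties (the singleton $\{\w_*\}$ already qualifies, so $R$ is nonempty). Hence $R$ is a region of local minima in the sense of Definition~\ref{def:nonattractRegion}, and $\w_*\in R$. To finish I would check that $R$ is non-attracting. Let $U$ be an arbitrary neighborhood of $R$. Since $\w_*\in R$, $U$ is in particular a neighborhood of $\w_*$, so there is $\epsilon>0$ with $\{\w:\|\w-\w_*\|<\epsilon\}\subseteq U$. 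Applying Theorem~\ref{thm:pathToGlobal} to $\w_*$ and this $\epsilon$ (its hypotheses hold: a hidden layer is extremely wide and the subsequent hidden dimensions are non-increasing) yields $\w'$ with $\|\w_*-\w'\|<\epsilon$, hence $\w'\in U$, together with a continuous path $\w_\Gamma$, non-increasing in $\Loss$, from $\w_\Gamma(0)=\w'$ to a global minimum $\w_\Gamma(1)$; then $\Loss(\w_\Gamma(1))=\Loss_0=0<c$. Thus every neighborhood $U$ of $R$ contains a point from which a non-increasing path ends strictly below $c$, which is precisely the defining property of a non-attracting region, and $\w_*$ lies in it. As $\w_*$ was an arbitrary suboptimal local minimum, the corollary follows.

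There is no serious obstacle here; the only points needing care are bookkeeping ones. One must confirm that $\Loss_0=0$, so that ``reaching a global minimum'' really means ``reaching a value strictly below the value $c$ of the region''; one must check that the maximal connected region $R$ through $\w_*$ is well defined (it is, because a union of connected sets with a common point is connected); and one must make sure the ``extremely wide'' hypothesis of the corollary is exactly what Theorem~\ref{thm:pathToGlobal} requires. If desired, one could additionally note that this same $R$ cannot be an attracting region: the path $\w_\Gamma$ already decreases the loss maximally (the loss is bounded below by $0$, which it attains), yet ends at loss $0<c$ and so not in $R$, contradicting the defining property of an attracting region.
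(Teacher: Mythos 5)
Your proof is correct and is exactly the unwinding of Definition~\ref{def:nonattractRegion} via Theorem~\ref{thm:pathToGlobal} that the paper intends; it gives no explicit argument for this corollary, treating it as an immediate consequence. The bookkeeping points you flag (that $\Loss_0=0$ so the global minimum value is strictly below $c$, that the maximal connected region $R$ through $\w_*$ is well defined, and that the hypotheses of Theorem~\ref{thm:pathToGlobal} match those of the corollary) are indeed the ones worth checking, and you handle them correctly.
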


The rest of the paper contains the arguments leading to the given results and an experimental construction of local minima in a deep and wide network. 

\section{Notation}

We fix additional notation aside the problem definition from Section~\ref{sct:problemDef}. For input $x_\alpha$ we denote the pattern vector of values at all neurons at layer $l$ before activation by $\n{l}{x_\alpha}$ and after activation by $\act{l}{x_\alpha}$.

In general, we will denote column vectors of size $n$ with coefficients $z_i$ by $[z_i]_{1\leq i\leq n}$ or simply $[z_i]_i$ and matrices with entries $a_{i,j}$ at position $(i,j)$ by $[a_{i,j}]_{i,j}$.  The neuron value pattern $\n{l}{x}$ is then a vector of size $n_l$ denoted by $\n{l}{x}=[\n{l,k}{x}]_{1\leq k \leq n_l}$, and the activation pattern $\act{l}{x}=[\act{l,k}{x}]_{1\leq k \leq n_l}$. 

For a fixed data point $x_\alpha$, we will further denote the squared loss on it by $\loss_\alpha$. The loss $\loss_\alpha$ can be considered as a function of the neuron values $\n{l,k}{x_\alpha}$, so that we consider partial derivatives of the loss $\loss_\alpha$ with infinitesimal changes of neuron values at $\n{l,k}{x_\alpha}$. 
For convenience of the reader, a tabular summary of all notation is provided in Appendix~\ref{app:notation}--3.

\section{Construction of Local Minima}\label{sct:construction}

We recall the construction of suboptimal local minima given by \citet{FukumizuAmari} and extend it to deep networks. Once we have fixed a layer $l$, we denote the parameters of the incoming linear transformation by $[u_{p,i}]_{p,i}$, so that $u_{p,i}$ denotes the contribution of neuron $i$ in layer $l-1$ to neuron $p$ in layer $l$, and the parameters of the outgoing linear transformation by $[v_{s,q}]$, where $v_{s,q}$ denotes the contribution of neuron $q$ in layer $l$ to neuron $s$ in layer $l+1$. For weights of the output layer (into a single neuron), we write $w_{\parm,j}$ instead of $w_{1,j}$. For the construction of \textdef{critical points} (i.e., points with gradient zero), we add one additional neuron $\n{l,-1}{x}$ to a hidden layer $l$. (Negative indices are unused for neurons, which allows us to add a neuron with this index.) 
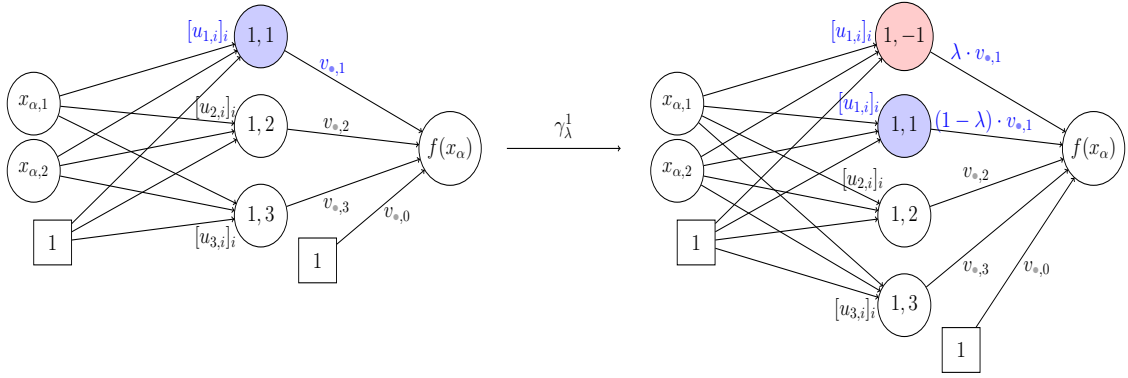
\begin{figure}
\centering
\resizebox{15cm}{5cm}{
\begin{tikzpicture}[state/.style={circle, draw, minimum size=1.4cm}, state2/.style={rectangle, draw, minimum size=1.0cm}]

\node[state, circle,draw] (x1) at (0,3.5) {\Large$x_{\alpha,1}$};
\node[state,circle,draw] (x2) at (0,2) {\Large$x_{\alpha,2}$};
\node[state2,rectangle,draw] (x3) at (0.5,0.4) {\Large $1$};

\node[state,circle,draw,fill=blue!20!] (a32) at (6,5) {\Large$1,1$};
\node[state,circle,draw] (a33) at (6,3) {\Large$1,2$};
\node[state,circle,draw] (a34) at (6,1) {\Large$1,3$};
\node[state2,rectangle] (a35) at (7.5,0) {\Large$1$};

\node[state,circle,draw] (z41) at (11.0,2.5) {\Large$f(x_\alpha)$};

\node[circle](u1) at (4.6, 5.1){\Large \textcolor{blue}{$[u_{1,i}]_i$}};
\node[circle](u2) at (4.8, 3.4){\Large$[u_{2,i}]_i$};
\node[circle](u3) at (4.8, 0.5){\Large$[u_{3,i}]_i$};

\node[circle](v1) at (7.9, 4.3){\Large \textcolor{blue}{\Large $v_{\parmBlue,1}$}};
\node[circle](v2) at (8.0, 3.0){\Large$v_{\parm,2}$};
\node[circle](v3) at (8.0, 1.2){\Large$v_{\parm,3}$};
\node[circle](v4) at (9.6, 1.0){\Large$v_{\parm,0}$};

\draw[->] (x1) -- (a32);
\draw[->] (x1) -- (a33);
\draw[->] (x1) -- (a34);
\draw[->] (x2) -- (a32);
\draw[->] (x2) -- (a33);
\draw[->] (x2) -- (a34);
\draw[->] (x3) -- (a32);
\draw[->] (x3) -- (a33);
\draw[->] (x3) -- (a34);

\draw[->]  (a32) -- (z41);
\draw[->]  (a33) -- (z41);
\draw[->]  (a34) -- (z41);
\draw[->]  (a35) -- (z41);

\node at (14.0, 3.0){\Large$\gamma_\lambda^1$};
\draw[->] (12.5,2.5) -- (15.5,2.5);

\node[state, circle,draw] (x1) at (17,3.5) {\Large$x_{\alpha,1}$};
\node[state,circle,draw] (x2) at (17,2) {\Large$x_{\alpha,2}$};
\node[state2,rectangle,draw] (x3) at (17.5,0.4) {\Large $1$};

\node[state,circle,draw,fill=red!20!] (a31) at (23,5) {\Large$1,-1$};
\node[state,circle,draw,fill=blue!20!] (a32) at (23,3) {\Large$1,1$};
\node[state,circle,draw] (a33) at (23,1) {\Large$1,2$};
\node[state,circle,draw] (a34) at (23,-1) {\Large$1,3$};
\node[state2,rectangle] (a35) at (24.5,-2) {\Large$1$};

\node[state,circle,draw] (z41) at (28.0,2.5) {\Large$f(x_\alpha)$};

\node[circle](u0) at (21.6, 5.1){\Large \textcolor{blue}{$[u_{1,i}]_i$}};
\node[circle](u1) at (21.8, 3.5){\Large \textcolor{blue}{$[u_{1,i}]_i$}};
\node[circle](u2) at (21.9, 1.8){\Large$[u_{2,i}]_i$};
\node[circle](u3) at (21.7, -1.1){\Large$[u_{3,i}]_i$};

\node[circle](v0) at (24.9, 4.6){\Large \textcolor{blue}{$\lambda\cdot v_{\parmBlue,1}$}};
\node[circle](v1) at (25.1, 3.1){\Large \textcolor{blue}{$(1-\lambda)\cdot v_{\parmBlue,1}$}};
\node[circle](v2) at (24.9, 1.9){\Large$v_{\parm,2}$};
\node[circle](v3) at (24.9, -0.2){\Large$v_{\parm,3}$};
\node[circle](v4) at (26.5, -0.2){\Large$v_{\parm,0}$};

\draw[->] (x1) -- (a31);
\draw[->] (x1) -- (a32);
\draw[->] (x1) -- (a33);
\draw[->] (x1) -- (a34);
\draw[->] (x2) -- (a31);
\draw[->] (x2) -- (a32);
\draw[->] (x2) -- (a33);
\draw[->] (x2) -- (a34);
\draw[->] (x3) -- (a31);
\draw[->] (x3) -- (a32);
\draw[->] (x3) -- (a33);
\draw[->] (x3) -- (a34);

\draw[->]  (a31) -- (z41);
\draw[->]  (a32) -- (z41);
\draw[->]  (a33) -- (z41);
\draw[->]  (a34) -- (z41);
\draw[->]  (a35) -- (z41);

\end{tikzpicture}
}
\caption{Embedding a smaller two-layer neural network into a larger one. Weights of the larger network are defined by the weights of the smaller network and the embedding map $\gamma_\lambda^1$. Numbers in hidden nodes (circles) denote the index of a neuron in form of (layer, neuron index) {with negative index for the added neuron}. Rectangles correspond to bias terms.}
\label{fig:gamma}
\end{figure}

A function $\gamma^r_\lambda$ describes the mapping from the parameters of the original network to the parameters after adding a neuron $\n{l,-1}{x}$. For a chosen neuron with index $r$ in layer $l$ of the smaller network, $\gamma_\lambda^r$ is determined by incoming weights $u_{-1,i}$ into $\n{l,-1}{x}$, outgoing weights $v_{s,-1}$ of $\n{l,-1}{x}$, and a change of the outgoing weights $v_{s,r}$ of $\n{l,r}{x}$. Sorting the network parameters in a convenient way, the embedding of the smaller network into the larger one is given, for any $\lambda \in \RR$, by a function $\gamma_\lambda^r$ mapping parameters $\{( [u_{r,i}]_i,[v_{s,r}]_s,\wrest\}$ of the smaller network to parameters $\{ ([u_{-1,i}]_i,[v_{s,-1}]_s, [u_{r,i}]_i,[v_{s,r}]_s,\wrest)\}$ of the larger network and is defined by 
$$\gamma_\lambda^r \left ([u_{r,i}]_i,[v_{s,r}]_s,\wrest\right ):=\left ([u_{r,i}]_i,[\lambda\cdot v_{s,r}]_s,[u_{r,i}]_i,[(1-\lambda) \cdot v_{s,r}]_s, \wrest \right ).$$
Here $\wrest$ denotes the collection of all remaining network parameters, i.e., all $[u_{p,i}]_i,[v_{s,q}]_s$ for $p,q\notin \{-1,r\}$ and all parameters from linear transformation of layers with index smaller than $l$ or larger than $l+1$, if existent. A visualization of $\gamma_\lambda^1$ is shown in \figurename~\ref{fig:gamma}.
\vspace{0.2cm}

 \textit{Important fact:}
For the network functions $\varphi,f$ of smaller and larger network at parameters $([{u}^*_{1,i}]_i,[{v}^*_{s,1}]_s,\wrest^*)$ and $\gamma_\lambda^r( [u^*_{r,i}]_i,[ v^*_{s,r}]_s,\wrest^* )$ respectively, we have $\varphi(x)=f(x)$ for all $x$. More generally, the activation values of all neurons in the smaller network agree with the activation values of corresponding neuron in the larger network, i.e., $\nphi{l,k}{x}=\n{l,k}{x}$ and $\actphi{l,k}{x}=\act{l,k}{x}$ for all $l,x$ and $k\geq 0$.

\subsection{Characterization of Critical Points Constructed Hierarchically by $\gamma$}\label{sct:characterization}

Using some $\gamma_\lambda^r$ to embed a smaller deep neural network into a larger one with one additional neuron, it has been shown that critical points get mapped to critical points. 

\begin{theorem}[\citet{Nitta}]\label{thm:nitta}
Consider two neural networks as in Section~\ref{sct:problemDef}, which differ by one neuron in layer $l$ with index $\n{l,-1}{x}$ in the larger network. If parameter choices $([u^*_{r,i}]_{i},[v^*_{s,r}]_{s},\wrest^*)$ determine a critical point for the squared loss over a finite data set in the smaller network then, for each $\lambda\in\RR$, $\gamma_\lambda^r([u^*_{r,i}]_i,[v^*_{s,r}]_s,\wrest^*)$ determines a critical point in the larger network.
\end{theorem}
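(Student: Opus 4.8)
The plan is to evaluate every partial derivative of the larger network's loss $\Loss$ at the point $\w_\gamma:=\gamma_\lambda^r([u^*_{r,i}]_i,[v^*_{s,r}]_s,\wrest^*)$ and show each is zero, using that the smaller network's loss $\Phi$ is critical at $([u^*_{r,i}]_i,[v^*_{s,r}]_s,\wrest^*)$. I would split the parameters of the larger network into three groups: the parameters gathered in $\wrest$; the incoming weights $[u_{-1,i}]_i$ of the new neuron together with the (formally unchanged) incoming weights $[u_{r,i}]_i$ of neuron $r$; and the outgoing weights $[v_{s,-1}]_s$ of the new neuron together with the (rescaled) outgoing weights $[v_{s,r}]_s$ of neuron $r$. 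Throughout write $\delta_{s,\alpha}:=\partial\loss_\alpha/\partial\n{l+1,s}{x_\alpha}$ for the sensitivity of the per-sample loss to the pre-activation at neuron $(l+1,s)$.

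The preliminary step records which quantities coincide between the two networks at $\w_\gamma$. The \emph{important fact} already gives $\actphi{l,k}{x}=\act{l,k}{x}$ and $\nphi{l,k}{x}=\n{l,k}{x}$ for all $k\ge 0$; a one-line check adds $\n{l,-1}{x_\alpha}=\n{l,r}{x_\alpha}=\nphi{l,r}{x_\alpha}$ and $\act{l,-1}{x_\alpha}=\actphi{l,r}{x_\alpha}$, since the new neuron has exactly the incoming weights of neuron $r$ and the layer-$(l-1)$ activations are unchanged (all upstream parameters lie in $\wrest$). As also $\n{l+1,s}{x_\alpha}=\nphi{l+1,s}{x_\alpha}$ and the sub-networks from layer $l+1$ onward are identical, the downstream sensitivities agree with those of the smaller network and, crucially, do not depend on any of the modified parameters $[u_{-1,i}]_i,[v_{s,-1}]_s,[v_{s,r}]_s$; write $\delta_{s,\alpha}$ for this common value. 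Finally, since $\n{l,-1}{x_\alpha}$ (resp.\ $\n{l,r}{x_\alpha}$) influences $\loss_\alpha$ only through layer $l+1$, the chain rule gives $\partial\loss_\alpha/\partial\n{l,-1}{x_\alpha}=\sigma'(\n{l,-1}{x_\alpha})\sum_s v_{s,-1}\,\delta_{s,\alpha}$, and the analogous identity for $\n{l,r}{x_\alpha}$ with $v_{s,r}$ in place of $v_{s,-1}$.

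Now the three groups. For $\theta\in\wrest$: the argument behind the important fact in fact shows $\Loss\circ\gamma_\lambda^r=\Phi$ as functions on the smaller network's parameter space, for each fixed $\lambda$; since $\gamma_\lambda^r$ is the identity on the $\wrest$-coordinates and sends no other coordinate into them, differentiating gives $\partial\Loss/\partial\theta\,(\w_\gamma)=\partial\Phi/\partial\theta=0$. For the $u$-weights: substituting $v_{s,-1}=\lambda v^*_{s,r}$ at $\w_\gamma$ into the chain-rule identity above yields $\partial\loss_\alpha/\partial\n{l,-1}{x_\alpha}=\lambda\,\partial\loss_\alpha/\partial\nphi{l,r}{x_\alpha}$, and $v_{s,r}=(1-\lambda)v^*_{s,r}$ yields $\partial\loss_\alpha/\partial\n{l,r}{x_\alpha}=(1-\lambda)\,\partial\loss_\alpha/\partial\nphi{l,r}{x_\alpha}$; multiplying by the remaining factor $\act{l-1,i}{x_\alpha}=\actphi{l-1,i}{x_\alpha}$ and summing over $\alpha$ gives $\partial\Loss/\partial u_{-1,i}\,(\w_\gamma)=\lambda\,\partial\Phi/\partial u_{r,i}$ and $\partial\Loss/\partial u_{r,i}\,(\w_\gamma)=(1-\lambda)\,\partial\Phi/\partial u_{r,i}$, both zero by criticality. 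For the $v$-weights: $v_{s,-1}$ and $v_{s,r}$ each affect $\loss_\alpha$ only through $\n{l+1,s}{x_\alpha}$, so $\partial\Loss/\partial v_{s,-1}\,(\w_\gamma)=\sum_\alpha\delta_{s,\alpha}\act{l,-1}{x_\alpha}=\sum_\alpha\delta_{s,\alpha}\actphi{l,r}{x_\alpha}=\partial\Phi/\partial v_{s,r}=0$, and the identical computation with $\act{l,r}{x_\alpha}$ in place of $\act{l,-1}{x_\alpha}$ gives $\partial\Loss/\partial v_{s,r}\,(\w_\gamma)=0$. Hence $\nabla\Loss(\w_\gamma)=0$.

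The calculation is mechanical once the bookkeeping is in place; the one point that genuinely needs care is the identification of the downstream sensitivities $\delta_{s,\alpha}$ across the two networks together with the observation that they are insensitive to the modified weights. This is precisely what upgrades the single relation coming from $\Loss\circ\gamma_\lambda^r=\Phi$ — which only constrains the combinations $\partial\Loss/\partial u_{-1,i}+\partial\Loss/\partial u_{r,i}$ and $\lambda\,\partial\Loss/\partial v_{s,-1}+(1-\lambda)\,\partial\Loss/\partial v_{s,r}$ — to the vanishing of each new partial on its own. A secondary point worth a remark is the boundary case $l=L-1$, where ``layer $l+1$'' is the scalar output layer and $s$ ranges over a single index; nothing in the argument changes.
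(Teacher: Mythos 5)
Your proof is correct and takes essentially the same route as the paper: express each partial derivative of $\Loss$ at $\gamma_\lambda^r(\w^*)$ in terms of the corresponding partial of the smaller network's loss (via the unchanged activations, identical downstream sensitivities, and the factors $\lambda$, $1-\lambda$), and conclude that criticality is preserved. The one presentational difference is that you invoke $\Loss\circ\gamma_\lambda^r=\Phi$ for the $\wrest$-coordinates and then note that this composite relation alone only constrains the combinations $\partial\Loss/\partial u_{-1,i}+\partial\Loss/\partial u_{r,i}$ and $\lambda\,\partial\Loss/\partial v_{s,-1}+(1-\lambda)\,\partial\Loss/\partial v_{s,r}$; the paper instead computes the first derivatives of $f$ directly and reads off the proportionality factors from there. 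Both resolve the same point — that the duplicated neuron's incoming/outgoing weights each vanish individually because the two copies share identical activations and downstream sensitivities — so the arguments are the same in substance.
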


As a consequence, whenever an embedding of a local minimum with $\gamma_\lambda^r$ into a larger network does not lead to a local minimum, then it leads to a \textdef{saddle point} instead, i.e., critical points where the Hessian has both strictly positive and strictly negative eigenvalues. (There are no local maxima in the networks we consider, since the loss function is convex with respect to the parameters of the last layer.) For shallow neural networks with one hidden layer, it was characterized  when a critical point leads to a local minimum.

\begin{theorem}[\citet{FukumizuAmari}]\label{thm:fukumizu}
Consider two neural networks as in Section~\ref{sct:problemDef} with only one hidden layer and which differ by one neuron in the hidden layer with index $\n{1,-1}{x}$ in the larger network. Assume that parameters $([{u}^*_{r,i}]_{i},{v}^*_{\parm,r},\wrest^*)$ determine an isolated local minimum for the squared loss over a finite data set in the smaller neural network and that $\lambda\notin\{0,1\}$. 

Then  $\gamma_\lambda^r([u^*_{r,i}]_i,v^*_{\parm,r},\wrest^*)$ determines a local minimum in the larger network if the matrix $[B_{i,j}^r]_{i,j}$ given by 
$$B_{i,j}^r= \sum_{\alpha} (f(x_\alpha)-y_\alpha) \cdot {v}^*_{\parm,r}\cdot \sigma''(\n{1,r}{x_\alpha}) \cdot x_{\alpha,i} \cdot x_{\alpha,j}$$
is positive definite and $0<\lambda<1$, or if $[B_{i,j}^r]_{i,j}$ is negative definite and $\lambda<0$ or $\lambda>1$. (Here, we denote the $k$-th input dimension of input $x_\alpha$ by $x_{\alpha,k}$.)
\end{theorem}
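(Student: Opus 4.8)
\emph{Proof plan.} The plan is to use Theorem~\ref{thm:nitta} to get that the point $\w_*:=\gamma^r_\lambda([u^*_{r,i}]_i,v^*_{\parm,r},\wrest^*)$ is a critical point of $\Loss$ for the larger network, and then to show it is not a saddle point (these networks have no local maxima, since the loss is convex in the last-layer weights). Note that $\w_*$ is not isolated: it lies on the curve $C:=\{\gamma^r_\mu([u^*_{r,i}]_i,v^*_{\parm,r},\wrest^*):\mu\in\RR\}$, which by Theorem~\ref{thm:nitta} consists of critical points and, by the \emph{Important fact} preceding Section~\ref{sct:characterization}, has constant loss $c:=\Loss_{\mathrm{small}}([u^*_{r,i}]_i,v^*_{\parm,r},\wrest^*)$, so the Hessian analysis must be carried out modulo this flat direction. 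The organizing object I would use is the submanifold $\mathcal{S}=\{u_{-1,i}=u_{r,i}\ \text{for all }i\}$, on which the added neuron duplicates neuron $r$: on $\mathcal{S}$ the larger network computes exactly the smaller one with the two output weights $v_{\parm,-1},v_{\parm,r}$ replaced by their sum, so $\Loss|_{\mathcal{S}}=\Loss_{\mathrm{small}}\circ\pi$ for the natural ``merge'' map $\pi$, and $C$ is precisely the fibre of $\pi$ through $\w_*$ (here I assume $v^*_{\parm,r}\neq 0$; otherwise $B^r=0$ and the hypotheses are vacuous).

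I would introduce coordinates $(s,\eta)$ near $\w_*$, with $\eta:=[u_{-1,i}-u_{r,i}]_i$ the directions transverse to $\mathcal{S}$ and $s$ a chart on $\mathcal{S}$, and study the partial minimum $G(\eta):=\min_{s}\Loss(s,\eta)$ over $s$ in a fixed small neighbourhood. Then: (i)~$G(0)=c$, because $\Loss(\cdot,0)=\Loss_{\mathrm{small}}\circ\pi$ and the smaller network has a local minimum at $\pi(\w_*)$; (ii)~$\nabla G(0)=0$, since the $s$ minimizing $\Loss(\cdot,0)$ lie on $C$ and every point of $C$ is a critical point of $\Loss$ (envelope theorem); (iii)~$\Loss(s,\eta)\ge G(\eta)$ near $\w_*$, so it suffices to prove $G(\eta)\ge c$ for small $\eta$, which follows once the Hessian $\nabla^2 G(0)$ is positive definite.

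The core of the proof is the identification of $\nabla^2 G(0)$, which is the Schur complement of the Hessian $H$ of the larger loss at $\w_*$ with respect to the $s$-block. Using the Gauss--Newton decomposition $H=2J^\top J+2\sum_\alpha(f(x_\alpha)-y_\alpha)\nabla^2 f(x_\alpha)$ (with $J$ the Jacobian of $\alpha\mapsto f(x_\alpha)$) and expanding $\sigma$ to second order around $\n{1,r}{x_\alpha}$ at neurons $-1$ and $r$ — exploiting that $u^*_{-1,\cdot}=u^*_{r,\cdot}$, that the output weights at $\w_*$ are $\lambda v^*_{\parm,r}$ and $(1-\lambda)v^*_{\parm,r}$, and the criticality identities $\sum_\alpha(f(x_\alpha)-y_\alpha)\sigma'(\n{1,r}{x_\alpha})x_{\alpha,i}=0$, $\sum_\alpha(f(x_\alpha)-y_\alpha)\sigma(\n{1,r}{x_\alpha})=0$ of the smaller network — one finds that these identities kill exactly the relevant cross-terms and $H$ block-diagonalizes as
\[
H \;=\; H_{\mathrm{small}}\ \oplus\ \kappa\,B^r\ \oplus\ 0,\qquad B^r:=[B^r_{i,j}]_{i,j},
\]
where $H_{\mathrm{small}}\succeq 0$ is the smaller network's Hessian, the last block is the tangent direction of $C$, and $\kappa$ is a \emph{positive} multiple of $\lambda(1-\lambda)$ arising from the split $v_{\parm,-1}=\lambda v^*_{\parm,r}$, $v_{\parm,r}=(1-\lambda)v^*_{\parm,r}$. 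Concretely: within the $\eta$-directions there is a distinguished one along which $f$ is unchanged to first order, namely $u_{-1}\mapsto u_{-1}+\delta$, $u_r\mapsto u_r-\tfrac{\lambda}{1-\lambda}\delta$ (so that $\lambda v^*_{\parm,r}\,\delta+(1-\lambda)v^*_{\parm,r}\,(-\tfrac{\lambda}{1-\lambda}\delta)=0$); along it the second-order Taylor expansion of $\sigma$ gives a loss change $\tfrac{\lambda}{1-\lambda}\,\delta^\top B^r\delta+o(\|\delta\|^2)$, and the criticality identities make this direction $H$-orthogonal to all others, which is why $\nabla^2 G(0)$ is a positive multiple of $\lambda(1-\lambda)B^r$. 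Since $\operatorname{sign}(\lambda(1-\lambda))$ is $+$ iff $0<\lambda<1$ and $-$ iff $\lambda<0$ or $\lambda>1$, this is positive definite exactly when $B^r\succ 0$ and $0<\lambda<1$, or $B^r\prec 0$ and $\lambda<0$ or $\lambda>1$ — the stated hypotheses (which in particular force $\lambda\notin\{0,1\}$, consistent with the assumption). Hence $G(\eta)=c+\tfrac12\eta^\top(\text{pos.\ def.})\eta+o(\|\eta\|^2)>c$ for $0<\|\eta\|$ small, and $\w_*$ is a local minimum.

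I expect the real work in two places. First, the block-diagonalization/Schur-complement computation itself: one must track the positive-semidefinite ``Gram'' contributions $\sum_\alpha\sigma'(\n{1,r}{x_\alpha})^2x_{\alpha,i}x_{\alpha,j}$ coming from $J^\top J$ and verify that, together with the smaller network's Hessian block, they are exactly cancelled against the cross-terms between $\eta$ and $s$, leaving only $\kappa\,B^r$; this is where the $\lambda$ versus $1-\lambda$ splitting in $\gamma^r_\lambda$ is essential and where the sign condition is born. Second — and this is the subtler point — passing from ``$\nabla^2 G(0)$ positive definite'' to ``$G(\eta)\ge c$ near $0$'' is delicate when the smaller network's local minimum is Hessian-degenerate, since then $G$ is a minimum over a manifold on which $\Loss(\cdot,0)-c$ is not strictly convex transverse to $C$, and a crude estimate leaves a cross-term of sub-quadratic order in $\|\eta\|$ that could in principle dominate. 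Here I would invoke analyticity of $\Loss$ (a {\L}ojasiewicz-type lower bound $\Loss(\cdot,0)-c\gtrsim\operatorname{dist}(\cdot,C)^{p}$) together with the isolated-local-minimum hypothesis to control the re-optimization over those degenerate directions and recover the $\|\eta\|^2$ lower bound; making this last step fully rigorous in the degenerate case is the main obstacle.
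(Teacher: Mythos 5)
Your plan takes a genuinely different route from the paper, and it captures the right mechanism (the quadratic form governed by $\lambda(1-\lambda)B^r$ transverse to the ``merge'' submanifold $\mathcal{S}$, the criticality identities killing the cross-terms, the special flat curve $C$). The paper does not use a partial-minimization/Schur-complement argument; instead it carries out an explicit change of basis to the variables $([u_{-1,i}+u_{r,i}]_i,\,[v_{s,-1}+v_{s,r}]_s,\,\wrest,\,[\alpha u_{-1,i}-\beta u_{r,i}]_i,\,[v_{s,-1}-v_{s,r}]_s)$ with $\lambda=\beta/(\alpha+\beta)$, computes the Hessian in that basis (Lemma~\ref{lma:hessian}), observes that in the shallow case the off-diagonal $D$-blocks vanish, invokes a short lemma characterizing when such block matrices are positive semidefinite, and then handles the degenerate $[v_{s,-1}-v_{s,r}]$-block by noting the network function (hence the loss) is exactly constant along it.

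Your route has a concrete gap precisely where you flagged it. The $s$-block of the Hessian (the restriction of $H$ to $T_{\w_*}\mathcal{S}$) is \emph{not} positive definite: $C\subset\mathcal{S}$, so the tangent of $C$ lies in the $s$-block and is a genuine zero eigendirection of $H$. This is not a side issue that only arises when the smaller minimum is Hessian-degenerate; it is present always. Consequently the Schur complement of the $s$-block is ill-defined, and the identity $\nabla^2G(0)=\text{(Schur complement)}$ does not hold as stated. To repair it you would need to first quotient out the direction along $C$ (or choose a transversal to $C$ inside $\mathcal{S}$, e.g.\ freeze $v_{\parm,-1}-v_{\parm,r}$), argue that $\Loss$ is literally constant — not just critical — along $C$, and only then partially minimize over the remaining $s$-coordinates. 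Once you do that, the remaining $s$-block is positive definite by the isolated-minimum hypothesis (which the paper reads as ``smaller-network Hessian positive definite'', see the phrase ``positive definite by assumption (isolated minimum)'' in their proof of Theorem~\ref{thm:construction}), so the {\L}ojasiewicz machinery you invoke is not actually needed under the theorem's hypotheses; it would only become relevant if one wanted to weaken the isolatedness assumption. In short: your first ``place where the real work is'' (the algebraic block-diagonalization using the criticality identities and the $\lambda$ vs.\ $1-\lambda$ split) matches the content of Lemma~\ref{lma:hessian}; your second worry is aimed at the wrong target — the delicate direction is $C$, not potential Hessian-degeneracy of the smaller network, and the paper's fix is the elementary observation that $f$ is constant along $C$, not a {\L}ojasiewicz estimate. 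Also note a minor slip: you write $\lambda v^*_{\parm,r}\,\delta+(1-\lambda)v^*_{\parm,r}(-\tfrac{\lambda}{1-\lambda}\delta)=0$ as the first-order invariance of $f$ along the special $\eta$-direction, but the first-order variation of $f$ under perturbing $u$ involves $\sigma'$ and the input, not the $v$-weights alone multiplying $\delta$; the correct statement is that the two neurons' contributions to $\partial f/\partial(\text{that }\eta\text{-direction})$ cancel because both neurons have identical pre-activations at $\w_*$.
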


We extend the previous theorem to a characterization in the case of deep neural networks. We note that a similar computation has been previously performed for neural networks with two hidden layers by \citet{MizutaniDreyfus}.

\begin{restatable}{theorem}{construction}\label{thm:construction}
Consider two (possibly deep) neural networks as in Section~\ref{sct:problemDef}, which differ by one neuron in layer $l$ with index $\n{l,-1}{x}$ in the larger network. Assume that the parameter choices $([{u}^*_{r,i}]_{i},[{v}^*_{s,r}]_{s},\wrest^*)$ determine an isolated local minimum for the squared loss over a finite data set in the smaller network
. If the matrix $[B_{i,j}^r]_{i,j}$ defined by 
\begin{equation}\label{eq:B}\begin{split}
B_{i,j}^r:= \sum_{\alpha}  \sum_k &\frac{\partial \loss_\alpha}{\partial \n{l+1,k}{x_\alpha}} \cdot  v^*_{k,r} \cdot \sigma''(\n{l,r}{x_\alpha} ) \cdot \act{l-1,i}{x_\alpha}\cdot \act{l-1,j}{x_\alpha}
\end{split} \end{equation}
is either
\begin{itemize}
\item positive definite and $\lambda\in \mathcal{I}:= (0,1)$, or 
\item negative definite and $\lambda \in \mathcal{I}:= (-\infty,0)\cup (1,\infty),$
\end{itemize}
then $\left \{ \gamma_\lambda^r([{u}^*_{r,i}]_i,[{v}^*_{s,r}]_s,\wrest^*)\ |\ \lambda \in \mathcal{I} \right \}$ determines a non-attracting region of local minima in the larger network if and only if 
\begin{equation}\label{eq:D}
D_{i}^{r,s}:= \sum_\alpha  \frac{\partial \loss_\alpha}{\partial \n{l+1,s}{x_\alpha}}  \cdot \sigma'(\n{l,r}{x_\alpha}) \cdot \act{l-1,i}{x_\alpha}
\end{equation}
is zero, $D_{i}^{r,s}=0$, for all $i,s$. 
\end{restatable}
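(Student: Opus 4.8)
The plan is to read off everything from a single careful computation of the Hessian of $\Loss$ at the candidate point $\w_\lambda:=\gamma_\lambda^r([u^*_{r,i}]_i,[v^*_{s,r}]_s,\wrest^*)$, which Theorem~\ref{thm:nitta} already tells us is a critical point. First I would introduce coordinates around $\w_\lambda$ adapted to the ``splitting'' structure: writing $\delta u_{-1,i},\delta u_{r,i},\delta v_{s,-1},\delta v_{s,r},\delta\wrest$ for parameter perturbations, pass to the linear change of variables consisting of the \emph{combined-neuron} directions $P:=(\,[\,\lambda\delta u_{-1,i}+(1-\lambda)\delta u_{r,i}\,]_i,\ [\,\delta v_{s,-1}+\delta v_{s,r}\,]_s,\ \delta\wrest\,)$, the \emph{incoming difference} $\xi:=[\delta u_{-1,i}-\delta u_{r,i}]_i$, and the \emph{outgoing difference} $\eta:=[(1-\lambda)\delta v_{s,-1}-\lambda\delta v_{s,r}]_s$. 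The point of this is that on the slice $\{\xi=0\}$ the neurons $-1$ and $r$ keep equal incoming weights, hence equal activations, so the larger network computes exactly the smaller network's function with hidden weights $[u^*_{r,i}]_i+(\text{the }\delta u\text{-part of }P)$ and combined outgoing weights $[v^*_{s,r}]_s+(\text{the }\delta v\text{-part of }P)$; consequently $\Loss(P,0,\eta)$ equals the smaller network's loss at the corresponding shifted parameters and is independent of $\eta$. In particular the affine subspace $\Sigma:=\{P=0,\ \xi=0\}$ (of dimension $n_{l+1}$, and containing the whole curve $\{\w_\lambda\}_{\lambda\in\RR}$) consists of critical points of $\Loss$ with the constant value $c$ of the smaller network's local minimum.

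The heart of the proof is to compute $\nabla^2\Loss(\w_\lambda)$ in these coordinates by differentiating $\Loss=\sum_\alpha\loss_\alpha$ through the neuron values $\n{l+1,s}{x_\alpha}$, keeping track of which parameters influence which $\n{l+1,s}{x_\alpha}$ and using the elementary cancellations $\lambda(1-\lambda)^2+(1-\lambda)\lambda^2=\lambda(1-\lambda)$ and $\lambda(1-\lambda)-(1-\lambda)\lambda=0$. I expect the result to be
\[
\nabla^2\Loss(\w_\lambda)=
\begin{pmatrix}
\nabla^2\Loss_{\mathrm{small}} & 0 & 0\\[3pt]
0 & \lambda(1-\lambda)\,[B^r_{i,j}]_{i,j} & [D^{r,s}_i]_{i,s}\\[3pt]
0 & [D^{r,s}_i]_{i,s}^{\top} & 0
\end{pmatrix},
\]
with the $P$-block equal to the Hessian of the smaller network's loss at its minimum, the $\xi\eta$-block equal to the matrix $[D^{r,s}_i]$ of \eqref{eq:D}, and the $\eta\eta$-block vanishing because each $\n{l+1,s}{x_\alpha}$ is linear in $\eta$ with derivative zero at $\w_\lambda$. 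Verifying that the mixed $P$--$\xi$, $P$--$\eta$ and the $\eta\eta$ blocks vanish while the $\xi\xi$-block comes out to exactly $\lambda(1-\lambda)[B^r_{i,j}]$ is the one genuinely delicate step; this is essentially the deep-network version of the computation \citet{FukumizuAmari} did for one hidden layer (and \citet{MizutaniDreyfus} for two), and it is where I expect the main obstacle to lie.

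For the ``if'' direction, assume $D^{r,s}_i=0$ for all $i,s$; then the Hessian is block-diagonal in $P\oplus\xi\oplus\eta$, and under the hypotheses ($[B^r_{i,j}]\succ0,\ \lambda\in(0,1)$, or $[B^r_{i,j}]\prec0,\ \lambda\in(-\infty,0)\cup(1,\infty)$, both giving $\lambda(1-\lambda)[B^r_{i,j}]\succ0$) it is positive semidefinite with kernel exactly $\eta$, i.e.\ exactly the tangent space of the flat critical submanifold $\Sigma$. A standard degenerate-minimum argument (split off $\Sigma$ in local coordinates; $\Loss|_\Sigma\equiv c$ and $\nabla\Loss|_\Sigma\equiv0$ force the Hessian's mixed block to vanish; then Taylor with remainder plus continuity and strict positivity of the transverse Hessian gives $\Loss\ge c$ nearby) shows every point of $\Sigma$, in particular every $\w_\lambda$, is a local minimum, so $\{\w_\lambda:\lambda\in\mathcal I\}$ lies in a maximal connected region $R$ of local minima of value $c$; any degeneracy of $\nabla^2\Loss_{\mathrm{small}}$ is handled exactly as in the shallow case. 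To see that $R$ is non-attracting, I would exhibit from $\w_\lambda$ a non-increasing path leaving it: slide $\lambda\downarrow0$ along $R$ (loss stays $c$) to the ``dead-neuron'' configuration $\w_0$ where all $v_{s,-1}=0$; then move $u_{-1}$ from $u^*_r$ to a new value $u'$ (loss stays $c$, neuron $-1$ being inactive); then step along $-\nabla_{[v_{s,-1}]_s}\Loss$. The last step strictly lowers the loss because $\nabla_{v_{s,-1}}\Loss=\sum_\alpha\frac{\partial\loss_\alpha}{\partial\n{l+1,s}{x_\alpha}}\,\sigma(\langle u',\act{l-1}{x_\alpha}\rangle)$ (the derivatives here are those of the \emph{smaller} network, independent of $u'$) cannot vanish for all $s$ and all $u'$: pairwise distinctness of the patterns $\act{l-1}{x_\alpha}$ together with linear independence of the maps $u'\mapsto\sigma(\langle u',\act{l-1}{x_\alpha}\rangle)$ for analytic non-polynomial $\sigma\in\mathcal A$ would otherwise force $\frac{\partial\loss_\alpha}{\partial\n{l+1,s}{x_\alpha}}=0$ for all $\alpha,s$, hence $[B^r_{i,j}]=0$, contradicting its definiteness. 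Since this path starts at $\w_\lambda\in R\subseteq U$ for every neighborhood $U$ of $R$, the region is non-attracting.

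For the ``only if'' direction, suppose $D^{r,s}_i\ne0$ for some $i,s$. Since $\lambda(1-\lambda)[B^r_{i,j}]$ is invertible, the Schur complement of it in the $(\xi,\eta)$-block of the Hessian is $-[D^{r,s}_i]^{\top}\big(\lambda(1-\lambda)[B^r_{i,j}]\big)^{-1}[D^{r,s}_i]$, which is non-zero and negative semidefinite; hence that block, and therefore $\nabla^2\Loss(\w_\lambda)$, has a strictly negative eigenvalue (and, from the $\lambda(1-\lambda)[B^r_{i,j}]$ part, a strictly positive one). So $\w_\lambda$ is a saddle point, not a local minimum, and $\{\w_\lambda:\lambda\in\mathcal I\}$ is therefore not a (non-attracting) region of local minima. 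As noted, the only step I expect to require real work is the Hessian block computation: checking that the many contributions weighted by $\lambda$ versus $1-\lambda$ cancel so that precisely the $[B^r_{i,j}]$ and $[D^{r,s}_i]$ terms survive.
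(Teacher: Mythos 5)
Your overall strategy---compute the Hessian of $\Loss$ at $\gamma^r_\lambda(\cdot)$ in coordinates adapted to the duplicate neuron pair, read off (semi)definiteness from a block form governed by $[B^r_{i,j}]$ and $[D^{r,s}_i]$, and upgrade positive semidefiniteness to a genuine minimum by noting the loss is constant along the null directions---is exactly the route the paper takes (Lemma~\ref{lma:hessian} together with Lemma~B.5). The place where your plan fails is precisely the step you flag as ``genuinely delicate'': the asserted block form of the Hessian does not hold for your choice of basis.

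Concretely, you take the incoming-difference direction $\xi=[\delta u_{-1,i}-\delta u_{r,i}]_i$. Writing the $u$--$u$ block of the Hessian as a sum of the residual-times-curvature contributions and the Gauss--Newton contributions (the paper's $A$, $B$, $\mathcal{A}$), one computes
\[
\xi^\top H\,\xi \;=\; (2\lambda-1)^2\,(A+\mathcal{A}) \;+\; B,
\]
which is not $\lambda(1-\lambda)B$ unless $\lambda=\tfrac12$. The cancellation of the $(A+\mathcal{A})$ piece is special to the direction $\alpha\,\delta u_{-1,i}-\beta\,\delta u_{r,i}$ subject to $\alpha\lambda=\beta(1-\lambda)$ (the paper's $\mu_1$, e.g.\ $\alpha=1-\lambda$, $\beta=\lambda$), and $[\delta u_{-1,i}-\delta u_{r,i}]$ corresponds to $\alpha=\beta=1$, which violates that relation. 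Similarly, your $\eta=[(1-\lambda)\delta v_{s,-1}-\lambda\delta v_{s,r}]_s$ gives $\eta^\top H\eta=(2\lambda-1)^2(E+\mathcal{E})\neq 0$; the vanishing $\eta\eta$-block requires the \emph{unweighted} difference $\delta v_{s,-1}-\delta v_{s,r}$. And the mixed $P$--$\xi$ blocks do not vanish: e.g.\ the $P_v$--$\xi$ entry is $2(2\lambda-1)(C+\mathcal{C})$. Finally, even in the correct basis the Hessian is not block-diagonal in $(P,\xi,\eta)$: Lemma~\ref{lma:hessian} shows a residual coupling $(\alpha-\beta)[D^{r,s}_i]$ between the combined outgoing direction and the incoming-difference direction. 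That coupling happens to be harmless (it vanishes when $D=0$, and when $D\neq 0$ the principal $(\xi,\eta)$ block alone already fails positive semidefiniteness), but it means the clean three-block picture you asserted is not what the computation actually produces. Since everything downstream (definiteness, Schur complement, the identification of the kernel with the tangent space of the flat locus) relies on that block form, the proof is not correct as written.

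A secondary gap concerns the non-attracting claim. Your escape path (slide $\lambda\downarrow 0$, reassign the now-dead $u_{-1}$, descend along $v_{\cdot,-1}$) needs the activation patterns $\act{l-1}{x_\alpha}$ to be pairwise distinct so that the functions $u'\mapsto\sigma(\langle u',\act{l-1}{x_\alpha}\rangle)$ are linearly independent. That holds for $l=1$ (they are the inputs) but need not hold in deeper layers, where different samples can have identical activations in layer $l-1$. The paper's argument avoids this entirely: move $\lambda$ continuously out of $\mathcal{I}$ along $\gamma^r_\lambda(\cdot)$, which leaves $f$ and $\Loss$ unchanged but flips the sign of $\alpha\beta[B^r_{i,j}]$ in Lemma~\ref{lma:hessian}, producing a saddle with an explicit descent direction without ever leaving the one-parameter family and without any extra genericity hypothesis.
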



\begin{remark}
In the case of a neural network with only one hidden layer as considered in Theorem~\ref{thm:fukumizu}, the partial derivative $\frac{\partial \loss_\alpha}{\partial \n{l+1,s}{x_\alpha}}$ reduces to the residual $(f(x_\alpha)-y_\alpha)$ and the matrix $[B_{i,j}^r]_{i,j}$ in Equation~\ref{eq:B} reduces to the matrix $[B_{i,j}^r]_{i,j}$ in Theorem~\ref{thm:fukumizu}. The condition that $D_{i}^{r,s}=0$ for all $i,s$ does hold for shallow neural networks with one hidden layer as we show in Proposition~\ref{prop:constructionDeep}$(i)$. This proves Theorem~\ref{thm:construction} to be consistent with Theorem~\ref{thm:fukumizu}.
\end{remark}

\begin{remark}
The assumption of starting from an \textit{isolated} local minimum can be relaxed by special consideration of the degenerate directions (the eigenvectors of the Hessian with eigenvalue zero), which we will take advantage of.
\end{remark}

The theorem follows from a careful computation of the Hessian of the loss function $\Loss(\w)$ (i.e., we calculate the matrix of second order derivatives of the loss function with respect to the network parameters), characterizing when it is positive (or negative) semidefinite and checking that the loss function does not change along directions that correspond to an eigenvector of the Hessian with eigenvalue 0. We state the outcome of the computation of the loss Hessian in Lemma~\ref{lma:hessian} and refer the reader interested in a full proof of Theorem~\ref{thm:construction} to Appendix~\ref{app:construction}.

\begin{restatable}{lem}{hessian}\label{lma:hessian}
Consider two (possibly deep) neural networks as in Section~\ref{sct:problemDef}, which differ by one neuron in layer $l$ with index $\n{l,-1}{x}$ in the larger network. Fix $1\leq r \leq n_l$. Assume that the parameter choices $([{u}^*_{r,i}]_{i},[{v}^*_{s,r}]_{s},\wrest^*)$ determine a critical point in the smaller network. 

Let $\Loss$ denote the loss function of the larger network and $\loss$ the loss function of the smaller network. Let $\alpha\neq - \beta \in \RR$ such that $\lambda=\frac{\beta}{\alpha+\beta}$.

With respect to the basis of the parameter space of the larger network given by $( [u_{-1,i} + u_{r,i}]_i,[v_{s,-1} + v_{s,r}]_s, \wrest, [\alpha\cdot u_{-1,i} - \beta \cdot u_{r,i}]_i,[v_{s,-1} - v_{s,r}]_s) $,
the Hessian of the loss $\Loss$ at $\gamma_\lambda^r([{u}^*_{r,i}]_{i},[{v}^*_{s,r}]_{s},\wrest^*)$ is given by 

$$\begin{pmatrix}
[\frac{\partial^2 \loss}{\partial u_{r,i} \partial u_{r,j} }]_{i,j}  & 2 [\frac{\partial^2 \loss}{\partial u_{r,i} \partial v_{s,r} }]_{i,s}& [\frac{\partial^2 \loss}{\partial \wrest\ \partial u_{r,i} }]_{i,\wrest}  &0  &0  \\ 
2[\frac{\partial^2 \loss}{\partial u_{r,i} \partial v_{s,r} }]_{s,i} & 4 [\frac{\partial^2 \loss}{\partial v_{s,r} \partial v_{t,r} }]_{s,t} &  2[\frac{\partial^2 \loss}{\partial \wrest\ \partial v_{s,r} }]_{s,\wrest}  &  (\alpha-\beta)[D_{i}^{r,s}]_{s,i} &  0  \\
[\frac{\partial^2 \loss}{\partial \wrest\ \partial u_{r,i} }]_{\wrest,i} &  2[ \frac{\partial^2 \loss}{\partial \wrest\ \partial v_{s,r} }]_{\wrest,s} & [\frac{\partial^2 \loss}{\partial \wrest\ \partial \wrest' }]_{\wrest,\wrest'} & 0  & 0   \\
0 &  (\alpha-\beta)[D_{i}^{r,s}]_{i,s} & 0 & \alpha \beta [B_{i,j}^r]_{i,j}  &   (\alpha + \beta) [ D_{i}^{r,s}]_{i,s}  \\
0 &  0 & 0 & (\alpha + \beta) [ D_{i}^{r,s}]_{s,i} & 0  \\
\end{pmatrix}
$$

\end{restatable}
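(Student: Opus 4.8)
The plan is to compute the Hessian of $\Loss$ at the embedded point $\w_*:=\gamma_\lambda^r([u^*_{r,i}]_i,[v^*_{s,r}]_s,\wrest^*)$ directly, by evaluating the Hessian bilinear form on pairs of the prescribed basis vectors; since that basis is obtained from the standard coordinates by a \emph{linear} substitution, the result is exactly the Hessian matrix in the new basis, and the criticality hypothesis is not actually needed for the computation itself (it enters only through Theorem~\ref{thm:nitta}, which guarantees that $\w_*$ is a critical point so that the Hessian is the pertinent object). Write $U^+_i$ and $V^+_s$ for the symmetric directions $[u_{-1,i}+u_{r,i}]_i$ and $[v_{s,-1}+v_{s,r}]_s$, write $U^-_i:=[\alpha u_{-1,i}-\beta u_{r,i}]_i$ and $V^-_s:=[v_{s,-1}-v_{s,r}]_s$ for the antisymmetric ones, and leave the $\wrest$-directions unchanged; together these span $\RR^M$, since on each pair of duplicated coordinates the substitution has determinant $-(\alpha+\beta)\neq0$, respectively $-2\neq0$. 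Two identities valid at $\w_*$ carry the whole argument: (i) the ``important fact'' $\n{l,-1}{x_\alpha}=\n{l,r}{x_\alpha}$, $\act{l,-1}{x_\alpha}=\act{l,r}{x_\alpha}$ for all $\alpha$, together with $v_{s,-1}+v_{s,r}=v^*_{s,r}$ (the smaller network's weight); and (ii) $\alpha v_{s,-1}-\beta v_{s,r}=0$ for every $s$, which is precisely the relation $\lambda=\frac{\beta}{\alpha+\beta}$.

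\emph{Symmetric block.} On the subspace through $\w_*$ spanned by the $U^+_i$, the $V^+_s$ and the $\wrest$-directions, the two duplicated neurons retain equal incoming weights, hence equal activations, and contribute $(v_{s,-1}+v_{s,r})\act{l,-1}{x}$ to layer $l+1$; so $\Loss$ restricted to this subspace coincides with $\loss$ precomposed with the affine reparametrization that sends a step $t$ along $U^+_i$ to the step $t$ of $u_{r,i}$ in the smaller network, a step $t$ along $V^+_s$ to the step $2t$ of $v_{s,r}$, and the $\wrest$-directions identically. Differentiating twice then reproduces the upper-left $3\times3$ block verbatim, the coefficients $1,2,4$ arising from the single factor $2$ on each $v$-coordinate.

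\emph{Antisymmetric rows.} Identity (ii) gives $D_{U^-_i}f(x_\alpha)=\act{l-1,i}{x_\alpha}\,\sigma'(\n{l,r}{x_\alpha})\sum_s(\alpha v_{s,-1}-\beta v_{s,r})\tfrac{\partial f(x_\alpha)}{\partial\n{l+1,s}{x_\alpha}}=0$ at $\w_*$, and identity (i) gives $D_{V^-_s}f(x_\alpha)=\tfrac{\partial f(x_\alpha)}{\partial\n{l+1,s}{x_\alpha}}\big(\act{l,-1}{x_\alpha}-\act{l,r}{x_\alpha}\big)=0$. Hence, for any basis direction $\xi$, $\mathrm{Hess}(\Loss)(\xi,U^-_i)=\sum_\alpha\big[2(D_\xi f)(D_{U^-_i}f)+2(f(x_\alpha)-y_\alpha)D_\xi D_{U^-_i}f\big]$ collapses to its second summand at $\w_*$, and likewise with $U^-_i$ replaced by $V^-_s$: only the ``curvature of the output'' survives, so the last two block-rows are determined by the mixed second derivatives of $f$ at $\w_*$. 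Computing those by the chain rule through layers $l$, $l+1$ and beyond, every resulting term carries either a factor $\sum_s(\alpha v_{s,-1}-\beta v_{s,r})(\cdots)=0$ — annihilating the coupling of $U^-$ to the $U^+$ family, to the $\wrest$-directions, and every contribution of the downstream Hessian $\partial^2f/\partial\n{l+1,s}{x}\,\partial\n{l+1,s'}{x}$ — or a factor $\act{l,-1}{x}-\act{l,r}{x}=0$, annihilating the coupling of $V^-$ to every family except $U^-$; the only survivors are $D_{V^+_s}D_{U^-_i}f=(\alpha-\beta)\sigma'(\n{l,r}{x_\alpha})\act{l-1,i}{x_\alpha}\tfrac{\partial f(x_\alpha)}{\partial\n{l+1,s}{x_\alpha}}$ and $D_{V^-_s}D_{U^-_i}f=(\alpha+\beta)\sigma'(\n{l,r}{x_\alpha})\act{l-1,i}{x_\alpha}\tfrac{\partial f(x_\alpha)}{\partial\n{l+1,s}{x_\alpha}}$, coming from the explicit outgoing-weight factor in $D_{U^-_i}f$, and $D_{U^-_i}D_{U^-_j}f=\alpha\beta\,\sigma''(\n{l,r}{x_\alpha})\act{l-1,i}{x_\alpha}\act{l-1,j}{x_\alpha}\sum_s v^*_{s,r}\tfrac{\partial f(x_\alpha)}{\partial\n{l+1,s}{x_\alpha}}$, coming from the $\sigma''$-term and using $\alpha^2v_{s,-1}+\beta^2v_{s,r}=\alpha\beta v^*_{s,r}$. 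Multiplying by $2(f(x_\alpha)-y_\alpha)$, summing over $\alpha$ and rewriting $2(f(x_\alpha)-y_\alpha)\tfrac{\partial f(x_\alpha)}{\partial\n{l+1,s}{x_\alpha}}=\tfrac{\partial\loss_\alpha}{\partial\n{l+1,s}{x_\alpha}}$ turns these into $(\alpha-\beta)[D_i^{r,s}]$, $(\alpha+\beta)[D_i^{r,s}]$ and $\alpha\beta[B_{i,j}^r]$, and, assembled with the vanishing couplings and the symmetric block, yields the displayed matrix. (That $\tfrac{\partial f}{\partial\n{l+1,s}{\cdot}}$, $\sigma'(\n{l,r}{\cdot})$, $\sigma''(\n{l,r}{\cdot})$ and $\act{l-1,i}{\cdot}$ may equally be read off the smaller network is once more the ``important fact''.)

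The step I expect to be the main obstacle is exactly this last bookkeeping: one has to propagate a perturbation of the duplicated neurons' incoming and outgoing weights through layer $l+1$ and every later layer and sort each arising term into ``a multiple of $\alpha v_{s,-1}-\beta v_{s,r}$'', ``a multiple of $\act{l,-1}-\act{l,r}$'', or ``surviving'' — in particular one must verify that the downstream second-derivative $\partial^2f/\partial\n{l+1,s}{x}\,\partial\n{l+1,s'}{x}$, which appears whenever two perturbations both route through layer $l+1$, is always contracted against one of the two vanishing quantities — and one must treat separately the four places a $\wrest$-coordinate can sit (below layer $l$; inside layer $l$ but not on the duplicated neuron; inside layer $l+1$ but not on it; above layer $l+1$), the coupling to $U^-$ and $V^-$ vanishing in every case for the same two reasons. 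I would relegate the complete accounting to Appendix~\ref{app:construction}.
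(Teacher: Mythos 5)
Your proposal is correct, and it computes the same object by essentially the same means (evaluating the Hessian bilinear form on the prescribed basis vectors, equivalently a linear change of variables applied to the second-derivative tensor). The genuine difference is one of organization: the paper's supplementary material computes every mixed second derivative of $f$ in the original coordinates, packages them into $A,B,C,D,E$ and their ``Gauss--Newton'' counterparts, and only then performs the basis change algebraically, so the many cancellations appear only a posteriori. You instead isolate up front the two structural identities that drive all of them: first, that $\Loss$ restricted to the symmetric affine slice $\w_*+\spn(U^+,V^+,\wrest)$ coincides with $\loss$ precomposed with the diagonal rescaling $(t_u,t_v,\wrest)\mapsto(t_u,2t_v,\wrest)$, which gives the $1,2,4$ upper-left block by the chain rule rather than by term-by-term addition; and second, that the first-order directional derivatives $D_{U^-_i}f$ and $D_{V^-_s}f$ vanish identically at $\w_*$ (because $\alpha v_{s,-1}-\beta v_{s,r}=(\alpha\lambda-\beta(1-\lambda))v^*_{s,r}=0$ and $\act{l,-1}-\act{l,r}=0$ respectively), so that every Hessian entry against an antisymmetric direction reduces to the residual-weighted curvature $\sum_\alpha 2(f(x_\alpha)-y_\alpha)\,D_\xi D_\eta f(x_\alpha)$. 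This makes the paper's long cascade of cancellations transparent: every coupling of $U^-$ to anything contracts the downstream-Hessian and $\sigma''$-terms against a vector proportional to $\alpha v_{\cdot,-1}-\beta v_{\cdot,r}$, and every coupling of $V^-$ to anything except $U^-$ contracts against $\act{l,-1}-\act{l,r}$ or $\sigma'(\n{l,-1})-\sigma'(\n{l,r})$, both zero. You are also right that the criticality hypothesis plays no role in the Hessian identity itself; it only makes the Hessian the relevant object in Theorem~\ref{thm:construction}. The residual bookkeeping you flag as the ``main obstacle'' — sorting each chain-rule term by which vanishing factor it carries, and treating the four possible locations of a $\wrest$-coordinate — is indeed exactly what the paper's appendix carries out in full, and I checked several representative cases (the $U^-$--$\wrest$, $V^-$--$V^+$, and $V^-$--$V^-$ couplings, and the three surviving entries); they come out as you claim.
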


\subsection{Shallow Networks with a Single Hidden Layer}


For the construction of suboptimal local minima in extremely wide two-layer networks, we begin by following the experiments of \citet{FukumizuAmari} that prove the existence of suboptimal local minima in (non-wide) two-layer neural networks. 

Consider a neural network of size 1---2---1. We use the corresponding network function $f$ to construct a data set $(x_\alpha,y_\alpha)_{\alpha=1}^N$ by randomly choosing $x_\alpha$ and letting $y_\alpha=f(x_\alpha)$. By construction, we know that a neural network of size 1---2---1 can perfectly fit the data set with zero error.

Consider now a smaller network of size 1---1---1 having too little expressibility for a global fit of all data points. We find parameters  $[{u}^*_{1,1},{v}^*_{\parm,1}]$ where the loss function of the neural network is in an isolated local minimum with non-zero loss. For this small example, the required positive definiteness of $[B_{i,j}^1]_{i,j}$ from Equation~\ref{eq:B} for a use of $\gamma_\lambda^1$ with $\lambda\in(0,1)$ reduces to checking a real number $B_{1,1}^1$ for positivity. {An empirical example is easily found, and we assume the positivity condition to hold true.} We can now apply $\gamma_\lambda^1$ and Theorem~\ref{thm:fukumizu} to find parameters for a neural network of size 1---2---1 that determine a suboptimal local minimum. This concludes the construction of \citet{FukumizuAmari}. The obtained network now serves as a base for a proof by induction to show that suboptimal local minima also exist in arbitrarily wide neural networks.

\begin{theorem}\label{thm:twoLayerWide}
There is an extremely wide two-layer neural network with sigmoid activation functions and arbitrarily many neurons in the hidden layer that has a non-attracting region of suboptimal local minima.
\end{theorem}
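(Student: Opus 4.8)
The plan is to argue by induction on the width $k$ of the single hidden layer, inserting one neuron at a time while preserving a non-attracting region of suboptimal local minima. Throughout, fix the data set of size $N$ generated from the 1---2---1 network above, so that $\Loss_0=0$ for every 1---$k$---1 network with $k\geq 2$. The base case $k=2$ is the Fukumizu--Amari construction just recalled: Theorem~\ref{thm:fukumizu} yields a suboptimal local minimum of the 1---2---1 loss, and since for a network with a single hidden layer the quantities $D_i^{r,s}$ of Equation~\ref{eq:D} vanish identically (Proposition~\ref{prop:constructionDeep}$(i)$), Theorem~\ref{thm:construction} in fact exhibits a whole non-attracting region $R_2=\{\gamma_\lambda^1( [u^*_{1,1}],v^*_{\parm,1},\wrest^* )\mid \lambda\in(0,1)\}$ of suboptimal local minima, all with a common loss value $c>0$.

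For the inductive step, assume a 1---$k$---1 network carries a non-attracting region $R_k$ of suboptimal local minima of value $c>0$. Pick $\w_*\in R_k$ and any hidden neuron $r$ (its outgoing weight $v^*_{\parm,r}$ will be nonzero, see below), and apply $\gamma^r_\lambda$ with $\lambda\in(0,1)$. The one hypothesis of Theorem~\ref{thm:construction} that now fails is that $\w_*$ be an \emph{isolated} local minimum; here I would use the relaxed version announced in the second Remark, which splits off the Hessian's zero-eigenvalue directions. Concretely, evaluate the loss Hessian of the enlarged network via Lemma~\ref{lma:hessian}: with $D_i^{r,s}\equiv 0$ all couplings involving the last coordinate block $[v_{s,-1}-v_{s,r}]_s$ drop out, leaving a block form in which (i) the image of the smaller network's parameters carries the smaller Hessian --- positive semidefinite with kernel the tangent space of $R_k$ --- (ii) the block $\alpha\beta\,[B^r_{i,j}]_{i,j}$ appears, and (iii) the direction $[v_{s,-1}-v_{s,r}]_s$ contributes only a zero block. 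One then checks that the kernel of the full Hessian is spanned by honestly flat directions --- the tangent of $R_k$, the $\lambda$-direction $[v_{s,-1}-v_{s,r}]_s$, and the permutation symmetry --- along each of which $\Loss$ is locally constant because $\gamma^r_\lambda$ leaves the network function unchanged; combined with positive definiteness of $[B^r_{i,j}]_{i,j}$ this yields a local minimum rather than a saddle, and $D^{r,s}\equiv 0$ once more promotes the set swept out as $\lambda$ ranges over $(0,1)$, together with those flat directions, to a non-attracting region $R_{k+1}$. Non-attractiveness also transfers directly: a non-increasing escape path $\w'(t)$ near $R_k$ in the smaller network lifts to $\gamma^r_\lambda(\w'(t))$, which realizes the same functions and hence has an identical, strictly decreasing loss profile. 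Since $\gamma^r_\lambda$ does not alter $f(x_\alpha)-y_\alpha$, the value of $R_{k+1}$ is still $c>0$, so its minima remain suboptimal.

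The substantive point is that the definiteness of $[B^r_{i,j}]_{i,j}$ survives. With one-dimensional input this matrix is the scalar $B^r_{1,1}=v^*_{\parm,r}\cdot\sum_\alpha (f(x_\alpha)-y_\alpha)\,\sigma''(\n{1,r}{x_\alpha})\,x_{\alpha,1}^2$. Every hidden neuron of the inductively built 1---$k$---1 network descends, through a chain of splittings by $\gamma$, from the single hidden neuron of the original 1---1---1 network, so all hidden neurons share the identical pre-activation $\n{1,r}{x_\alpha}$ and have outgoing weight equal to $v^*_{\parm,1}$ times a product of factors in $(0,1)$; since the residuals are likewise unchanged, $B^r_{1,1}$ equals a strictly positive multiple of the base quantity $B^1_{1,1}$, which is positive by the assumption made in the 1---1---1 construction. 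Hence the positive-definiteness hypothesis, and with $\lambda\in(0,1)$ also the sign requirement $\lambda\in\mathcal{I}=(0,1)$, holds at every step, and in particular $v^*_{\parm,r}\neq 0$. Iterating until $k\geq\max\{2,N\}$ yields an extremely wide two-layer sigmoid network with a non-attracting region of suboptimal local minima, which is the claim. I expect the main obstacle to be the bookkeeping around non-isolatedness in the inductive step --- precisely, verifying that the kernel of the enlarged Hessian is exhausted by directions of locally constant loss, so that positive semidefiniteness really produces a non-attracting region of local minima and not saddle points.
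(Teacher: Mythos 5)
Your proposal follows essentially the same route as the paper: start from the Fukumizu--Amari 1---2---1 suboptimal minimum, iteratively grow the hidden layer via $\gamma_\lambda^r$, track the sign of the scalar $B$ quantity (which picks up positive factors at each step), and address the loss of isolatedness by arguing the Hessian kernel consists only of loss-flat reparameterization directions. The paper fixes $r=1$ and reuses the same $\lambda$ at every step, which streamlines the bookkeeping of $B^{1,(t)}_{1,1}=(1-\lambda)^{t-1}B^{1}_{1,1}$; your version allows an arbitrary split neuron $r$ and arbitrary $\lambda\in(0,1)$, which works for the same reason (every hidden neuron descends from the original, so all share the same pre-activation and have outgoing weight a positive multiple of $v^*_{\parm,1}$). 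The one genuinely different ingredient is your treatment of non-attractiveness: you transfer an escape path inductively via the embedding $\gamma^r_\lambda$, using that it preserves the loss profile, whereas the paper exhibits an escape directly by continuously moving the last-added neuron's $\lambda$ outside $[0,1]$ (constant loss) until the $\alpha\beta B$ block changes sign and one reaches a saddle. Both are valid; the paper's version is more self-contained and immediately exhibits the descent direction from Lemma~\ref{lma:hessian}, while yours cleanly packages non-attractiveness into the induction. The caveat you flag at the end --- exhausting the Hessian kernel by flat directions --- is exactly the point the paper also treats somewhat informally, so your uncertainty there is well-placed but does not indicate a gap beyond what the published argument itself carries.
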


\begin{proof} 
Having already established the existence of parameters for a (small) neural network leading to a suboptimal local minimum, it suffices to note that iteratively adding neurons using Theorem~\ref{thm:fukumizu} is possible. Iteratively at step $t$, we add a neuron $\n{1,-t}{x}$ {(negatively indexed)} to the network by an application of $\gamma_\lambda^1$ with the same $\lambda\in(0,1)${, using the same neuron with index $1$ in each iteration}. The corresponding matrix from Equation~\ref{eq:B} remains a positive number $$B_{1,1}^{1,(t)} = \sum_{\alpha} (f(x_\alpha)-y_\alpha) \cdot (1-\lambda)^{t-1}\cdot {v}^*_{\parm,1}\cdot \sigma''(\n{l,1}{x_\alpha}) \cdot x_{\alpha,1}^2.$$ (We use here that neither $f(x_\alpha)$ nor $\n{l,1}{x_\alpha}$ ever change during this construction { and that the outgoing weight from the hidden neuron with index $1$ changes by multiplication with $(1-\lambda)$}.) {The idea is to apply Theorem~\ref{thm:fukumizu} to guarantee that adding neurons as above always leads to a suboptimal minimum with nonzero loss for the network for $\lambda\in (0,1)$. The only problem is that the addition of previous neurons creates the possibility of reparameterizations that keep the loss unchanged, i.e., the assumption in Theorem~\ref{thm:fukumizu} of having an isolated minimum is violated. However, the computation of Lemma~\ref{lma:hessian} is still valid and to ensure a local minimum from a positive semi-definite Hessian it is only necessary to additionally make sure that no reduction is possible along directions defined by the kernel of the Hessian i.e., the space of eigenvectors of the Hessian with eigenvalue zero. Since all these eigenvectors correspond to reparameterizations that cannot be influenced by the additional weights in and out of newly added neurons, it can be seen that the loss is indeed locally constant into all directions defined by the kernel of the Hessian. In this case, positive semi-definiteness is sufficient to ensure a local minimum.  }

In particular, we may add an arbitrary number of neurons to the hidden layer and make the network extremely wide. Further, a continuous change of the  $\lambda$ belonging to the last added neuron via $\gamma_\lambda$ to a value outside of $[0,1]$ does not change the network function, but leads to a saddle point {(as we introduce a negative factor $\alpha\beta<0\Leftrightarrow \lambda\notin (0,1)$ into the calculation of the Hessian at the position of $\alpha\beta B_{1,1}^{1,(t)}$, see Lemma~\ref{lma:hessian})}. Hence, we found a non-attracting region of suboptimal minima. 
\end{proof}


\begin{remark}
Since we started the construction from a network of size 1---1---1, our constructed example is extremely degenerate: The suboptimal local minima of the wide network have identical incoming weight vectors for each hidden neuron. Obviously, the suboptimality of this parameter setting is easily discovered by inspection of the parameters. Also with proper initialization, the chance of landing in this local minimum is vanishing. 

However, one may also start the construction from a more complex network with a larger network with several hidden neurons. In this case, when adding a few more neurons using $\gamma_\lambda^1$, it is much harder to detect the suboptimality of the parameters from visual inspection. 
\end{remark}

\subsection{Deep Neural Networks}\label{sct:constructionDeep}

According to Theorem~\ref{thm:construction}, for deep networks there is a second condition for the construction of local minima using the map $\gamma_\lambda^r$. Next to positive definiteness of the matrix $B_{i,j}^r$ for some $r$ , we additionally require that $[D_i^{r,s}]_{i,s}=0$ for all $i,s$ and the same $r$.  We consider sufficient conditions for $D_i^{r,s}=0$. 

\begin{restatable}[]{prop}{constructionDeep}\label{prop:constructionDeep}
Suppose we have constructed a critical point of the squared loss of a neural network by starting from a local minimum of a smaller network and by adding a neuron into layer $l$ with index $\n{l,-1}{x}$ by application of the map $\gamma_\lambda^r$ to a neuron $\n{l,r}{x}$. Suppose further that for the outgoing weights $v^*_{s,r}$ of $\n{l,r}{x}$ we have $\sum_s {v}^*_{s,r}\neq 0$ , and suppose that $D_i^{r,s}$ is defined as in Equation~\ref{eq:D}. Then $D_i^{r,s}=0$  if one of the following holds.
\begin{itemize}
\item[(i)] The layer $l$ is the last hidden layer. (This condition includes the case $l=1$ indexing the hidden layer in a two-layer network.)
\item [(ii)] For all $t,t',\alpha$, we have \[\frac{\partial \loss_\alpha}{\partial \n{l+1,t}{x_\alpha}}= \frac{\partial\loss_\alpha}{\partial \n{l+1,t'}{x_\alpha}}.\] 
\item [(iii)] For each $\alpha$ and each $t$, $$\frac{\partial \loss_\alpha}{\partial \n{l+1,t}{x_\alpha}}=0.$$ 
(This condition holds in the case of the weight infinity attractors in the proof to Theorem~\ref{thm:infinity} for $l+1$ the second last layer. It also holds in a global minimum.)
\end{itemize}
\end{restatable}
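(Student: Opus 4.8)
The plan is to extract one linear relation among the numbers $D_i^{r,s}$ from the stationarity of the smaller network and then read off all three cases. Since $([{u}^*_{r,i}]_i,[{v}^*_{s,r}]_s,\wrest^*)$ is a local minimum of the smaller network's loss, it is in particular a critical point, so every first-order partial derivative of that loss vanishes; the decisive choice is the incoming weight $u_{r,i}$ of the regular neuron $\n{l,r}{x}$. Because $u_{r,i}$ affects the loss only through $\n{l,r}{x_\alpha}=\sum_i u_{r,i}\act{l-1,i}{x_\alpha}+u_{r,0}$, the chain rule gives $\partial\loss_\alpha/\partial u_{r,i}=(\partial\loss_\alpha/\partial\n{l,r}{x_\alpha})\cdot\act{l-1,i}{x_\alpha}$, and propagating $\partial\loss_\alpha/\partial\n{l,r}{x_\alpha}$ forward through \emph{all} neurons $s$ of layer $l+1$ (neuron $\n{l,r}{x}$ feeds each of them) gives $\partial\loss_\alpha/\partial\n{l,r}{x_\alpha}=\sum_s(\partial\loss_\alpha/\partial\n{l+1,s}{x_\alpha})\,v^*_{s,r}\,\sigma'(\n{l,r}{x_\alpha})$. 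Summing over the samples and setting the derivative to zero yields, for every $i$,
\[
\sum_{s} v^*_{s,r}\,D_i^{r,s}=0 .
\]
Before using this, I would note, via the ``Important fact'' above, that $\n{l,r}{x_\alpha}$, $\n{l+1,s}{x_\alpha}$, $\act{l-1,i}{x_\alpha}$ and the whole subnetwork from layer $l+1$ onward are left untouched by $\gamma_\lambda^r$; hence the $D_i^{r,s}$ of the larger network coincide with those computed from the smaller one, and the smaller network's stationarity is the right input.

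Given the identity, the three cases are short, and in fact (i) is the special case of (ii) in which layer $l+1$ is the output layer. In case (iii), $\partial\loss_\alpha/\partial\n{l+1,s}{x_\alpha}=0$ for all $s,\alpha$ annihilates every summand of $D_i^{r,s}$, so $D_i^{r,s}=0$ with no further hypotheses used. In case (ii), if $\partial\loss_\alpha/\partial\n{l+1,s}{x_\alpha}$ does not depend on the output coordinate $s$---call this common value $c_\alpha$---then $D_i^{r,s}=\sum_\alpha c_\alpha\,\sigma'(\n{l,r}{x_\alpha})\,\act{l-1,i}{x_\alpha}$ is itself independent of $s$, say equal to $D_i^r$; the identity becomes $D_i^r\cdot\big(\sum_s v^*_{s,r}\big)=0$, and the hypothesis $\sum_s v^*_{s,r}\neq0$ forces $D_i^r=0$. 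For case (i): when $l+1$ is the output layer, $n_{l+1}=n_L=1$, the sum over $s$ degenerates to the single term $v^*_{1,r}D_i^{r,1}=0$, and $v^*_{1,r}=\sum_s v^*_{s,r}\neq0$ gives $D_i^{r,1}=0$; the parenthetical about $l=1$ in a two-layer network merely records that there $\act{0,i}{x_\alpha}=x_{\alpha,i}$, which is why this specializes to the setting of Theorem~\ref{thm:fukumizu}.

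I do not expect a genuine obstacle: the argument is one chain-rule computation plus a bookkeeping remark. The two points needing care are (a) remembering that $\n{l,r}{x}$ feeds every neuron of layer $l+1$, which is exactly what produces the sum over $s$ in the identity rather than a single term, and (b) invoking the ``Important fact'' to be sure that the pre-activations, the outgoing weights $v^*_{s,r}$, and the downstream loss-derivatives entering $D_i^{r,s}$ are inherited unchanged from the smaller network (so that differentiating the smaller network's loss is legitimate). Checking that the three conditions actually hold in the situations claimed is then immediate: case (i) is the single-output-neuron observation; at a global minimum $f(x_\alpha)=y_\alpha$, so $\partial\loss_\alpha/\partial\n{l+1,s}{x_\alpha}=2(f(x_\alpha)-y_\alpha)\,\partial f(x_\alpha)/\partial\n{l+1,s}{x_\alpha}=0$ and case (iii) applies; and the infinity-attractor claim is precisely the vanishing that is established inside the proof of Theorem~\ref{thm:infinity}.
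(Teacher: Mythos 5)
Your proof is correct and follows essentially the same route as the paper: the key step in both is recognizing that $\frac{\partial\loss}{\partial u_{r,i}}=0$ for the smaller (critical-point) network amounts to the linear relation $\sum_s v^*_{s,r}D_i^{r,s}=0$, from which each of the three cases falls out by factoring out $\sum_s v^*_{s,r}\neq 0$ (or, in case (iii), trivially). Your presentation is marginally tidier in stating the identity once and then specializing—and in noting that (i) is the single-output special case of (ii)—but the underlying argument is identical.
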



The proof of the proposition is contained in Appendix~\ref{app:constructionDeep}, and we apply it to construct a non-attracting region of suboptimal local minima in a deep and wide neural network.

\begin{proof}\textbf{{(Theorem~\ref{thm:existenceInWideAndDeep})}}
{For simplicity of the presentation, we show the existence of local minima for a three-layer neural network, but the same construction naturally generalizes to deeper networks. We begin with a regression network of size $2$---$n_1$---$n_2$---1 for input dimension $n_0=2$, hidden layers of dimension $n_1,n_2$, and the output layer mapping into $\RR$. We use this network to construct a finite data set and train a network of size 2---1---1---1 with one neuron in each hidden layer on this data set to find an isolated local minimum. }

{In Equations~\ref{eq:B} and $\ref{eq:D}$ we have suppressed the choice of the layer to simplify the notation. To distinguish layers here, we write $[B^r_{i,j}(1)]_{i,j}$, $[D_{i}^{r,s}(1)]_{i,s}$ and $[B^r_{i,j}(2)]_{i,j}$, $[D_{i}^{r,s}(2)]_{i,s}$ for the matrices of the first and second hidden layer respectively.  We assume that the local minimum satisfies that $[B^1_{i,j}(1)]_{i,j}$ is positive definite and $B^1_{1,1}(2)>0$. Existence of such an example is easily verified empirically and we provide an example in the following section. 
Starting with the first hidden layer, the condition of Proposition~\ref{prop:constructionDeep}~(ii) is trivially satisfied as there is only one neuron in the second layer. Hence $D_1^{1,1}(1)=0$ and we can we can iteratively add arbitrarily many neurons to the first hidden layer by repetitively applying $\gamma_\lambda^1$ on $\n{1,1}{x}$ and Theorem~\ref{thm:construction} (with $\lambda \in (0,1)$). (Precisely as in the iterative application of Theorem~\ref{thm:fukumizu} in the proof of Theorem~\ref{thm:twoLayerWide}, we can see that in this iterative process the assumption of starting from an isolated local minimum can be relaxed.) The relevant matrices are given at each step $t$ by $[B^{1,(t)}_{i,j}(1)]_{i,j}=(1-\lambda)^{t-1} [B^{1,(1)}_{i,j}(1)]_{i,j}$ which is positive definite, and $D_1^{1,1}(1)$ does not change and remains zero. We can therefore find a local minimum in a network of size 2---$m_1$---1---1 for any $m_1$. }

{For the second layer, the matrix $B(2):=[B^1_{i,j}(2)]_{i,j}$ equals the constant matrix of size $(m_1\times m_1)$ with entry the positive number $B_{1,1}^1(2)$ calculated before adding neurons to the first layer and is positive semidefinite. Now, Proposition~\ref{prop:constructionDeep}~(i) applies to show that we can apply Theorem~\ref{thm:construction} to iteratively add arbitrarily many neurons to the second hidden layer using $\gamma_\lambda^1$ on $\n{2,1}{x}$. When adding the $t\text{-th}$ neuron to the second layer, we have  $[B^{1,(t)}_{i,j}(2)]_{i,j}= (1-\lambda)^{t-1}  B(2)$, which is positive semidefinite. Since the matrix $B(2)$ is only positive semidefinite and not positive definite, we again need special consideration to the newly added degenerate directions: The first layer contains identical neurons from enlarging that layer. The new degenerate directions are consequently given by increasing a weight connecting a new neuron in the second layer with one of these identical neurons in the first layer and decreasing the weight from the second identical neuron such that their contribution cancels out. 
This defines another reparameterization that leaves the loss constant and which does not interfere with any of the other reparameterizations. This excludes the possibility of loss reduction along degenerate directions and positive semi-definiteness is sufficient to guarantee a local minimum. }

{This leads to a local minimum in a network of size 2---$m_1$---$m_2$---1. If $m_1\geq n_1$,  $m_2\geq n_2$, then we know the local minimum to be suboptimal by construction. Adding sufficiently many neurons, we can construct the network to be extremely wide. A continuous change of the parameter $\lambda\in (0,1)$ used for adding the $t\text{-th}$ neuron,  changes the sign of the corresponding B-matrix for negative $\lambda$, which offers a direction in weight space for further reduction of the loss. This shows the existence of a non-attracting region in a deep and wide neural network, proving Theorem~\ref{thm:existenceInWideAndDeep}. }
\end{proof}

\subsection{Experiment for Deep Neural Networks}

\begin{figure*}[!t]
\centering
\minipage{0.49\textwidth}
\subfloat[]{\includegraphics[width=2.7in]{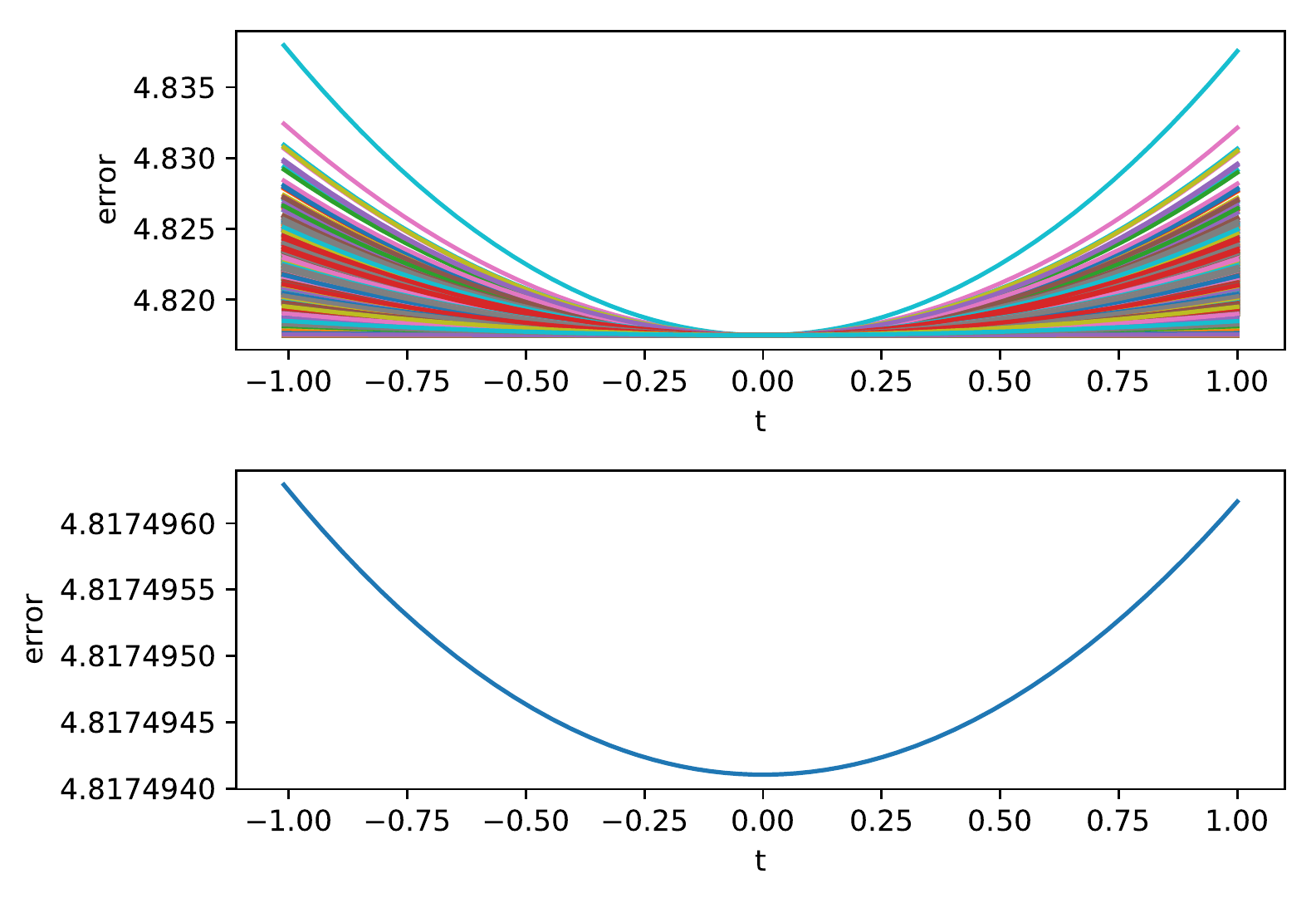}}
\hfill \subfloat[]{\includegraphics[width=2.7in]{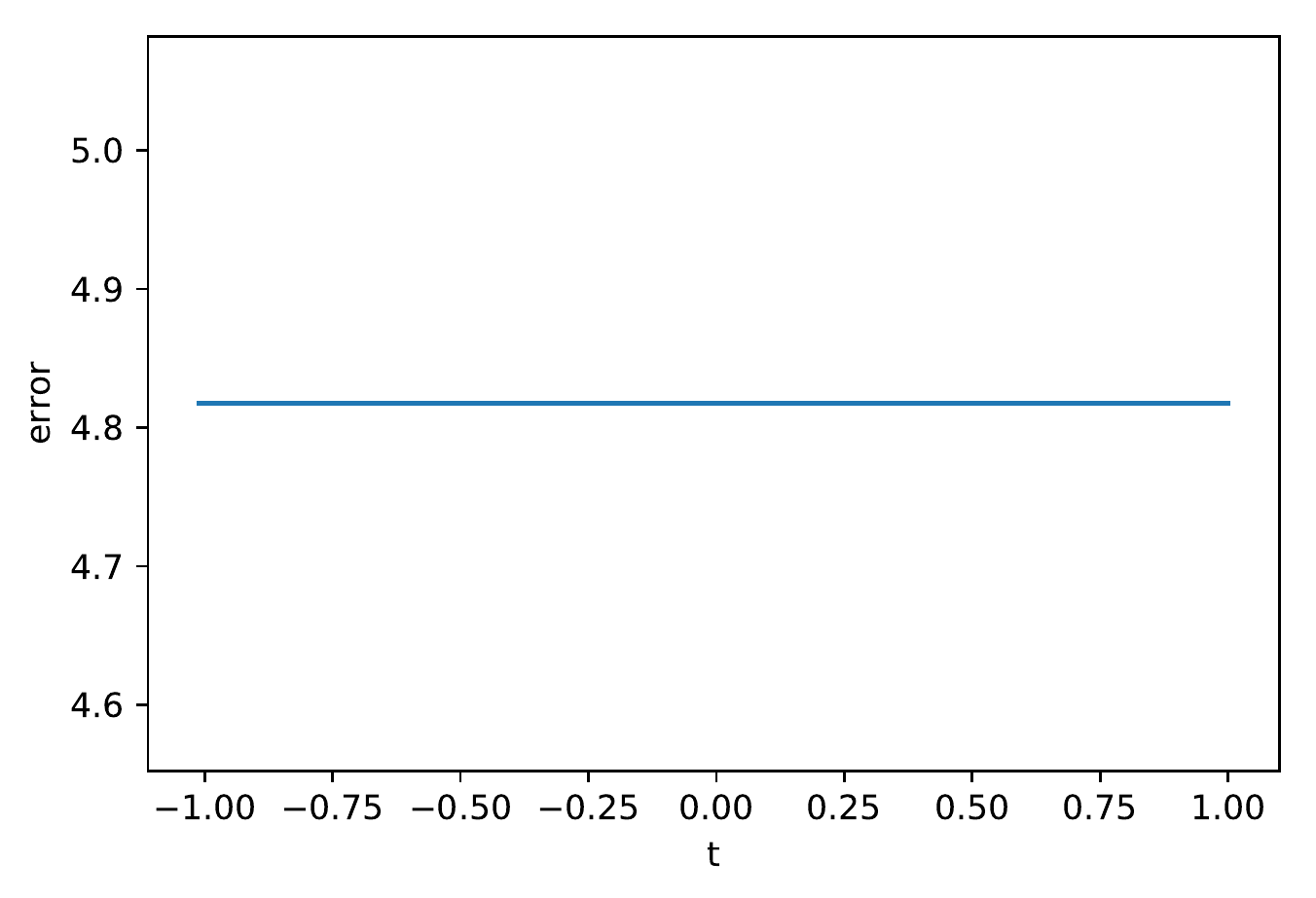}}%
\endminipage\hfill
\minipage{0.49\textwidth}
\subfloat[]{\includegraphics[width=2.7in]{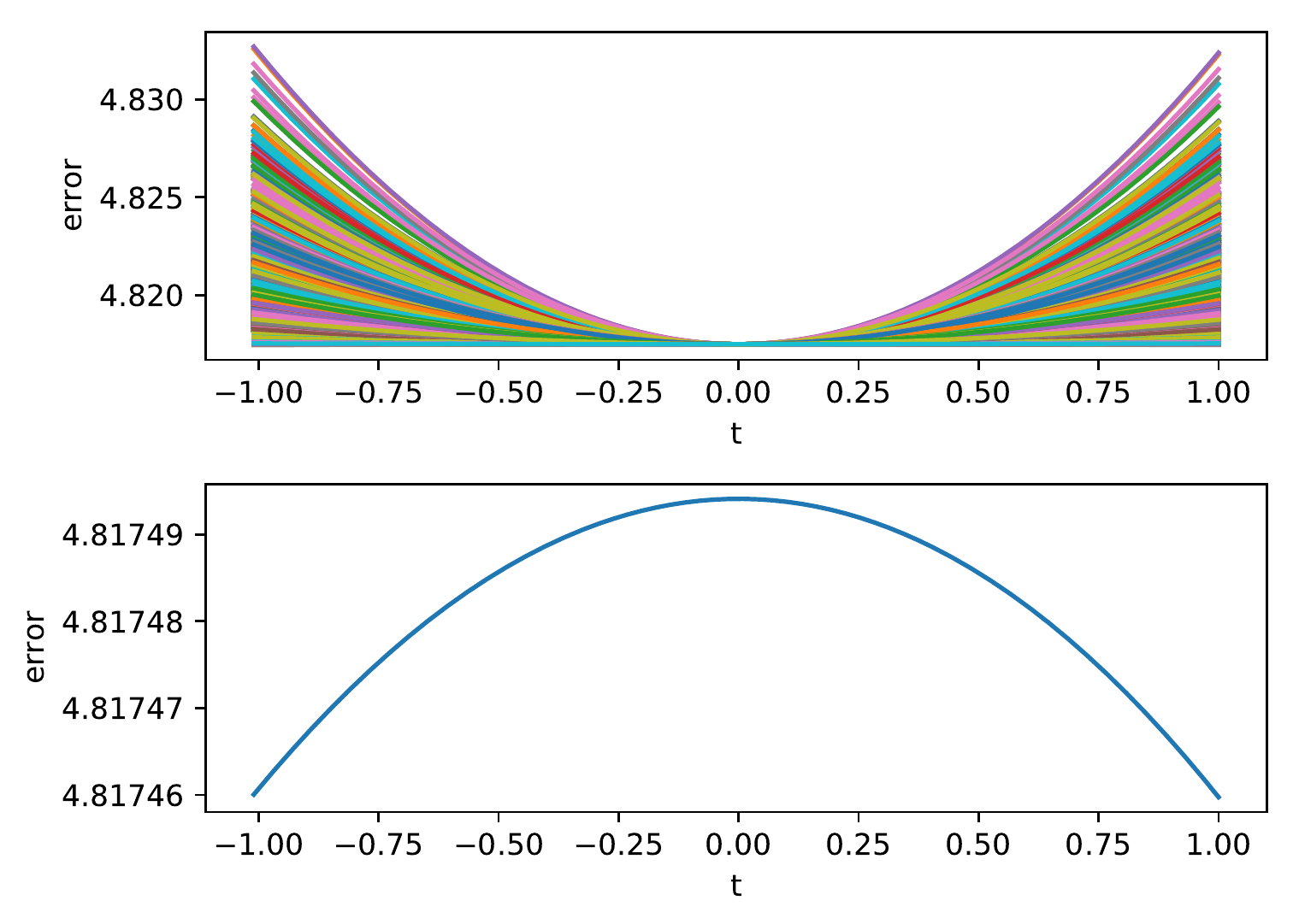}}%
\hfill \subfloat[]{\includegraphics[width=2.7in]{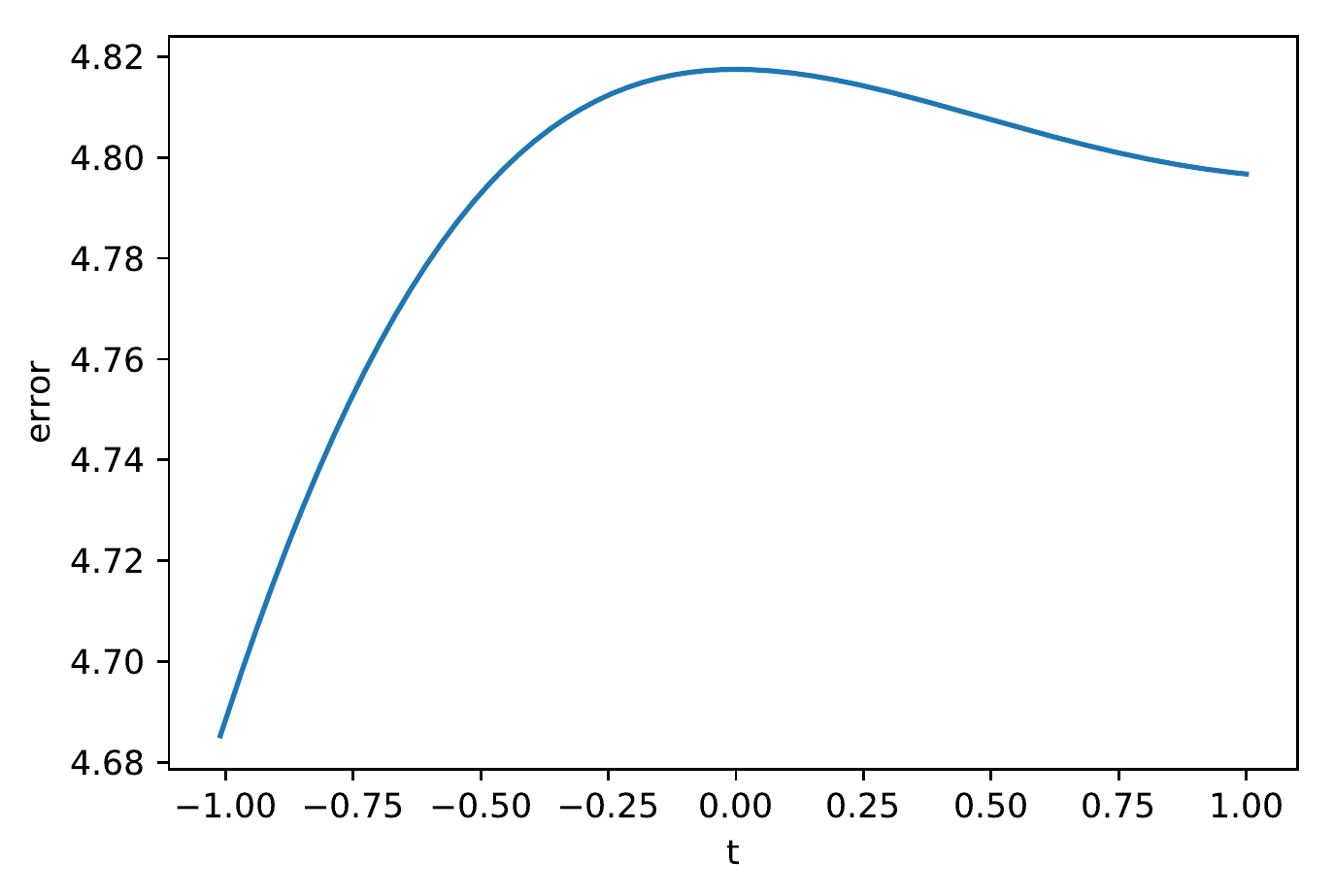}}%
\endminipage\hfill
 \caption{A non-attracting region of local minima in a deep and wide neural network.  (a) Local minimum. \textit{Top:} Loss evolution for 5000 random directions. \textit{Bottom:} Minimum over sampled directions. (b) Path along a degenerate direction to a saddle point. (c) Saddle point with the same loss value. \textit{Top:} Loss evolution for 5000 random directions. \textit{Bottom:} Minimum over sampled directions. (d) Error evolution along an analytically known direction of descent at the saddle point.}\label{fig:experiment}
\end{figure*} 

{We empirically validate the construction of a suboptimal local minimum in a deep and extremely wide neural network as given in the proof of Theorem~\ref{thm:existenceInWideAndDeep}.\footnote{The accompanying code can be found at \url{https://github.com/petzkahe/nonattracting_regions_of_local_minima.git}.} We start by considering a three-layer network of size 2---5---5---1, i.e., we have two input dimensions, one output dimension and hidden layers of five neurons. We use its network function $f$ to create a data set of 20 samples $\left (x_\alpha,f(x_\alpha)\right )$, hence we know that a network of size 2---5---5---1 can attain zero loss. }

{We initialize a new neural network of size 2---1---1---1 and train it until convergence to find a local minimum of total loss (sum over the $20$ data points) of $4.817$. We check for positive definiteness of the matrix $B_{i,j}^1(1)$ (eigenvalues here given by $0.0182$, $0.0004$) and positivity of $B_{1,1}^1(2)$ (here $0.0757$). Following the proof to Theorem~\ref{thm:existenceInWideAndDeep}, we add twenty neurons to both hidden layers to construct a local minimum in an extremely wide network of size 2---21---21---1. The local minimum must be suboptimal by construction of the data set.  Experimentally, we show not only that indeed we end up with a suboptimal minimum, but also that it belongs to a non-attracting region of local minima. }

{\figurename~\ref{fig:experiment} shows results of this construction. The plot in (a) shows the loss in the neighborhood of the local minimum in parameter space. The top image shows the loss curve into 5000 randomly generated directions, the bottom displays the minimal loss over all these directions. Evidently, we were not able to find a direction in parameter space that allows us to reduce the loss. The plot in (b) shows the change of loss along one of the degenerate directions that leads to a saddle point, which is shown in plot (c). Again, the top image shows the loss curve into 5000 randomly generated directions, and the bottom displays the minimum loss over all these directions. Most random directions in parameter space lead to an increase in loss, but the minimum loss shows the existence of directions that lead to a reduction of loss. In such a saddle point, we know a direction for loss reduction at this saddle point from Lemma~\ref{lma:hessian}.  The plot in (d) shows a significant reduction in loss for the analytically known direction. Being able to reach a saddle point from a local minimum by a path of non-increasing loss shows that we found a non-attracting region of local minima.}


\subsection{A Discussion of Limitations and of the Loss of Non-attracting Regions of Suboptimal Minima}
{We theoretically proved the possibility of suboptimal local minima in deep and wide networks and empirically validated their existence. Due to the degeneracy of the constructed examples, the following questions on consequences in practice remain unanswered. (i) How frequent are suboptimal local minima with high loss, and (ii) how degenerate must they be to have high loss?}

{Suppose we aim to find a suboptimal local minimum using the above construction in a network consisting of $L$ layers with hidden dimensions $n_l$, and we want the local minimum to be less degenerate than the above examples while still having considerably high loss. We therefore want to start from a local minimum in a smaller network with hidden dimension $\nu_i$ and add only a few neurons to each of the layers. We want each $\nu_i$ small enough to find a (non-degenerate) local minimum in the smaller network with large loss, but not too small so that the addition of several neurons renders the local minimum degenerate. For the construction to work, we need to find in each layer $l$ a neuron with index $r_l$ such that the $n_{l-1}$-many eigenvalues of the corresponding $B$-matrix are either all positive or all negative so that the matrix is either positive definite or negative definite (determining a suitable choice for $\lambda$ according to Theorem~\ref{thm:construction}). In addition, the same neuron must satisfy $D_i^{r_l,s}=0$ for all $i,s$, adding $n_{l-1}n_{l+1}$ many conditions.  To find such an example is difficult both theoretically (the sufficient conditions from Proposition~\ref{prop:constructionDeep} for a vanishing D-matrix are rather strong) as well as empirically. Whenever any of the necessary conditions is violated, then we cannot use the above construction to find a local minimum in a larger network. In other words, whenever we find a local minimum in the smaller network such that no neuron exists that satisfies all the necessary conditions (positive definiteness of $B$ and zero $D$-matrix), then the above construction leads to a saddle point. While these saddle points can be close to being a local minimum (in the sense that slight perturbations of the parameters yields a higher loss in most cases), there is at least one direction in parameter space with negative curvature of the loss surface. }

{It is therefore conceivable that suboptimal local minima with high loss are extremely rare in practical applications with sufficiently large networks, which is in line with empirical observations. From this perspective, our results suggest that for sufficiently wide deep networks \textit{almost} all local minima are global, but the existence of degenerate local minima makes a general statement impossible.}

\section{Proving the Existence of a Non-increasing Path to the Global Minimum}\label{sct:path}


In the previous section we showed the existence of non-attracting regions of local minima and Theorem~\ref{thm:infinity} showed the existence of generalized local minima at infinity that can cause divergent parameters during loss reduction. These type of local minima do not rule out the possibility of non-increasing paths to the global minimum from almost everywhere in parameter space.  In this section, we sketch the proof to Theorem~\ref{thm:pathToGlobal} illustrated in form of several lemmas, where up to the basic assumptions on the neural network structure as in Section~\ref{sct:problemDef} (with activation function in $\mathcal{A}$), the assumption of one lemma is given by the conclusion of the previous ones. A full proof can be found in Appendix~\ref{app:pathToGlobal}.



We consider vectors that we call activation vectors, different from the activation pattern vectors $\act{l}{x}$ from above. The \textdef{activation vector} at neuron $k$ in layer $l$ are denoted by and $\av^l_k$, and defined by all values at the given neuron for samples $x_\alpha$:
$$ \av^l_k:=[\act{l,k}{x_\alpha}]_\alpha.$$
In other words while we fix $l$ and $x$ for the activation pattern vectors $\act{l}{x}$ and let $k$ run over its possible values, we fix $l$ and $k$ for the activation vectors $\av^l_k$ and let $x$ run over its samples $x_\alpha$ in the data set. {We denote by $\av^l$ the matrix $[\av^l_k]_k=[\act{l,k}{x_\alpha}]_{k,\alpha}$ of size $(n_l \times N)$ containing the activations values of all neurons and samples at layer $l$. Similarly, we denote by $\nv^l$ the matrix $\nv^l=[\n{l,k}{x_\alpha}]_{k,\alpha}$  of size $(n_l \times N)$ containing the pre-activation neuron values for all neurons and samples at layer $l$}

The \textdef{first step} of the proof is to use the freedom given by $\epsilon>0$ to change the starting point in parameter space to satisfy that the activation vectors {$\av_k^{l^*}$ of the extremely wide layer $l^*$} span the entire space $\RR ^N$.

\begin{restatable}{lem}{linearIndependence}\label{lma:linearIndependence}
\citep[Corollary 4.5]{NguyenHein}
For each choice of parameters $\w$ and all $\epsilon>0$ there is $\w'$ such that (i) $||\w-\w'||<\epsilon$, (ii) the activation vectors {$\av^{l^*}_k$} of the extremely wide layer {$l^*$} (containing more neurons than the number of training samples $N$) at parameters $\w'$ satisfy {$$\spn_k \av^{l^*}_k=\RR ^N,$$} {and (iii) the weight matrices $(\w')^l$ have full rank for all $l> l^*+1$.}
\end{restatable}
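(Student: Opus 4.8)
\emph{Proof idea.} The plan is a genericity argument resting on the real-analyticity of $\sigma$: each of the conditions (ii), (iii) fails only on a nowhere-dense set, so any given $\w$ can be pushed into the ``good'' set by an arbitrarily small move. Condition (iii) is easy and decoupled from the rest, since the parameters $\w^l$ with $l>l^*+1$ do not influence the activation vectors $\av^{l^*}_k$ at all: the matrices of a fixed shape that fail to have full rank form the common zero locus of finitely many maximal-minor polynomials, hence a nowhere-dense set, so each such $\w^l$ may be replaced by a full-rank matrix within distance $\epsilon/(2L)$ of it. It therefore remains to secure (i) and (ii) by perturbing only layers $1,\dots,l^*$.

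For (ii), fix an $N$-element index set $S$ of neurons of layer $l^*$ and set $\mu_S := \det\big[\act{l^*,k}{x_\alpha}\big]_{k\in S,\ \alpha}$, which is a real-analytic function of $(\w^1,\dots,\w^{l^*})$ because it is a determinant of compositions of the analytic map $\sigma$ with affine maps. One has $\spn_k\av^{l^*}_k=\RR^N$ precisely when $\mu_S\neq 0$ for some $S$. As the parameter space of the first $l^*$ layers is a connected Euclidean space, it then suffices to exhibit \emph{one} parameter configuration at which some $\mu_S$ is nonzero; this forces $\mu_S\not\equiv 0$, so its zero set is closed with empty interior, and any given $\w$ admits a $\w'$ with $\|\w-\w'\|<\epsilon/2$ and $\mu_S(\w')\neq 0$. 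Combining with the perturbation for (iii) yields $\w'$ with $\|\w-\w'\|<\epsilon$ satisfying (i)--(iii).

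Exhibiting such a configuration is the heart of the argument, and I would build it layer by layer. First, the activation patterns $\act{l}{x_\alpha}$, $1\le\alpha\le N$, can be made pairwise distinct for every $l\le l^*-1$: this holds for $l=0$ since the inputs $x_\alpha$ are pairwise distinct, and if it holds at layer $l-1$ then $\act{l}{x_\alpha}=\act{l}{x_\beta}$ forces $\w^l\big(\act{l-1}{x_\alpha}-\act{l-1}{x_\beta}\big)=0$ by injectivity of $\sigma$; since $\act{l-1}{x_\alpha}-\act{l-1}{x_\beta}\neq 0$, this equation fails for $\w^l$ outside a finite union of proper linear subspaces, so a generic $\w^l$ keeps all patterns distinct at layer $l$. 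Fix such weights for layers $1,\dots,l^*-1$, write $z_\alpha:=\act{l^*-1}{x_\alpha}$ (pairwise distinct), and pick $w$ with $s_\alpha:=\langle w,z_\alpha\rangle$ pairwise distinct (avoiding a finite union of hyperplanes). Give $N$ of the neurons of layer $l^*$ the incoming weight vector $w$ and biases $b_1,\dots,b_N$; then the corresponding $N\times N$ block of $\av^{l^*}$ is $\big[\sigma(s_\alpha+b_k)\big]_{k,\alpha}$. The functions $b\mapsto\sigma(s_\alpha+b)$, $\alpha=1,\dots,N$, are linearly independent on $\RR$: for the sigmoid, $\sum_\alpha\lambda_\alpha\sigma(s_\alpha+b)\equiv 0$ becomes, after $u=e^{-b}$, a rational function $\sum_\alpha\lambda_\alpha/(1+e^{-s_\alpha}u)$ with distinct simple poles, forcing all $\lambda_\alpha=0$; for a general $\sigma\in\mathcal{A}$ the same conclusion follows from real-analyticity of $\sigma$. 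A short induction (expanding the determinant along the last row against these linearly independent functions) then produces $b_1,\dots,b_N$ making $\big[\sigma(s_\alpha+b_k)\big]_{k,\alpha}$ invertible, i.e.\ $\mu_S\neq 0$ for the chosen $S$.

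The main obstacle is precisely this last step: forcing a $\sigma$-dependent $N\times N$ determinant to be genuinely non-vanishing somewhere, and doing so uniformly over the class $\mathcal{A}$ rather than only for the sigmoid (the partial-fractions computation is sigmoid-specific, so for a general $\sigma$ one must draw the non-degeneracy out of analyticity, e.g.\ via a Wronskian or a Taylor-coefficient argument for $b\mapsto\sigma(s+b)$). Everything else --- the reduction of (iii), the openness of (ii), the layer-by-layer distinctness, and the passage from ``not identically zero'' to ``perturb into the complement'' --- is routine analyticity bookkeeping.
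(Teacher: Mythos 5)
The paper does not prove this lemma itself: it is quoted verbatim from \citet[Cor.~4.5]{NguyenHein}, and the appendix (Section~B-D) merely restates it before moving on to Lemmas~\ref{lma:ctsPath1}--\ref{lma:finalPath}. Your proposal therefore attempts a self-contained proof, and its skeleton --- genericity via real-analyticity of the minor $\mu_S$, decoupled from a routine full-rank perturbation of the $\w^l$ with $l>l^*+1$ --- is the correct one and is in the spirit of the cited source.

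The one real gap is exactly the one you flag yourself: linear independence of the translates $b\mapsto\sigma(s_\alpha+b)$ for a general $\sigma\in\mathcal{A}$. You are right to be uneasy, because real-analyticity alone is not enough (e.g.\ $\sin$ is entire yet $\sin(b)+\sin(b+\pi)\equiv0$); the extra structure of $\mathcal{A}$ must enter. It does, and cleanly: strict monotonicity and boundedness give that $\sigma'$ is strictly positive with $\int_{\RR}\sigma'=d-c>0$, so $\sigma'\in L^1(\RR)$ and its Fourier transform $\widehat{\sigma'}$ is continuous with $\widehat{\sigma'}(0)=d-c\ne 0$. If $\sum_\alpha\lambda_\alpha\sigma(s_\alpha+b)\equiv0$ then $\sum_\alpha\lambda_\alpha\sigma'(s_\alpha+b)\equiv0$, and taking Fourier transforms in $b$ yields
\begin{equation*}
\widehat{\sigma'}(\xi)\,\sum_\alpha\lambda_\alpha e^{\,i s_\alpha\xi}=0\quad\text{for all }\xi.
\end{equation*}
Since $\widehat{\sigma'}$ is nonzero on a neighborhood of the origin, the exponential sum vanishes there, hence (being entire in $\xi$) vanishes identically, which forces $\lambda_\alpha=0$ for all $\alpha$ because $e^{is_\alpha\xi}$ with distinct $s_\alpha$ are linearly independent. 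This fills the ``main obstacle'' uniformly over $\mathcal{A}$ and subsumes your sigmoid-specific partial-fractions computation. With that inserted, the remaining steps of your proposal --- perturbing the deep weight matrices to full rank without touching $\av^{l^*}$, establishing pairwise distinctness of $\act{l}{x_\alpha}$ layer by layer, the determinant induction producing the biases $b_1,\dots,b_N$, and the passage from ``$\mu_S\not\equiv0$'' to an $\epsilon$-perturbation into its non-vanishing locus --- are all sound, and the $\epsilon$-bookkeeping closes correctly.
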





{The \textdef{second step} of the proof is to guarantee that we can then induce any continuous change of activation vectors in layer $l^*+1$  by suitable paths in the parameter space changing only the weights of the same layer.} The following two lemmas ensure exactly that. We first consider pre-activation values and then consider the application of the activation function. We slightly abuse notation in the statement when adding a vector to a matrix, which shall mean the addition of the vector to all columns of the matrix.


\begin{figure*}[!t]
\centering
\subfloat{
\begin{tabular}{@{}c@{}}
    \includegraphics[width=2.7in]{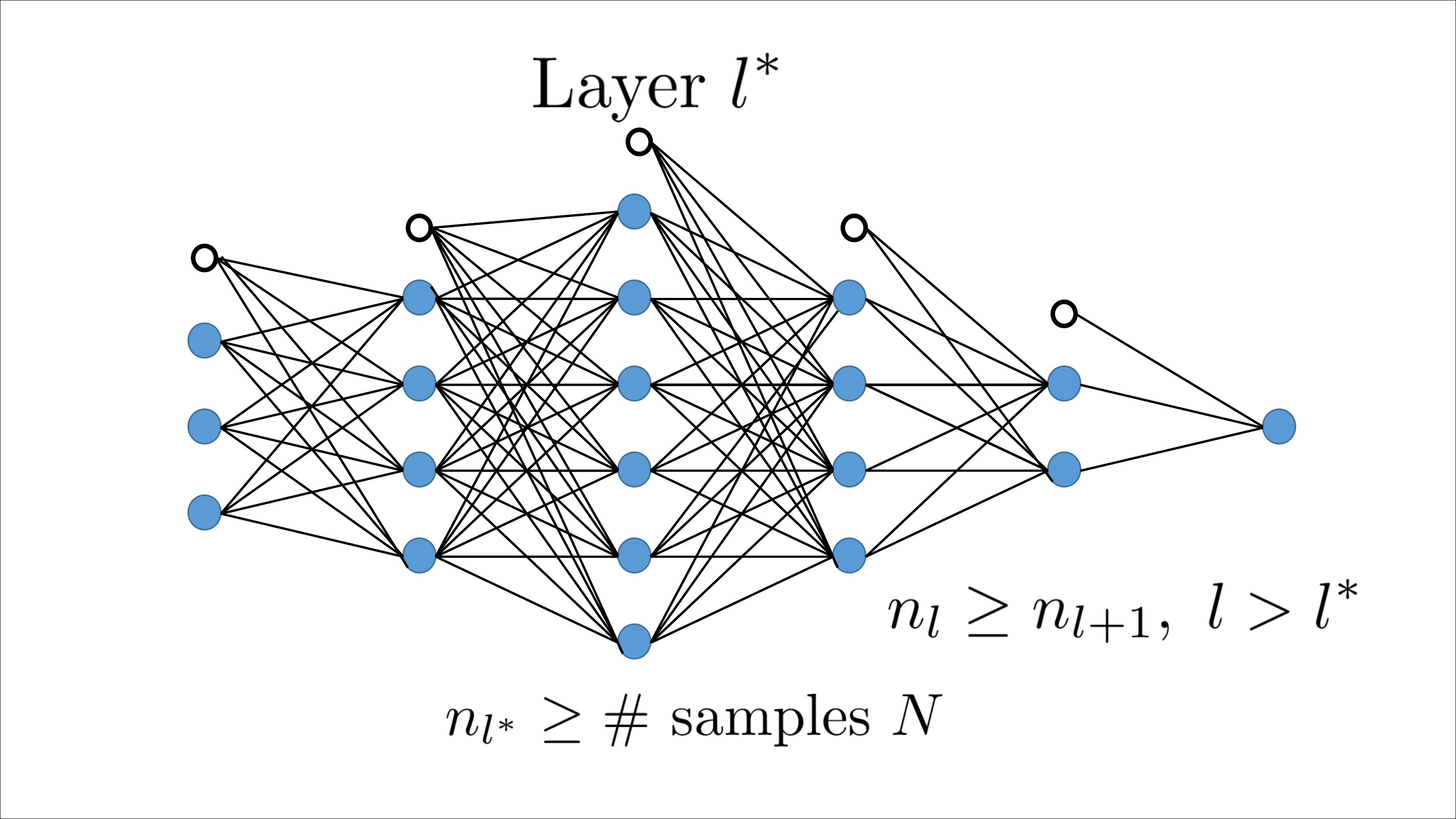} \\[\abovecaptionskip]
    \footnotesize Theorem~\ref{thm:pathToGlobal}: Visualization of considered  \\\footnotesize  neural network architecture. 
    \end{tabular}}
\hfil
\subfloat{
\begin{tabular}{@{}c@{}}
    \includegraphics[width=2.7in]{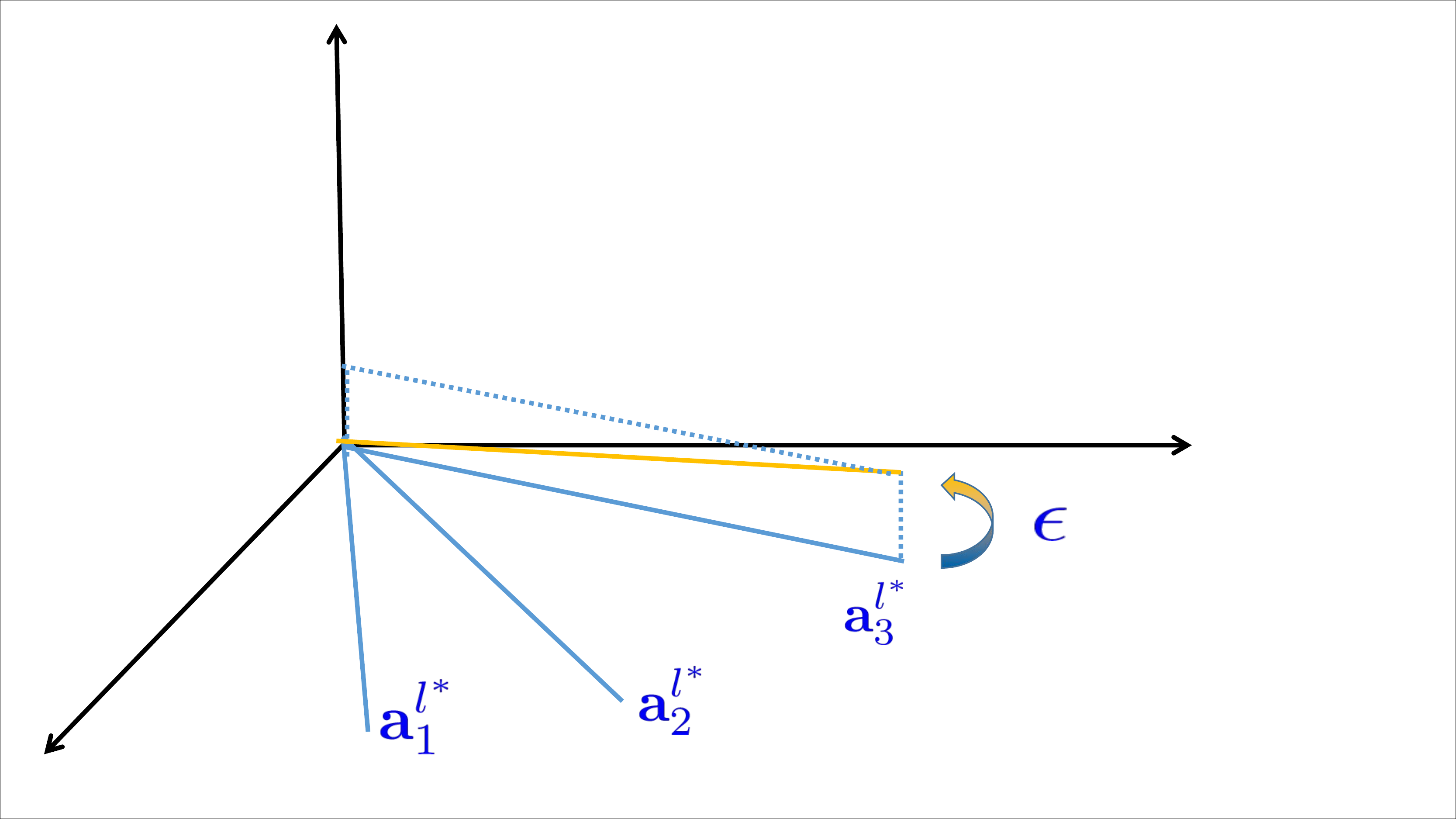} \\[\abovecaptionskip]
    \footnotesize Lemma~\ref{lma:linearIndependence}: 
    An $\epsilon$-change guarantees \\\footnotesize linear independence.
  \end{tabular}}
\hfil
\subfloat{
\begin{tabular}{@{}c@{}}
    \includegraphics[width=2.7in]{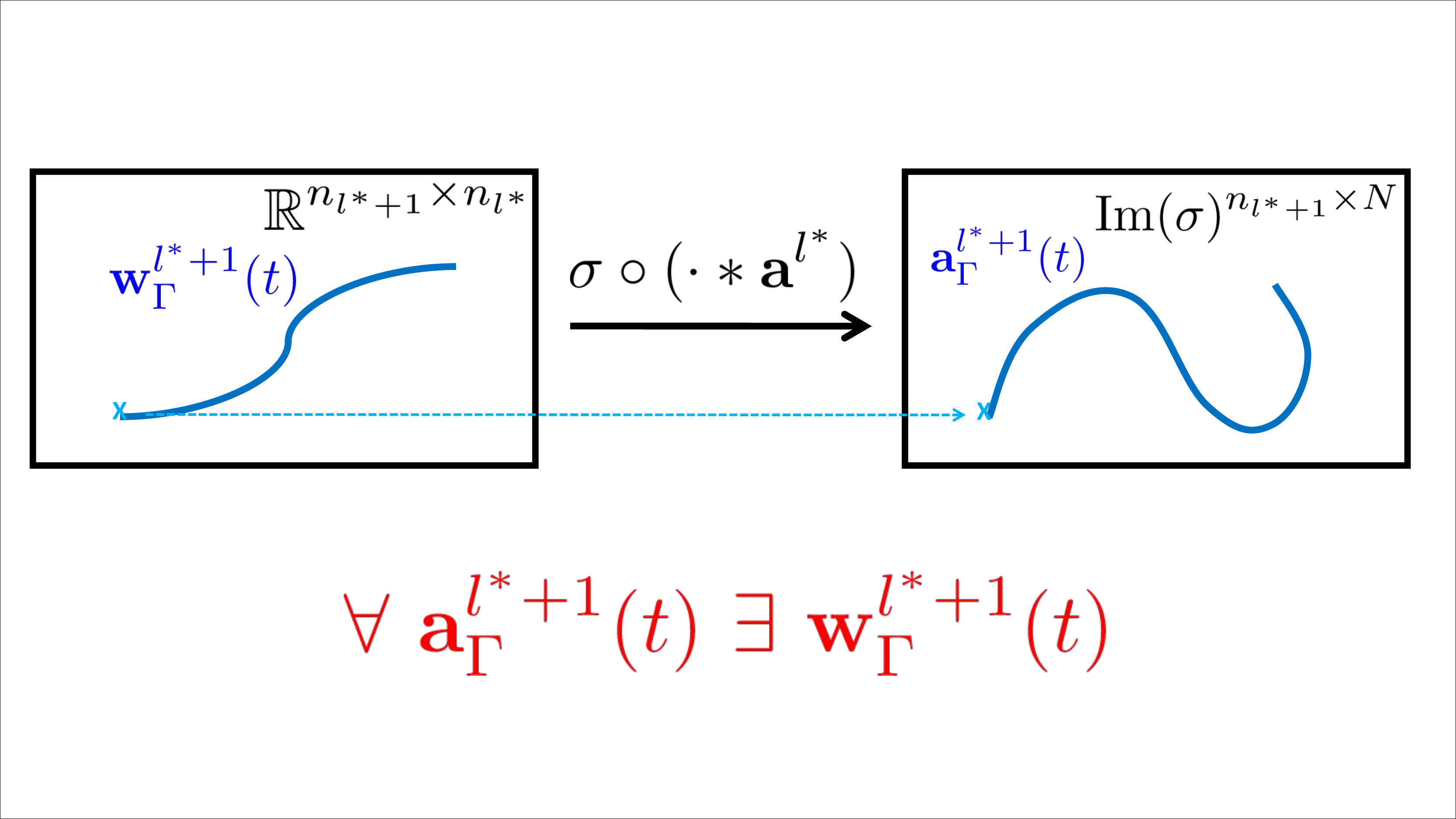} \\[\abovecaptionskip]
    \footnotesize Lemma~\ref{lma:ctsPath1}\&\ref{lma:ctsPath2}: 
    Realizing paths of activation\\\footnotesize  vectors by paths in parameter space 
  \end{tabular}}
\hfil
\subfloat{
\begin{tabular}{@{}c@{}}
    \includegraphics[width=2.7in]{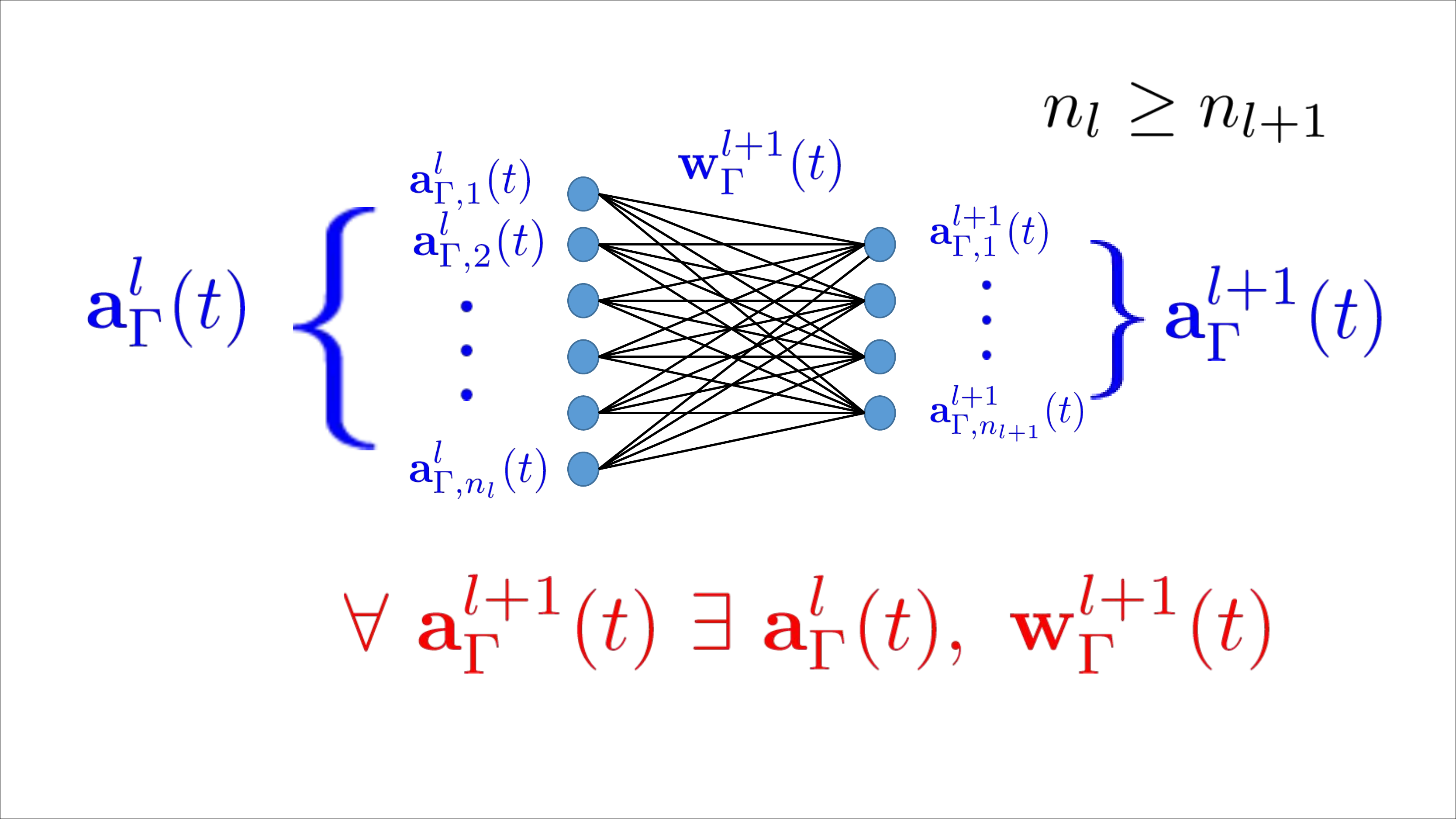} \\[\abovecaptionskip]
    \footnotesize Lemma~\ref{lma:ctsPathInductive}\&\ref{lma:ctsPath2}: 
    Realizing paths of activation \\\footnotesize vectors for decreasing hidden dimensions.
  \end{tabular}}
\caption{Non-increasing paths to the global minimum  exist from almost everywhere. Visualization of proof ideas. }
\label{fig:Dstatistics}
\end{figure*}


\begin{restatable}{lem}{ctsPathOne}\label{lma:ctsPath1}
Assume that in the extremely wide layer {$l^*$} we have that the activation vectors at a set of parameters $\w$ satisfy {$\spn_k[\av^{l^*}_k] =\RR ^N.$} Then, for any continuous path $\nv^{l^*+1}_\Gamma:[0,1]\rightarrow \RR ^{ n_{l^*+1}\times N}$ with starting point $\nv^{l^*+1}_\Gamma(0)=\nv^{l^*+1}=\w^{l^*+1} \cdot \av^{l^*}+\w_0^{l^*+1},$ 
there is a continuous path of parameters {$\w_\Gamma^{l^*+1}:[0,1]\rightarrow \RR ^{n_{l^*+1} \times n_{l^*}}$ of the $(l^*+1)$-th layer with $\w_\Gamma^{l^*+1}(0)=\w^{l^*+1}$} and such that
$$\nv^{l^*+1}_\Gamma(t)=  \w_\Gamma^{l^*+1}(t) \cdot \av^{l^*}+\w_0^{l^*+1}. $$
\end{restatable}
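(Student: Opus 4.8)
\textbf{Proof plan for Lemma~\ref{lma:ctsPath1}.}
The plan is to construct the desired parameter path explicitly by inverting, at each time $t$, the linear map $\w \mapsto \w \cdot \av^{l^*}$ using a fixed right-inverse of the activation matrix $\av^{l^*}$. First I would observe that the hypothesis $\spn_k[\av^{l^*}_k]=\RR^N$ means precisely that the matrix $\av^{l^*}$, which has size $(n_{l^*}\times N)$ with $n_{l^*}>N$, has rank $N$, i.e. its rows span $\RR^N$, equivalently $\av^{l^*}$ as a linear map $\RR^{n_{l^*}}\to\RR^N$ (acting on columns) is surjective. Hence $\av^{l^*}$ admits a right-inverse: choose any fixed matrix $P\in\RR^{n_{l^*}\times N}$ with $\av^{l^*}\cdot P = I_N$ (for instance the Moore--Penrose pseudoinverse $P=(\av^{l^*})^{\top}(\av^{l^*}(\av^{l^*})^{\top})^{-1}$, which is well defined since $\av^{l^*}(\av^{l^*})^{\top}$ is an invertible $N\times N$ matrix).

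Next I would define the candidate path. Set
$$\w_\Gamma^{l^*+1}(t) := \w^{l^*+1} + \bigl(\nv^{l^*+1}_\Gamma(t) - \nv^{l^*+1}_\Gamma(0)\bigr)\cdot P.$$
Then continuity of $t\mapsto \w_\Gamma^{l^*+1}(t)$ is immediate from continuity of $\nv^{l^*+1}_\Gamma$ and the fixed matrix $P$; at $t=0$ the correction term vanishes, giving $\w_\Gamma^{l^*+1}(0)=\w^{l^*+1}$ as required. Finally one checks the defining identity: since $\av^{l^*}\cdot P = I_N$ and since adding the constant bias column vector $\w_0^{l^*+1}$ to all columns commutes with the computation, we get
$$\w_\Gamma^{l^*+1}(t)\cdot \av^{l^*} + \w_0^{l^*+1} = \w^{l^*+1}\cdot\av^{l^*} + \w_0^{l^*+1} + \bigl(\nv^{l^*+1}_\Gamma(t) - \nv^{l^*+1}_\Gamma(0)\bigr) = \nv^{l^*+1}_\Gamma(0) + \nv^{l^*+1}_\Gamma(t) - \nv^{l^*+1}_\Gamma(0) = \nv^{l^*+1}_\Gamma(t),$$
using the hypothesis $\nv^{l^*+1}_\Gamma(0)=\w^{l^*+1}\cdot\av^{l^*}+\w_0^{l^*+1}$ in the last step but one. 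This completes the construction.

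This lemma is essentially a linear-algebra fact rather than a deep statement, so there is no serious obstacle; the only point requiring a moment of care is justifying that the right-inverse $P$ exists, which is exactly where the extreme-width hypothesis (more neurons than samples) enters, together with the spanning condition guaranteed by Lemma~\ref{lma:linearIndependence}. One should also note that the bias terms $\w_0^{l^*+1}$ are held fixed along the path; all of the required change in pre-activation values is absorbed into the weight matrix, which is legitimate precisely because $\av^{l^*}$ is surjective onto $\RR^N$. The subsequent Lemma~\ref{lma:ctsPath2} will then handle passing from this path of pre-activation matrices $\nv^{l^*+1}_\Gamma$ to a path of post-activation matrices $\av^{l^*+1}_\Gamma$ by applying $\sigma$ coordinatewise and exploiting that $\sigma$ is a bijection onto its image $(c,d)$.
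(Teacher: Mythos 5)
Your strategy is the same as the paper's: absorb the change in pre-activations into a perturbation of $\w^{l^*+1}$ by right-multiplying the difference $\nv^{l^*+1}_\Gamma(t)-\nv^{l^*+1}_\Gamma(0)$ against a fixed one-sided inverse of $\av^{l^*}$, so that nothing else in the network needs to move. The paper instantiates this by choosing an invertible $N\times N$ submatrix $\bar A$ of $\av^{l^*}$, setting $\omega(t)=\tilde\nv_\Gamma(t)\bar A^{-1}$, and zero-padding $\omega(t)$ into a full $n_{l^*+1}\times n_{l^*}$ matrix; you instead suggest a pseudo-inverse. These are interchangeable choices of the same matrix-valued object, so conceptually there is no gap.

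However, the formulas as written are dimensionally inconsistent. With the paper's convention $\av^{l^*}\in\RR^{n_{l^*}\times N}$ (tall, full column rank $N$), the correction term $\bigl(\nv^{l^*+1}_\Gamma(t)-\nv^{l^*+1}_\Gamma(0)\bigr)\cdot P$ must land in $\RR^{n_{l^*+1}\times n_{l^*}}$, which forces $P\in\RR^{N\times n_{l^*}}$, and the identity you then need is $P\cdot\av^{l^*}=I_N$ — a \emph{left}-inverse of $\av^{l^*}$ — not $P\in\RR^{n_{l^*}\times N}$ with $\av^{l^*}P=I_N$, which does not typecheck. Accordingly, the Moore--Penrose formula should read $P=\bigl((\av^{l^*})^{\top}\av^{l^*}\bigr)^{-1}(\av^{l^*})^{\top}$, where $(\av^{l^*})^{\top}\av^{l^*}\in\RR^{N\times N}$ is invertible by the rank hypothesis. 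Your expression $(\av^{l^*})^{\top}\bigl(\av^{l^*}(\av^{l^*})^{\top}\bigr)^{-1}$ is the pseudo-inverse for the wide/full-row-rank case and fails here, since $\av^{l^*}(\av^{l^*})^{\top}\in\RR^{n_{l^*}\times n_{l^*}}$ has rank only $N<n_{l^*}$ and is singular. Once these are corrected, your proof is complete and coincides with the paper's, up to the choice of left-inverse.
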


\begin{restatable}{lem}{ctsPathTwo}\label{lma:ctsPath2}
For all continuous paths $\av_\Gamma(t)$ in $\text{Im}(\sigma)^{n\times N}$, i.e., the $(n\times N)$-fold copy of the image of an activation function $\sigma\in\mathcal{A}$, there is a continuous path $\nv_\Gamma(t)$ in $\RR ^{n\times N}$ such that $\av_\Gamma(t)=\sigma(\nv_\Gamma(t))$ for all $t$.
\end{restatable}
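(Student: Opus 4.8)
The plan is to invert $\sigma$ pointwise and compose. Since $\sigma\in\mathcal{A}$, it is strictly monotonically increasing, analytic (hence continuous), and its image is the open interval $(c,d)$; thus $\sigma\colon\RR\to(c,d)$ is a continuous bijection. A strictly increasing continuous surjection between intervals is a homeomorphism onto its image, so the inverse $\sigma^{-1}\colon(c,d)\to\RR$ is continuous. (Equivalently: $\sigma^{-1}$ is monotone and its range $\RR$ is an interval, and a monotone map whose range is an interval is automatically continuous.)

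Given the continuous path $\av_\Gamma\colon[0,1]\to\text{Im}(\sigma)^{n\times N}=(c,d)^{n\times N}$, I would define $\nv_\Gamma(t)$ to be the $(n\times N)$-matrix obtained by applying $\sigma^{-1}$ to each entry of $\av_\Gamma(t)$. This is well defined because every entry of $\av_\Gamma(t)$ lies in $(c,d)$, the domain of $\sigma^{-1}$. Applying $\sigma^{-1}$ entrywise is a continuous map $(c,d)^{n\times N}\to\RR^{n\times N}$ (a product of copies of the continuous map $\sigma^{-1}$), so $\nv_\Gamma$, being the composition of this map with the continuous path $\av_\Gamma$, is itself a continuous path in $\RR^{n\times N}$. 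Finally, $\sigma(\nv_\Gamma(t))=\av_\Gamma(t)$ for every $t$, since applying $\sigma$ entrywise undoes the entrywise application of $\sigma^{-1}$. Note also $\nv_\Gamma(0)=\sigma^{-1}(\av_\Gamma(0))$, which matches any prescribed starting pre-activation matrix when $\av_\Gamma(0)$ is its image under $\sigma$, so the lemma chains cleanly with Lemma~\ref{lma:ctsPath1}.

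There is no real obstacle here; the only point requiring a word of justification is the continuity of $\sigma^{-1}$, and this is a standard fact that uses exactly the hypotheses defining the class $\mathcal{A}$ — strict monotonicity together with the image being the full interval $(c,d)$. One could streamline the write-up by just citing that a continuous strictly monotone bijection between intervals has continuous inverse, and then the rest is the observation that entrywise application of a continuous scalar function is continuous on the product space.
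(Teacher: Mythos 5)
Your proof is correct and takes essentially the same approach as the paper: the paper simply notes that $\sigma$ (applied entrywise) is invertible with continuous inverse and sets $\nv_\Gamma(t)=\sigma^{-1}(\av_\Gamma(t))$; you supply the standard justification for continuity of $\sigma^{-1}$ from strict monotonicity and the interval image, which the paper leaves implicit.
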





{With activations values $\av^l$ depending on parameters $\w^{\iota}$  of previous layers with index $\iota \leq l$,  we denote this functional dependence by $\av^l(\w)$.   We say that a continuous path $\av^l_\Gamma:[0,1]\rightarrow \text{Im}(\sigma)^{n_l\times N}$ of activation values in the $l$-th layer is \textdef{realized by a path of parameters} $\w_{\Gamma}(t)$, if the path $\w_{\Gamma}(t)$ induces a change of activation values at layer $l$ according to the desired path $\av_\Gamma^l$, i.e., 
$\av^l(\w_\Gamma(t))=\av^l_\Gamma(t)$. Using this terminology, Lemma~\ref{lma:ctsPath1} and~\ref{lma:ctsPath2} show that in a layer following an extremely wide one, any continuous path of activation values can be realized by a path of parameters of the same layer. }

{The \textdef{third step} guarantees that, as long as the sequence of dimensions of subsequent hidden layers never increases, realizability of arbitrary paths in layer $l$ implies realizability of arbitrary paths in layer $l+1$ for both activation and pre-activation values. }

\begin{restatable}{lem}{ctsPathInductive}\label{lma:ctsPathInductive}
{Assume that $n_{l+1}\leq n_l$ and that the weight matrix $\w^{l+1}\in\RR^{n_{l+1}\times n_l}$ has full rank $n_{l+1}$.  Let $\av^l=\av^l(\w)$ and $\nv^{l+1}=\nv^{l+1}(\w)$ be the matrices of activation values at the $l$-th layer and pre-activations at the $(l+1)$-th layer for parameters $\w$ respectively. Then, for any continuous path $\nv_\Gamma^{l+1}:[0,1]\rightarrow \RR^{ n_{l+1}\times N}$ with 
$\nv_\Gamma^{l+1}(0)=\nv^{l+1}=\w^{l+1}\cdot  \av^l+\w^{l+1}_{0},$ there are continuous paths of (full-rank) weight matrices $\w_\Gamma^{l+1}:[0,1]\rightarrow \RR ^{n_{l+1} \times n_{l}}$ and bias $\w_{\Gamma,0}^{l+1} : [0,1]\rightarrow \RR ^{n_{l+1} }$ in the $(l+1)$-th layer and a continuous path $\av_\Gamma^l:[0,1]\rightarrow \text{Im}(\sigma)^{ n_{l}\times N}$ of activation values in the $l$-th layer, such that $\w_\Gamma^{l+1}(0)=\w^{l+1}$, $\w^{l+1}_{\Gamma,0}(0)=\w^{l+1}_0$, $\av_\Gamma^l(0)=\av^l$ and for all $t\in[0,1]$,
$$\nv_\Gamma^{l+1}(t)=  \w_\Gamma^{l+1}(t)\cdot  \av_\Gamma^l(t)+\w^{l+1}_{\Gamma,0}(t).$$}
\end{restatable}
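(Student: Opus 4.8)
The plan is to realize the prescribed path $\nv_\Gamma^{l+1}$ by moving only the $(l+1)$-th layer weights and bias together with the $l$-th layer activations, exploiting that $\w^{l+1}$ has full row rank $n_{l+1}\le n_l$ and hence a right inverse $R$ with $\w^{l+1}R=I_{n_{l+1}}$. The key idea is to keep $\w_\Gamma^{l+1}(t)=\kappa(t)\,\w^{l+1}$ a positive scalar multiple of $\w^{l+1}$ — so it is automatically of full rank $n_{l+1}$ — where $\kappa(t)\ge 1$ is a continuous scalar path with $\kappa(0)=1$ (chosen below), and to keep $\av_\Gamma^l(t)$ hovering near the centre $\mu:=(c+d)/2$ of the box $\mathrm{Im}(\sigma)=(c,d)$, up to a correction of order $\kappa(t)^{-1}$. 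Concretely, with $r_0:=\w_0^{l+1}+\mu\,\w^{l+1}\mathbf{1}$, I would set $\w_{\Gamma,0}^{l+1}(t):=-\kappa(t)\,\mu\,\w^{l+1}\mathbf{1}+r_0$ and
\[
\av_\Gamma^l(t):=\mu\,\mathbf{1}+\kappa(t)^{-1}\bigl(R(\nv_\Gamma^{l+1}(t)-r_0)+K\bigr),
\]
where $K:=\av^l-\mu\,\mathbf{1}-R(\nv^{l+1}-r_0)$ is a fixed matrix (vectors added to matrices columnwise, as in the statement). One checks directly that $\w^{l+1}K=0$, so substituting the ansatz gives $\w_\Gamma^{l+1}(t)\av_\Gamma^l(t)+\w_{\Gamma,0}^{l+1}(t)=\nv_\Gamma^{l+1}(t)$, the two $\kappa(t)\,\mu\,\w^{l+1}\mathbf{1}$ terms cancelling; and since $\kappa(0)=1$ together with the choice of $r_0,K$ one reads off $\w_\Gamma^{l+1}(0)=\w^{l+1}$, $\w_{\Gamma,0}^{l+1}(0)=\w_0^{l+1}$ and $\av_\Gamma^l(0)=\av^l$.

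What remains is to pick $\kappa$ so that $\av_\Gamma^l(t)$ never leaves $\mathrm{Im}(\sigma)^{n_l\times N}$. Writing $Z(t):=R(\nv_\Gamma^{l+1}(t)-r_0)+K$, this matrix-valued path is continuous on the compact interval $[0,1]$, so its entries are uniformly bounded, and $Z(0)=\av^l-\mu\mathbf{1}$ has all entries of absolute value $<(d-c)/2$ because $\av^l\in(c,d)^{n_l\times N}$. Letting $g(t)$ be the largest absolute value of an entry of $Z(t)$ and $\bar g(t):=\max_{0\le s\le t}g(s)$ (continuous, non-decreasing, with $\bar g(0)<(d-c)/2$), pick $\delta\in(0,\ (d-c)/2-\bar g(0)]$ and set $\kappa(t):=\max\bigl(1,\ \bar g(t)/((d-c)/2-\delta)\bigr)$. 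This $\kappa$ is continuous, satisfies $\kappa(0)=1$, and yields $\kappa(t)^{-1}g(t)\le (d-c)/2-\delta<(d-c)/2$ for all $t$, so every entry of $\av_\Gamma^l(t)=\mu\mathbf{1}+\kappa(t)^{-1}Z(t)$ stays strictly inside $(c,d)$. All three paths are then continuous, $\w_\Gamma^{l+1}(t)$ has rank $n_{l+1}$ throughout, the required identity holds for all $t$, and the initial conditions are satisfied, which completes the argument. (The assumption on an isolated starting point is not needed here; one could equivalently work with an invertible $n_{l+1}\times n_{l+1}$ column block $W_S$ of $\w^{l+1}$ and freeze the complementary block, which avoids the kernel correction $K$ at the cost of a case split.)

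The crux — and the only real obstacle — is the tension between the boundedness of $\mathrm{Im}(\sigma)$ and the fact that the target pre-activations $\nv_\Gamma^{l+1}(t)$ may be arbitrarily large: one cannot hold the weights fixed and absorb the change into $\av^l$ (the activations would leave the bounded box), nor absorb it into the bias alone (the bias is shared across all $N$ samples, whereas the required change is sample-dependent). The resolution above is to let $\w_\Gamma^{l+1}(t)$ grow by the scalar factor $\kappa(t)$, so that bounded activations near the box centre still realize unbounded pre-activations, while the entire sample-dependence is routed through the activations via $R$. The two bookkeeping points then need care: keeping $\w_\Gamma^{l+1}(t)$ of full rank along the whole path (handled by scaling $\w^{l+1}$ by a positive scalar) and making $\kappa$ a continuous function of $t$ with $\kappa(0)=1$ (handled by the running-maximum construction). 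Finally, combining this lemma with Lemma~\ref{lma:ctsPath2} — which lifts the prescribed activation path $\sigma(\nv_\Gamma^{l+1})$ at layer $l+1$ to the pre-activation path $\nv_\Gamma^{l+1}$ — and with the inductive hypothesis that arbitrary activation paths in layer $l$ are realizable by parameters of layers up to $l$, is exactly what propagates realizability from the extremely wide layer down through the non-increasing hidden layers to the output, and thereby yields Theorem~\ref{thm:pathToGlobal}.
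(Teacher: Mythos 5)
Your proof is correct, and the core strategy matches the paper's: scale the $(l+1)$-th weight matrix by a growing scalar (your $\kappa(t)$, the paper's $\lambda(t)$) so that the sample-dependent part of the target pre-activations can be routed through the bounded activations via a right inverse, with the activation offset inversely scaled so as to remain in $\text{Im}(\sigma)^{n_l\times N}$. There is one genuine and worthwhile refinement in your version. The paper sets $\av_\Gamma^l(t)=\frac{1}{\lambda(t)}\bigl(\av^l + [W^{-1}\tilde\nv_\Gamma(t);0] + [W^{-1}\delta(t);0]\bigr)$, i.e.\ it scales the whole matrix toward the \emph{origin}; this forces a case split depending on whether $0$ is in the interior of $(c,d)$ or on its boundary (as for the sigmoid, $c=0$), and the latter case is resolved only by an implicit, unconstructed choice of $\delta(t)$. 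You instead recenter at $\mu=(c+d)/2$ and scale the deviation $\kappa(t)^{-1}Z(t)$ toward the \emph{box center}, so the entries are pulled into the interior of $(c,d)$ no matter where $0$ sits, eliminating the case split. You also make the scalar explicit via the running-maximum $\kappa(t)=\max\bigl(1,\bar g(t)/((d-c)/2-\delta)\bigr)$, whereas the paper leaves the choice of $\lambda(t)$ and $\delta(t)$ to the reader. A further cosmetic difference: you use an arbitrary right inverse $R$ and a kernel correction $K$ with $\w^{l+1}K=0$, while the paper uses an invertible column block $W$ extended by zeros; these are equivalent, and you correctly observe the trade-off. One tiny nit: the interval for $\delta$ should be open on the right, $\delta\in\bigl(0,(d-c)/2-\bar g(0)\bigr)$, to avoid a division by zero in the degenerate situation $\bar g(0)=0$; otherwise the construction is sound.
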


Combining Lemma~\ref{lma:ctsPath2} and \ref{lma:ctsPathInductive}, we can realize any continuous path of activation values $\av^{l+1}_\Gamma(t)$ in layer $l+1$ by a  path of parameters, if arbitrary paths of activation values $\av^{l}_\Gamma(t)$ can be realized in the previous layer $l$. By Lemma~\ref{lma:ctsPath1}, we can indeed realize arbitrary paths in the layer following the extremely wide layer. Hence, by induction over the layers we find that any path at the output is realizable. In the following result, we denote the dependence of the network function's output of its parameters $\w$ on the training sample $x_\alpha$ by $f(\w;x_\alpha)$.

\begin{restatable}{lem}{finalPath}\label{lma:finalPath}
{Assume a neural network structure as above with activation vectors $\av^{l^*}_k$ of the extremely wide hidden layer spanning $\RR^N$, hidden dimensions $n_{l+1}\leq n_l$ for all $l > l^*$ and weight matrices $\w^{l+1}\in\RR^{n_{l+1}\times n_{l}}$ of full rank $n_{l+1}$ for all $l > l^*$.} Then for any continuous path $f_\Gamma:[0,1]\rightarrow \RR^N$ with $f_\Gamma(0)=[f(\w; x_\alpha)]_\alpha$ there is a continuous path $\w_\Gamma(t)$ from the current weights $\w_\Gamma(0)=\w$ that realizes $f_\Gamma(t)$ as the output of the neural network function, $f_\Gamma(t)=[f( \w_\Gamma(t); x_\alpha)]_\alpha$.
\end{restatable}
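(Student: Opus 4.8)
The plan is to prove Lemma~\ref{lma:finalPath} by an induction over the layers that runs outward from the extremely wide layer $l^*$ toward the output, feeding Lemmas~\ref{lma:ctsPath1},~\ref{lma:ctsPath2} and~\ref{lma:ctsPathInductive} into one another. The one piece of bookkeeping I would carry along the induction is the set of parameter blocks a realizing path is allowed to move: at the step handling layer $l$, only the weights and biases of the layers with index in $[l^*+1,l]$ will change, so every weight matrix $\w^{l'}$ with $l'>l$ (and everything at or below $l^*$) stays at its original value — in particular, $\av^{l^*}$ keeps its spanning property and the higher-layer weight matrices keep their full rank, so they remain available as hypotheses for the later steps.

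Concretely, I would first establish the inductive claim: for every $l$ with $l^*+1\le l\le L-1$ and every continuous path $\av^l_\Gamma:[0,1]\to\text{Im}(\sigma)^{n_l\times N}$ with $\av^l_\Gamma(0)=\av^l(\w)$, there is a continuous path of parameters $\w_\Gamma(t)$ with $\w_\Gamma(0)=\w$, moving only layers $[l^*+1,l]$, such that $\av^l(\w_\Gamma(t))=\av^l_\Gamma(t)$. For the base case $l=l^*+1$ I would apply Lemma~\ref{lma:ctsPath2} to lift $\av^{l^*+1}_\Gamma$ to a continuous path $\nv^{l^*+1}_\Gamma$ of pre-activations (the lift is the componentwise inverse of $\sigma$, hence continuous and automatically starting at the current pre-activation value), and then Lemma~\ref{lma:ctsPath1}, whose hypothesis is precisely the assumed $\spn_k \av^{l^*}_k=\RR^N$, to realize $\nv^{l^*+1}_\Gamma$ by a path of the layer-$(l^*+1)$ weights alone. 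For the inductive step from $l$ to $l+1\le L-1$: given a target $\av^{l+1}_\Gamma$, lift it to $\nv^{l+1}_\Gamma$ via Lemma~\ref{lma:ctsPath2}, then apply Lemma~\ref{lma:ctsPathInductive} — legitimate because $n_{l+1}\le n_l$ and, since no earlier step touched layer $l+1$, the matrix $\w^{l+1}$ is still at its original, full-rank value — obtaining continuous (full-rank) paths $\w^{l+1}_\Gamma(t)$, $\w^{l+1}_{\Gamma,0}(t)$ and an auxiliary path $\av^l_\Gamma(t)$ starting at $\av^l(\w)$ with $\nv^{l+1}_\Gamma(t)=\w^{l+1}_\Gamma(t)\cdot\av^l_\Gamma(t)+\w^{l+1}_{\Gamma,0}(t)$. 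The induction hypothesis realizes $\av^l_\Gamma(t)$ by a parameter path on layers $[l^*+1,l]$; superimposing this with the layer-$(l+1)$ weight and bias paths gives a parameter path on $[l^*+1,l+1]$ under which the layer-$l$ activation is $\av^l_\Gamma(t)$, hence the layer-$(l+1)$ pre-activation is $\nv^{l+1}_\Gamma(t)$ and the layer-$(l+1)$ activation is $\sigma(\nv^{l+1}_\Gamma(t))=\av^{l+1}_\Gamma(t)$, continuity following from composition of continuous maps.

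It then remains to reach the output. Since there is no activation function at layer $L$, the network output equals the pre-activation $\w^L\cdot\av^{L-1}+\w^L_0$, and the hypothesis $f_\Gamma(0)=[f(\w;x_\alpha)]_\alpha$ says exactly that $\nv^L_\Gamma(t):=f_\Gamma(t)$ starts at the current value. If $l^*=L-1$, Lemma~\ref{lma:ctsPath1} applies directly with its span hypothesis and realizes $\nv^L_\Gamma$ by a path of $\w^L$. If $l^*<L-1$, then $L>l^*+1$, so $\w^L$ has full rank by assumption and $n_L=1\le n_{L-1}$; Lemma~\ref{lma:ctsPathInductive} with $l+1=L$ produces paths $\w^L_\Gamma(t)$, $\w^L_{\Gamma,0}(t)$ and an auxiliary $\av^{L-1}_\Gamma(t)$ with $f_\Gamma(t)=\w^L_\Gamma(t)\cdot\av^{L-1}_\Gamma(t)+\w^L_{\Gamma,0}(t)$, and the inductive claim realizes $\av^{L-1}_\Gamma(t)$ by a parameter path on $[l^*+1,L-1]$; superimposing yields the desired $\w_\Gamma(t)$.

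I expect the difficulty to be organizational rather than mathematical, since Lemmas~\ref{lma:ctsPath1}--\ref{lma:ctsPathInductive} already carry all the analytic content. The points needing care are: verifying at each step that the not-yet-processed higher-layer weight matrices are still at their original, full-rank values (so the hypotheses of Lemma~\ref{lma:ctsPathInductive} genuinely apply) and that the layer-$l^*$ activations still span $\RR^N$ whenever the base case is re-invoked inside the induction; checking that the starting points of the various lifted and auxiliary paths coincide with the current network state so the lemmas can legitimately be invoked; and confirming that superimposing parameter paths supported on disjoint blocks of layers yields a single continuous path realizing the composed activation/pre-activation path. The degenerate case $l^*=L-1$, where the induction is vacuous and Lemma~\ref{lma:ctsPath1} is applied once directly to the output, is a minor extra case to dispatch.
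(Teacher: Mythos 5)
Your proof follows the same structure as the paper's: an induction over layers $l^*+1 \le l \le L-1$ showing that arbitrary continuous activation paths can be realized by parameter paths confined to layers $[l^*+1,l]$, using Lemmas~\ref{lma:ctsPath1}--\ref{lma:ctsPath2} for the base case and Lemmas~\ref{lma:ctsPath2} and~\ref{lma:ctsPathInductive} for the inductive step, followed by one more application of Lemma~\ref{lma:ctsPathInductive} (or Lemma~\ref{lma:ctsPath1} when $l^*=L-1$) at the output layer. The explicit bookkeeping you add about which parameter blocks may move, and hence why the full-rank hypothesis on $\w^{l+1}$ remains available at each step, is a correct and useful clarification of what the paper compresses into the phrase ``parameters from previous layers.''
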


With fixed $z_\alpha=f(\w; x_\alpha)$, the prediction for the current weights, let $\mathbf{z}=[z_\alpha]_\alpha$ denote the vector of predictions, and let $\mathbf{y}=[y_\alpha]_\alpha$ denote the vector of all target values. An obvious path of decreasing loss at the output layer is then given by $f_\Gamma(t)=\mathbf{z}+t\cdot (\mathbf{y}-\mathbf{z})$, inducing the loss $\Loss=||\mathbf{z}+t\cdot (\mathbf{y}-\mathbf{z})-\mathbf{y}||_2^2=(1-t)||\mathbf{y}-\mathbf{z}||_2^2$. This concludes the proof of Theorem~\ref{thm:pathToGlobal} by applying Lemma~\ref{lma:finalPath} to this choice of $f_\Gamma(t)$ after a possible change of the starting parameters $\w$ to arbitrarily close parameters $\w'$ using Lemma~\ref{lma:ctsPath1}.

\section{Conclusion}
We have proved the existence of suboptimal local minima for regression neural networks with sigmoid activation functions of arbitrary width.  We established that the nature of local minima is such that they live in a special region of the cost function called a non-attractive region, and showed that a non-increasing path to a configuration with lower loss than that of the region can always be found. For sufficiently wide neural networks with decreasing hidden layer dimensions after the extremely wide layer, all local minima belong to such a region. We generalized a procedure to find such regions in shallow networks, introduced by \citet{FukumizuAmari}, to deep networks and described conditions for the construction to work. The necessary conditions become hard to satisfy in wider and deeper networks and, if they fail, the construction leads to saddle points instead.  The appearance of an additional condition when extending \citet{FukumizuAmari}'s construction to deeper networks suggests that local minima are rare and degenerate in deep networks, but their existence shows that no general statement about all local minima being global can be made.\\

\appendix

\section{Notation}
\subsection{General Notation}\label{app:notation}
\vspace{1cm}

\begin{center}

\begin{tabular}{|cl|l|}
\hline
$[x_\alpha]_\alpha$ & $\RR^n$ & column vector with entries $x_\alpha\in \RR$  \\
$[x_{i,j}]_{i,j}$ & $\in \RR^{n_1\times n_2}$ & matrix with entry $x_{i,j}$ at position $(i,j)$ \\
\text{Im}(f) & $\subseteq \RR$ & image of a function $f$ \\
$C^n(X,Y)$ & & n-times continuously differentiable function  \\ & & \hspace{0.5cm} from $X$ to $Y$ \\
&  & \\
$N$ & $\in\NN$ & number of data samples in training set  \\
$x_\alpha$ & $\in \RR^{n_0}$ & training sample input  \\
$y_\alpha$ & $\in \RR$ & target output for sample $x_\alpha$ \\
$\mathcal{A}$ & $\in \text{C}(\RR)$ & class of real-analytic, strictly monotonically \\
& & \hspace{0.5cm} increasing, bounded (activation) functions such  \\ & & \hspace{0.5cm} that the closure of the image contains zero\\
$\sigma$ & $\in C^2(\RR,\RR)$ & a nonlinear activation function in class $\mathcal{A}$ \\
$f$ & $\in C(\RR^{n_0},\RR)$ & neural network function  \\
$l$ &  $1\leq l\leq L$ & index of a layer  \\
$L$ &  $\in\NN$ & number of layers excluding the input layer  \\
 l=0  & & input layer  \\
 $l=L$  & & output layer  \\
$n_l$ & $\in \NN$ &  number of neurons in layer $l$  \\
$M$ & $=\sum_{l=1}^L (n_l\cdot n_{l-1}) $ & number of all network parameters \\
$k$ &  $1\leq k\leq n_l$ & index of a neuron in layer $l$  \\
$\w^l$ & $\in \RR^{n_{l}\times n_{l-1}}$ & weight matrix of the l-th layer\\
$\w$  & $\in\RR^{M}$ & collection of all $\w^l$ \\
$w^l_{i,j}$  & $\in\RR$ & the weight from neuron $j$ of layer $l-1$ to \\ & &\hspace{0.5cm} neuron $j$ of layer $l$\\
$w^L_{\parm,j}$  & $\in\RR$ & the weight from neuron $j$ of layer $L-1$ to \\ & &  \hspace{0.5cm} the output \\
$\w_{\Gamma}$ & $\in \text{C}([0,1],\RR^M)$ & a path in parameter space\\
 $\Loss,\loss$  & $\in \RR_+$ & squared loss over training samples \\
 $\loss_\alpha$  & $\in \RR_+$ & the squared loss for data sample $x_\alpha$ \\
$\n{l,k}{x}$ & $\in\RR$ & value at neuron $k$ in layer $l$ before activation  \\ & &  \hspace{0.5cm} for input pattern $x$\\
$\n{l}{x}$ & $\in\RR^{n_l}$ & neuron pattern at layer $l$ before activation for \\ & &  \hspace{0.5cm} input pattern $x$\\
$\act{l,k}{x}$ & $\in \text{Im}(\sigma)$ & activation pattern at neuron $k$ in layer $l$ for \\ & &  \hspace{0.5cm} input $x$\\
$\act{l}{x}$ & $\in\text{Im}(\sigma)^{n_l}$ & neuron pattern at layer $l$ for input $x$\\
\hline
\end{tabular}

\end{center}

\vspace{0.5cm}
\newpage 
\subsection{Notation Section~V}
In Section~V, where we fix a layer $l$, we additionally use the following notation.

\vspace{0.2cm}
\begin{center}

\noindent \begin{tabular}{|cl|l|}
\hline
$[u_{p,i}]_{p,i}$ & $\in \RR^{n_{l}\times n_{l-1}}$ & weights of the given layer $l$. \\
$[v_{s,q}]_{s,q}$ & $\in \RR^{n_{l}\times n_{l+1}}$ & weights the layer $l+1$. \\
$r$ & $\in \{1,2,\ldots,n_l\}$ & the index of the neuron of layer $l$ that we use \\
 & & \hspace{0.5cm} for the addition of one additional neuron \\ 
$M$ & $\in \NN$ & $=\sum_{t=1}^L ( n_t\cdot n_{t-1})$, the number of weights in \\
 & & \hspace{0.5cm} the smaller neural network\\
$\wrest$ & $\in \RR^{M-n_{l-1}-n_{l+1}}$ & all weights except $u_{1,i}$ and $v_{s,1}$ \\
$\gamma_\lambda^r$ & $\in \text{C}(\RR^{M}, \RR^{M'})$ & the map defined in Section~\ref{sct:construction} to add a  \\ $\hspace{1.5cm}M'=$ & $M+n_{l-1}+n_{l+1}  $ & \hspace{0.5cm} neuron in  layer $l$ using the neuron with \\ & & \hspace{0.5cm} index $r$  in layer $l$\\
$B_{i,j}^r$ & $\in \RR$ & $= \sum_{\alpha}  \sum_k \frac{\partial \loss_\alpha}{\partial \n{l+1,k}{x_\alpha}} \cdot  v^*_{k,r} \cdot \sigma''(\n{l,r}{x_\alpha} ) $ \\ & & \hspace{0.5cm} $\hspace{1.8cm} \cdot\: \act{l-1,i}{x_\alpha}\cdot \act{l-1,j}{x_\alpha}$\\
$D_{i}^{r,s}$ & $\in \RR$ & $= \sum_\alpha  \frac{\partial \loss_\alpha}{\partial \n{l+1,s}{x_\alpha}}  \cdot \sigma'(\n{l,r}{x_\alpha}) \cdot \act{l-1,i}{x_\alpha}$\\
$B=[B_{i,j}^r]_{i,j}$ & $\in\RR^{n_{l-1}\times n_{l-1}}$ & matrix needs to be pos.\ or neg.\ def.\ for local min.\ \\
$D=[D_{i}^{r,s}]_{i,s}$ & $\in \RR^{n_{l-1}\times n_{l+1}} $ & matrix needs to be 0 for local min.\ \\
\hline
\end{tabular}
\end{center}

\vspace{0.5cm}

\subsection{Notation Section~VI}
In Section~VI, we additionally use the following notation.

\vspace{0.2cm}

\begin{center}
\noindent\begin{tabular}{|cl|l|}
\hline
$\av^l_k$ & $\in \text{Im}(\sigma)^N$ & activation vector at neuron $k$ in layer $l$ given \\ & & \hspace{0.5cm} by    $\av^l_k=[\act{l,k}{x_\alpha}]_\alpha$\\
$\av^l$ & $\in \text{Im}(\sigma)^{n_l\times N}$ & matrix of activations in layer $l$ given  by    $\av^l=[\av^l_k]_k$\\
$\av^l(\w)$ & $\in \text{Im}(\sigma)^{n_l\times N}$ & activation vector at layer $l$ as a function \\ & & \hspace{0.5cm} of the parameters $\w$ \\ 
$\w_{\Gamma}$ & $\in \text{C}([0,1],\RR^M)$ & a path in parameter space\\
$\w_{\Gamma}^l$ & $\in \text{C}([0,1],\RR^{n_l\times n_{l-1}})$ & a path in parameter space at layer $l$\\
$\nv^l$ & $\in \RR^{n_l\times N}$ & matrix of pre-activation values in layer $l$ given \\ & & \hspace{0.5cm}  by    $\nv^l=[\n{l,k}{x_\alpha}]_{k,\alpha} $\\
$\nv_\Gamma^l$ & $\in \text{C}([0,1],\RR^{n_l \times N})$ & a path of neuron values in layer $l$\\
$\av_{\Gamma}^l$ & $\in \text{C}([0,1],\RR^{n_l\times N})$ & a path of activation values in layer $l$\\
$\av_{\Gamma,j}^l$ & $\in \text{C}([0,1],\RR^{N})$ & a path of activation vectors at neuron $j$ in layer $l$\\
$f(\w;x_\alpha)$ & $\in \RR$ & network  as function of parameters $\w$ and sample $x_\alpha$\\
$f_{\Gamma}^l$ & $\in \text{C}([0,1],\RR^{N})$ & a path of outputs over training samples\\
\hline
\end{tabular}
\end{center}

\vspace{1cm}


\newpage
\section{Proofs}

\subsection{Proofs for the Construction of Local Minima}\label{app:construction}

Here we prove Theorem~\ref{thm:construction},
which follows from two lemmas, with the first lemma being Lemma~\ref{lma:hessian} containing the computation of the Hessian of the cost function $\Loss$ of the larger network at parameters $\gamma_\lambda^r([{u}^*_{r,i}]_i,[{v}^*_{s,r}]_s,\wrest^*)$ with respect to a suitable basis.


\begin{proof}\textbf{(Lemma~\ref{lma:hessian})}
The proof only requires a tedious, but not complicated calculation (using the relation $\alpha\lambda-\beta(1-\lambda)=0$ multiple times. To keep the argumentation streamlined, we moved all the necessary calculations into the supplementary material. 
\end{proof}

The second lemma determines when matrices of the form as calculated in Lemma~\ref{lma:hessian} are positive definite.

\begin{appendixLemma}\label{lma:posSemidef}
Let $a,b,c,d,e,f,g,h,x$ be matrices of appropriate sizes. 
\begin{itemize}
\item[(a)] A matrix of the form
$$
\begin{pmatrix}
a & 2b & c & 0 \\
2b^T & 4d & 2e & 0 \\
c^T & 2e^T & f & 0 \\
0 & 0 & 0 & x\\ 
\end{pmatrix}
$$
is positive semidefinite if and only if both $x$ and the matrix
$$
\begin{pmatrix}
a & b & c \\
b^T & d & e  \\
c^T & e^T & f \\ 
\end{pmatrix}
$$ 
are positive semidefinite.
\item[(b)] A matrix $x$ of the form 
$$x= 
\begin{pmatrix}
g & h \\
h^T & 0\\ 
\end{pmatrix}
$$
is positive semidefinite if and only if $g$ is positive semidefinite and $h=0$.
\end{itemize}
\end{appendixLemma}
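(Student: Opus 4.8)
Both parts are elementary statements about congruence of symmetric matrices, and I would build the argument on two standard facts: a symmetric block-diagonal matrix is positive semidefinite precisely when each of its diagonal blocks is, and conjugating a symmetric matrix by an invertible matrix does not change whether it is positive semidefinite.

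For part (a), the first step is to observe that the displayed matrix is block diagonal with diagonal blocks
$$M' := \begin{pmatrix} a & 2b & c \\ 2b^{T} & 4d & 2e \\ c^{T} & 2e^{T} & f \end{pmatrix} \qquad \text{and} \qquad x ,$$
so the quadratic form it defines is the sum of the forms of $M'$ and of $x$ on complementary coordinate subspaces; hence the full matrix is positive semidefinite if and only if $M'$ and $x$ both are. The second step is to exhibit the congruence $M' = S^{T} M S$, where
$$M := \begin{pmatrix} a & b & c \\ b^{T} & d & e \\ c^{T} & e^{T} & f \end{pmatrix}, \qquad S := \begin{pmatrix} I & 0 & 0 \\ 0 & 2I & 0 \\ 0 & 0 & I \end{pmatrix},$$
which is verified block by block: scaling the middle block of coordinates by $2$ turns $b$ into $2b$, $d$ into $4d$ and $e$ into $2e$, and leaves $a$, $c$, $f$ untouched. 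Since $S$ is invertible, $v \mapsto Sv$ is a bijection, so $M'$ is positive semidefinite if and only if $M$ is; combining this with the first step gives the stated equivalence.

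For part (b), I would first record the auxiliary fact that a symmetric positive semidefinite matrix with a zero diagonal entry has the whole row and column through that entry equal to zero: evaluating the form on $e_{k} + \varepsilon e_{j}$ gives $2\varepsilon\, x_{kj} + \varepsilon^{2} x_{jj} \ge 0$ for every real $\varepsilon$, and letting $\varepsilon \to 0$ from both sides forces $x_{kj} = 0$. Applying this to $x = \begin{pmatrix} g & h \\ h^{T} & 0 \end{pmatrix}$: every diagonal entry of the bottom-right zero block is zero, so each corresponding row of $x$ vanishes, and the portion of such a row lying in the off-diagonal block is a column of $h^{T}$; hence $h = 0$. Then $g$ sits as a principal submatrix of a positive semidefinite matrix and is therefore positive semidefinite. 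The converse is immediate: if $h = 0$ then $x$ is block diagonal with blocks $g$ and $0$, both of which are positive semidefinite, so $x$ is.

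There is no genuine obstacle in either part; the one point requiring care is that everything must be done in the merely positive-\emph{semi}definite regime, since the Hessian of Lemma~\ref{lma:hessian} is only semidefinite at the constructed minima. This rules out shortcuts via Schur complements or inverting $x$, which is exactly why the congruence argument in (a) and the zero-diagonal argument in (b) are the right tools. Combined, the two parts translate positive semidefiniteness of the full Hessian into the conditions on $[B^{r}_{i,j}]$ and the vanishing of $[D^{r,s}_{i}]$ used in the proof of Theorem~\ref{thm:construction}.
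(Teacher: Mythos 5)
Your proposal is correct, and part (a) is essentially the paper's own argument: the paper proves the equality $z^{T} M' z = (Sz)^{T} M (Sz)$ directly on quadratic forms, which is exactly the congruence $M' = S^{T} M S$ you write down. For part (b) you take a mild variant route. The paper shows $h \neq 0$ forces negative values by picking $z$ with $hz \neq 0$ and evaluating the form on $\bigl((hz)^{T}, -\lambda z^{T}\bigr)$, obtaining $(hz)^{T} g (hz) - 2\lambda \|hz\|_{2}^{2}$, which is negative for $\lambda$ large. You instead extract the general lemma that a positive semidefinite matrix with a zero diagonal entry has the entire corresponding row and column equal to zero, proved by testing on $e_{k} + \varepsilon e_{j}$ and letting $\varepsilon \to 0^{\pm}$, and then apply it to the zero bottom-right block. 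Both arguments are the same underlying trick (a well-chosen one-parameter family of test vectors that exposes the off-diagonal block); yours packages it as a reusable fact and has the small advantage of handling all entries of $h$ uniformly, whereas the paper's construction more directly produces a single violating vector. Your closing remark about why Schur complements are unavailable in the semidefinite regime is apt and matches the motivation implicit in the paper's choice of argument.
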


\begin{proof}
\begin{itemize}
\item[(a)] By definition, a matrix $A$ is positive semidefinite if and only if $z^TAz\geq 0$ for all $z$. Note now that
$$(z_1,z_2,z_3,z_4)
\begin{pmatrix}
a & 2b & c & 0 \\
2b^T & 4d & 2e & 0 \\
c^T & 2e^T & f & 0 \\
0 & 0 & 0 & x\\ 
\end{pmatrix}
\begin{pmatrix}
z_1 \\
z_2 \\
z_3 \\
z_4 \\ 
\end{pmatrix}
=(z_1,2z_2,z_3,z_4)
\begin{pmatrix}
a & b & c & 0 \\
b^T & d & e & 0 \\
c^T & e^T & f & 0 \\
0 & 0 & 0 & x\\ 
\end{pmatrix}
\begin{pmatrix}
z_1 \\
2z_2 \\
z_3 \\
z_4 \\ 
\end{pmatrix}
$$
\item[(b)] 
It is clear that the matrix $x$ is positive semidefinite for $g$ positive semidefinite and $h=0$. To show the converse, first note that if $g$ is not positive semidefinite and $z$ is such that $z^Tgz<0$ then 
$$(z^T,0)
\begin{pmatrix}
g & h \\
h^T & 0\\ 
\end{pmatrix}
\begin{pmatrix}
z  \\
0 \\ 
\end{pmatrix}=z^T g z <0.$$
It therefore remains to show that also $h=0$ is a necessary condition. Assume $h\neq 0$ and find $z$ such that $hz \neq 0$. Then for any $\lambda\in \RR$ we have
$$((hz)^T,-\lambda z^{T})
\begin{pmatrix}
g & h \\
h^T & 0\\ 
\end{pmatrix}
\begin{pmatrix}
hz  \\
-\lambda z \\ 
\end{pmatrix}=(hz)^T g (hz) - 2 (hz)^T h \lambda z $$
$$= (hz)^T g (hz)  - 2 \lambda ||hz||_2^2.$$
For sufficiently large $\lambda$, the last term is negative.
\end{itemize}
\end{proof}

In addition, to find local minima from positive semi-definiteness, one needs to explain away all degenerate directions, i.e., we need to show that the loss function actually does not change into the direction of eigenvectors of the Hessian with eigenvalue $0$. Otherwise a higher derivative into this direction could be nonzero and potentially lead to a saddle point.

\begin{proof}[Proof of Theorem~\ref{thm:construction}]
In Lemma~\ref{lma:hessian}, we calculated the Hessian of $\Loss$ with respect to a suitable basis at a the critical point  $\gamma_\lambda([{u}^*_{r,i}]_i,[{v}^*_{s,r}]_s,\wrest^*)$. If the matrix $[D_i^{r,s}]_{i,s}$ is nonzero, then by Lemma~\ref{lma:posSemidef}(b) the Hessian is not positive semidefinite, hence none of the critical points are local minima.

If, on the other hand, the matrix $[D_i^{r,s}]_{i,s}$ is zero, then by Lemma~\ref{lma:posSemidef}(a+b) the Hessian is positive semidefinite, since
$$\begin{pmatrix}
[\frac{\partial^2 \loss}{\partial u_{r,i} \partial u_{r,j} }]_{i,j}  &  [\frac{\partial^2 \loss}{\partial u_{r,i} \partial v_{s,r} }]_{i,s}& [\frac{\partial^2 \loss}{\partial \wrest\ \partial u_{r,i} }]_{i,\wrest}   \\ 
[\frac{\partial^2 \loss}{\partial u_{r,i} \partial v_{s,r} }]_{s,i} &  [\frac{\partial^2 \loss}{\partial v_{s,r} \partial v_{t,1} }]_{s,t} &  [\frac{\partial^2 \loss}{\partial \wrest\ \partial v_{s,r} }]_{s,\w}   \\
[\frac{\partial^2 \loss}{\partial \wrest\ \partial u_{r,i} }]_{\wrest,i} &  [ \frac{\partial^2 \loss}{\partial \wrest\ \partial v_{s,r} }]_{\wrest,s} & [\frac{\partial^2 \loss}{\partial \wrest\ \partial \wrest' }]_{\wrest,\wrest'} \\
\end{pmatrix}
$$
is positive definite by assumption (isolated minimum), and $\alpha\beta [B_{i,j}^r]_{i,j}$ is positive definite if $\lambda\in (0,1)\Leftrightarrow \alpha\beta>0$ and $[B_{i,j}^r]_{i,j}$ is positive definite, or if ($\lambda <0$ or $\lambda >1)\Leftrightarrow \alpha\beta<0$ and $[B_{i,j}^r]_{i,j}$ is negative definite. In each case we can alter $\lambda$ to values leading to saddle points without changing the network function or loss. Therefore, the critical points can only be saddle points or local minima on a non-attracting region of local minima.

To determine whether the critical points in question lead to local minima when $[D_i^{r,s}]_{i,s}=0$, it is insufficient to only prove the Hessian to be positive semidefinite (in contrast to (strict) positive definiteness), but we need to consider directions for which the second order information is insufficient. We know that the loss is at a minimum with respect to all coordinates except for the degenerate directions defined by a change of $[v_{s,-1} - v_{s,r}]_s$ that keeps $[v_{s,-1} + v_{s,r}]_s$ constant. However, the network function $f(x)$ is constant along $[v_{s,-1} - v_{s,r}]_s$ (keeping $[v_{s,-1} + v_{s,r}]_s$ constant) at the critical point where $u_{-1,i}= u_{r,i}$ for all $i$. Hence, no higher order information leads to saddle points and it follows that the critical point lies on a region of local minima.
\end{proof}

\subsection{Local Minima at Infinity in Neural Networks}\label{app:infinity}

In this section we prove Theorem~\ref{thm:infinity}, showing the existence of local minima at infinity in neural networks.


\begin{proof}\textbf{(Theorem~\ref{thm:infinity})} We will show that, if all bias terms $u_{i,0}$ of the last hidden layer are sufficiently large, then there are parameters $u_{i,k}$ for $k\neq 0$ and parameters $v_i$ of the output layer such that the minimal loss is achieved at $u_{i,0}=\infty$ for all $i$.

We note that, if $u_{i,0}=\infty$ for all $i$, all neurons of the last hidden layer are fully active for all samples, i.e., $\act{L-1,i}{x_\alpha}=1$ for all $i$. Therefore, in this case $f(x_\alpha)=\sum_i v_{\parm,i}$ for all $\alpha$. A constant function $f(x_\alpha)=\sum_i v_{\parm,i} =c$ minimizes the loss $\sum_\alpha(c-y_\alpha)^2$ uniquely for $c:= \frac{1}{N} \sum_{\alpha=1}^N y_\alpha$. We will assume that the $v_{\parm,i}$ are chosen such that $\sum_i v_{\parm,i} =c$ does hold. That is, for fully active hidden neurons at the last hidden layer, the $v_{\parm,i}$ are chosen to minimize the loss.

We write $f(x_\alpha)=c +\epsilon_\alpha$. Then 
$$\Loss = \frac{1}{2} \sum_\alpha (f(x_\alpha)-y_\alpha)^2 = \frac{1}{2} \sum_\alpha (c+\epsilon_\alpha-y_\alpha)^2$$
$$=  \frac{1}{2} \sum_\alpha (\epsilon_\alpha+(c-y_\alpha))^2$$
$$= \underbrace{\frac{1}{2} \sum_\alpha (c-y_\alpha)^2}_{\text{Loss at $u_{i,0}=\infty$ for all $i$}} +\underbrace{\frac{1}{2} \sum_\alpha \epsilon_\alpha^2}_{\geq 0}  + \underbrace{\sum_{\alpha} \epsilon_\alpha (c-y_\alpha) }_{(*)}.$$
The idea is now to ensure that $(*)\geq 0$ for sufficiently large $u_{i,0}$ and in a neighborhood of the $v_{\parm,i}$ chosen as above. Then the loss $\Loss$ is larger than at infinity, and any such point in parameter space with $u_{i,0}=\infty$ and $v_{\parm,i}$ with $\sum_i v_{\parm,i}=c$ is a local minimum.

To study the behavior at $u_{i,0}=\infty$, we consider $p_i=\exp(-u_{i,0})$. Note that $$\lim_{u_{i,0}\rightarrow \infty} p_i = 0.$$ We have 
$$f(x_\alpha)=\sum_i v_{\parm,i} \sigma(u_{i,0}+\sum_k u_{i,k} \act{L-2,k}{x_\alpha})$$
$$= \sum_i v_{\parm,i}\cdot \frac{1}{1+p_i\cdot \exp(-\sum_k u_{i,k} \act{L-2,k}{x_\alpha})}$$

Now for $p_i$ close to $0$ we can use Taylor expansion of $g^j_i(p_i):=\frac{1}{1+p_i exp(a^j_i)}$ to get $g^j_i(p_i)=1-\exp(a^j_i)p_i+\mathcal{O}(|p_i|^2)$. Therefore 
$$f(x_\alpha)=c - \sum_i v_{\parm,i}p_i \exp(-\sum_k u_{i,k} \act{L-2,k}{x_\alpha}) + \mathcal{O}(p_i^2)$$
and we find that $\epsilon_\alpha = -\sum_i v_{\parm,i}p_i \exp(-\sum_k u_{i,k} \act{L-2,k}{x_\alpha}) + \mathcal{O}(p_i^2)$.

Recalling that we aim to ensure 
$$\left ( \ast\right ) = \sum_{\alpha} \epsilon_\alpha (c-y_\alpha) \geq 0$$
we consider  
$$  \sum_{\alpha} \epsilon_\alpha (c-y_\alpha) = -  \sum_{\alpha}  (c-y_\alpha) (\sum_i v_{\parm,i}p_i \exp(-\sum_k u_{i,k} \act{L-2,k}{x_\alpha})) + \mathcal{O}(p_i^2)$$
$$ = -  \sum_i v_{\parm,i} p_i  \sum_\alpha  (c-y_\alpha)  \exp(-\sum_k u_{i,k} \act{L-2,k}{x_\alpha}) + \mathcal{O}(p_i^2)$$

We are still able to choose the parameters $u_{i,k}$ for $i\neq 0$, the parameters from previous layers, and the $v_{\parm,i}$ subject to $\sum_i v_{\parm,i} =c$. If now \begin{equation*}
    \begin{split}
        v_{\parm,i}>0 & \text{ whenever }\sum_\alpha  (c-y_\alpha)  \exp(-\sum_k u_{i,k} \act{L-2,k}{x_\alpha})<0\text{, and} \\
        v_{\parm,i}<0 & \text{ whenever } \sum_\alpha  (c-y_\alpha)  \exp(-\sum_k u_{i,k} \act{L-2, k}{x_\alpha})>0,
    \end{split}
\end{equation*}
then the term $(\ast)$ is strictly positive, hence the overall loss is larger than the loss at $p_i=0$ for sufficiently small $p_i$ and in a neighborhood of $v_{\parm,i}$. The only obstruction we have to get around is the case where we need all $v_{\parm,i}$ of the opposite sign of $c$ (in other words, $\sum_\alpha  (c-y_\alpha)  \exp(-\sum_k u_{i,k} \act{L-2,k}{x_\alpha})$ has the same sign as $c$), conflicting with $\sum_i v_{\parm,i}=c$. To avoid this case, we impose the mild condition that $\sum_\alpha (c-y_\alpha)\act{L-2,r}{x_\alpha}\neq 0$ for some $r$, which can be arranged to hold for almost every data set by fixing all parameters of layers with index smaller than $L-2$. By Lemma~\ref{lma:infinity} below (with $d_\alpha=(c-y_\alpha)$ and $a_\alpha^r=\act{L-2,r}{x_\alpha}$), we can find $u^>_k$ such that $\sum_\alpha  (c-y_\alpha)  \exp(-\sum_k u^>_{k} \act{L-2,k}{x_\alpha})>0$ and $u^<_k$ such that $\sum_\alpha  (c-y_\alpha)  \exp(-\sum_k u^<_{k} \act{L-2,k}{x_\alpha})<0$. We fix $u_{i,k}$ for $k\geq 0$ such that there is some $i_1$ with $[u_{i_1,k}]_k=[u^>_k]_k$ and some $i_2$ with $[u_{i_2,k}]_k=[u^<_k]_k$. This assures that we can choose the $v_{\parm,i}$ of opposite sign to $\sum_\alpha  (c-y_\alpha)  \exp(-\sum_k u_{i,k} \act{L-2,k}{x_\alpha})$ and such that $\sum_i v_{\parm,i}=c$, leading to a local minimum at infinity. 

The local minimum is suboptimal whenever a constant function is not the optimal network function for the given data set.




\end{proof}

\begin{appendixLemma}\label{lma:infinity}
Suppose that m is a positive integer, $m\geq 2$, and for $\alpha=1,\ldots,N$ and $r=1,\ldots,m$  we have numbers $d_\alpha, a_\alpha^r$ in $\RR$ such that
$$\sum_{\alpha=1}^N d_\alpha=0,\text{ and }\sum_\alpha d_\alpha a_\alpha^r\ \neq 0 \text{ for some }r.$$

Then there are $u^<_k, k=1,2,...,m$ such that $$\sum_\alpha  d_\alpha \exp(-\sum_k u^<_{k} a_\alpha^k )<0$$ and $u^>_k, k=1,2,...,m$ such that $$\sum_\alpha  d_\alpha  \exp(-\sum_k u^>_{k} a_\alpha^k)>0.$$
\end{appendixLemma}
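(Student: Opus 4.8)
The plan is to recognize the left-hand sides as values of a single smooth function that vanishes at the origin and whose gradient there is nonzero by hypothesis, so that it must take both signs arbitrarily near $0$. Concretely, I would define $F\colon\RR^m\to\RR$ by $F(u):=\sum_{\alpha=1}^N d_\alpha\exp\!\bigl(-\sum_{k=1}^m u_k a_\alpha^k\bigr)$. This is a finite sum of exponentials of linear forms, hence real-analytic (in particular $C^\infty$) on all of $\RR^m$, and $F(0)=\sum_{\alpha=1}^N d_\alpha=0$ by the first assumption.

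Next I would differentiate at the origin: $\frac{\partial F}{\partial u_r}(0)=-\sum_{\alpha} d_\alpha a_\alpha^r$, the differentiation under the finite sum being trivially valid. By the second assumption there is an index $r$ with $\sum_\alpha d_\alpha a_\alpha^r\neq 0$, so this partial derivative is nonzero. Fix such an $r$ and restrict $F$ to the coordinate line through $0$ in direction $r$, i.e.\ consider $g(t):=F(u)$ where $u$ has $r$-th coordinate $t$ and all other coordinates $0$. Then $g(t)=\sum_\alpha d_\alpha\exp(-t\,a_\alpha^r)$ is a $C^\infty$ function of one real variable with $g(0)=0$ and $g'(0)=-\sum_\alpha d_\alpha a_\alpha^r\neq 0$.

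Since $g(0)=0$ and $g'(0)\neq 0$, $g$ is strictly monotone near $0$ and therefore changes sign there: depending on the sign of $g'(0)$ there are $t^{-}$ and $t^{+}$ close to $0$, of opposite signs, with $g(t^{-})<0$ and $g(t^{+})>0$. Taking $u^<$ to be the vector with $r$-th coordinate $t^{-}$ and all other coordinates $0$, and $u^>$ the vector with $r$-th coordinate $t^{+}$ and all other coordinates $0$, gives $\sum_k u^<_k a_\alpha^k=t^{-}a_\alpha^r$ and $\sum_k u^>_k a_\alpha^k=t^{+}a_\alpha^r$, so these are exactly the desired vectors. I do not expect any real obstacle; the only point worth stating carefully is that the hypothesis $\sum_\alpha d_\alpha a_\alpha^r\neq 0$ is precisely the statement that $F$ has a nonvanishing directional derivative at the point $0$, where $F$ itself vanishes, and a first-order Taylor expansion then forces $F$ to be positive on one side of $0$ along that direction and negative on the other. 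Note that the hypothesis $m\geq 2$ is not needed for this argument.
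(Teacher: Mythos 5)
Your proof is correct and coincides with the paper's own argument: both define the function $\phi(u)=\sum_\alpha d_\alpha\exp(-\sum_k u_k a_\alpha^k)$, observe that it vanishes at the origin with nonzero $r$-th partial derivative $-\sum_\alpha d_\alpha a_\alpha^r$, and conclude by moving along the $r$-th coordinate axis. Your remark that the hypothesis $m\ge 2$ is not used is also accurate.
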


\begin{proof}
Consider the function $$\phi(u_1,u_2,\ldots,u_m):= \sum_\alpha  d_\alpha  \exp(-\sum_k u_{k} a_\alpha^k).$$
We have $$\phi(0,0,\ldots,0)=\sum_\alpha  d_\alpha = 0.$$
Further 
$$\frac{\partial \phi}{\partial u_r}_{|(0,0,\ldots,0)}  = -\sum_\alpha  d_\alpha a_\alpha^r .$$
By assumption, there is $r$ such that the last term is nonzero. Hence, using coordinate $r$, we can choose $w=(0,0,\ldots,0,w_r,0,\ldots,0)$ such that $\phi(w)$ is positive and we can choose $w$ such that $\phi(w)$ is negative.
\end{proof}

%


\subsection{Construction of Local Minima in Deep Networks}\label{app:constructionDeep}


\begin{proof}\textbf{(Proposition~\ref{prop:constructionDeep})}
The fact that property (i) suffices  uses that $\frac{\partial \ell_\alpha}{\n{l+1,\parm}{x_\alpha}}$ reduces to $(f(x_\alpha)-y_\alpha)$.
Then, considering a regression network as before, our assumption says that ${v}^*_{\parm,r}\neq 0$, hence its reciprocal can be factored out of the sum in Equation~\ref{eq:D}. Denoting incoming weights into $\n{l,r}{x}$ by $u_{r,i}$ as before, this leads to
\begin{equation*}
\begin{split}
D_{i}^{r,\parm} & = \frac{1}{{v}^*_{\parm,r}}\cdot  \sum_\alpha (f(x_\alpha)-y_\alpha)\cdot {v}^*_{\parm,r}\cdot  \sigma'(\n{l,r}{x_\alpha}) \cdot \act{l-1,i}{x_\alpha} \\
& = \frac{1}{{v}^*_{\parm,r}}\cdot \frac{\partial \loss}{\partial u_{r,i}} = 0 \\
\end{split}
\end{equation*}

In the case of (ii),  $$\frac{\partial \loss_\alpha }{\partial \n{l+1,s}{x_\alpha}}= \frac{\partial \loss_\alpha }{\partial \n{l+1,t}{x_\alpha}}$$
for all $s,t$ and we can factor out the reciprocal of $\sum_t {v}^*_{r,t}\neq 0$ in Equation~\ref{eq:D} to obtain for each $i,s$
\begin{equation*}
\begin{split}
D_{i}^{r,s}&:= \sum_\alpha  \frac{\partial \loss_\alpha }{\partial \n{l+1,s}{x_\alpha}}  \cdot \sigma'(\n{l,r}{x_\alpha}) \cdot \act{l-1,i}{x_\alpha}\\
 &= \frac{1}{\left (\sum_t {v}^*_{r,t}\right )} \sum_\alpha \sum_t {v}^*_{r,t} \cdot \frac{\loss_\alpha}{\partial \n{l+1,t}{x_\alpha}}  \cdot \sigma'(\n{l,r}{x_\alpha}) \cdot \act{l-1,i}{x_\alpha}\\
&= \frac{1}{\left (\sum_t {v}^*_{r,t}\right )}\cdot \frac{\partial \loss}{\partial u_{r,i}}=0\\
\end{split}
\end{equation*}

Part (iii) is evident since in this case clearly every summand in Equation~\ref{eq:D} is zero.
\end{proof}

\subsection{Proofs for the Non-increasing Path to a Global Minimum}\label{app:pathToGlobal}

In this section we discuss how in extremely wide neural networks with a {non-increasing sequence of dimensions of hidden layers following the extremely wide layer}, a path to the global minimum that is non-increasing in loss  may be found from almost everywhere in the parameter space. 

\pathToGlobal*

The \textdef{first step} of the proof is to use the freedom given by $\epsilon>0$ to change the starting point in parameter space to satisfy that the activation vectors {$\av_k^{l^*}$ of the extremely wide layer $l^*$} span the entire space $\RR ^N$.

\linearIndependence*

{The \textdef{second step} of the proof is to guarantee that we can then induce any continuous change of activation vectors in layer $l^*+1$  by suitable paths in the parameter space changing only the weights of the same layer.} The following two lemmas ensure exactly that. We first consider pre-activation values and then consider the application of the activation function. We slightly abuse notation in the statement when adding a vector to a matrix, which shall mean the addition of the vector to all columns of the matrix.

\ctsPathOne*

\begin{proof}
We write $\nv_\Gamma^{l^*+1}(t)=\nv^{l^*+1}+\tilde{\nv}_\Gamma(t)$ with $\tilde{\nv}_\Gamma(0)=0$. We will find $\tilde \w_\Gamma(t)$ such that $\tilde \nv_\Gamma(t)=\tilde \w_\Gamma(t) \cdot \av^{l^*}$ with $\tilde \w_\Gamma(0)=0$. Then $\w^{l^*+1}_\Gamma(t):=\w^{l^*+1}+\tilde \w_\Gamma(t)$ does the job. 



Since, by assumption, $\av^{l^*}=[\act{l^*,k}{x_\alpha}]_{k,\alpha}$ has full rank, 
we can find an invertible submatrix $\bar{A}\in \RR^{N\times N}$ of $\av^{l^*}$. Then we can define a continuous path $\omega$ in $\RR^{n_{l^*+1}\times N}$ given by $\omega(t):=\tilde \nv_\Gamma(t)\cdot \bar{A}^{-1}$, which satisfies $\omega(t)\cdot \bar{A}=\tilde \nv_\Gamma(t)$ and $\omega(0)=0$. Extending $\omega(t)$ to a path $\tilde \w_\Gamma(t)$ in $\RR^{n_{l^*+1}\times n_{l^*}}$ by zero columns at positions corresponding to rows of $\av^{l^*}$ missing in $\bar{A}$ gives $\tilde \w_\Gamma(t) \cdot \av^{l^*} =\tilde \nv_\Gamma (t)$ and $\tilde \w_\Gamma(0)=0$ as desired.

\end{proof}

\ctsPathTwo*

\begin{proof}
Since $\sigma: \RR^{n\times N}\rightarrow \text{Im}(\sigma)^{n\times N}$ is invertible with a continuous inverse, take $$\nv_\Gamma(t)=\sigma^{-1}(\av_\Gamma(t)).$$
\end{proof}

{With activations values $\av^l$ depending on parameters $\w^{\iota}$  of previous layers with index $\iota \leq l$,  we denote this functional dependence by $\av^l(\w)$.   We say that a continuous path $\av^l_\Gamma:[0,1]\rightarrow \text{Im}(\sigma)^{n_l\times N}$ of activation values in the $l$-th layer is \textdef{realized by a path of parameters} $\w_{\Gamma}(t)$, if the path $\w_{\Gamma}(t)$ induces a change of activation values at layer $l$ according to the desired path $\av_\Gamma^l$, i.e., 
$\av^l(\w_\Gamma(t))=\av^l_\Gamma(t)$. Using this terminology, Lemma~\ref{lma:ctsPath1} and~\ref{lma:ctsPath2} show that in a layer following an extremely wide one, any continuous path of activation values can be realized by a path of parameters of the same layer. }

{
The \textdef{third step} guarantees that, as long as the sequence of dimensions of subsequent hidden layers never increases, realizability of arbitrary paths in layer $l$ implies realizability of arbitrary paths in layer $l+1$ for both activation and pre-activation values. }

\ctsPathInductive*


\begin{proof}
{Since the matrix  $\w^{l+1}$ has full rank, it contains an invertible submatrix $W\in\RR^{n_{l+1}\times n_{l+1}}$. Since we can permute indices of neurons in each layer, we can assume without loss of generality that this submatrix consists of the first $n_{l+1}$ columns of  $\w^{l+1}$.}

For some suitable paths $\lambda(t)\in \RR_{>0}$ and $\delta(t)\in\RR^{n_{l+1}}$, which will be chosen later, we define
\begin{equation*}
\begin{split}
\tilde \nv_\Gamma(t)&:=\nv_\Gamma^{l+1}(t)-\nv_\Gamma^{l+1}(0)\\
\w_\Gamma^{l+1}(t)&:=\lambda(t)\cdot \w^{l+1}\\
\w^{l+1}_{\Gamma,0}(t)&:=\w_0^{l+1}- \delta(t)\\
\av_\Gamma^l(t)&:=\frac{1}{\lambda(t)}\left (\av^l + \left [ \begin{array}{c}  W^{-1} \tilde \nv_\Gamma(t) \\ 0_{n_l-n_{l+1}} \end{array} \right ]+ \left [ \begin{array}{c}  W^{-1} \delta(t)  \\ 0_{n_l-n_{l+1}} \end{array} \right ] \right ).\\ 
\end{split}
\end{equation*}
We then have $\tilde \nv_\Gamma(0)=0$, and we choose $\lambda(0)=1$ and $\delta(0)=0$. This implies that at $t=0$ we have $\av_\Gamma^l(0)= \av^l \in \text{Im}(\sigma)^{n_l\times N}$. Further
\begin{equation*}
\begin{split}
\w_\Gamma^{l+1}(t)\cdot \av_\Gamma^l(t)&+\w^{l+1}_{\Gamma,0}(t) = \lambda(t)\cdot \w^{l+1}\cdot \av_\Gamma^l(t) + \w_0^{l+1}- \delta(t)  \\
 &=\w^{l+1} \av^l + \w_0^{l+1} + \underbrace{\w^{l+1}}_{=[W \ \ast ]}  \left ( \left [ \begin{array}{c} W^{-1} \tilde \nv_\Gamma(t) \\ 0_{n_l-n_{l+1}} \end{array} \right ]+ \left [ \begin{array}{c} W^{-1}  \delta(t)  \\ 0_{n_l-n_{l+1}} \end{array} \right ] \right ) - \delta(t)\\
 &= \nv_\Gamma^{l+1}(0) + \tilde \nv_\Gamma(t) + \delta(t) - \delta(t) = \nv_\Gamma^{l+1}(t)\\
 \end{split}
\end{equation*}
as desired. Note that with $\delta(t)=0$ and $\lambda(t)=1$ for all $t$, we would obtain suitable paths with $\av_\Gamma^l(t)\in\RR^{n_l\times N}$, but due to the activations in the previous layer we must require that $\av_\Gamma^l(t) \in \text{Im}(\sigma)^{n_l\times N}$. Here, we use the full freedom of choosing $\delta(t)$ and $\lambda(t)$ to ensure this. In the case that $0\in (c,d)=\text{Im}(\sigma)$ it suffices to fix $\delta(t)=0$ and to always choose sufficiently large $\lambda(t)>0$ such that $\av_\Gamma^l(t) \in (c,d)^{n_l\times N}$. In the case that $0$ lies on the boundary of the interval $[c,d]$, we also need to to choose $\delta(t)$ to guarantee the correct sign in each component of $\av^l_{\Gamma,k}(t)$, i.e., if $c=0$ then choose $\delta(t)$ such that each entry of $W^{-1}\delta(t)$ is sufficiently large to guarantee that $\left (\av^l + \left [ \begin{array}{c}  W^{-1} \tilde \nv_\Gamma(t) \\ 0_{n_l-n_{l+1}} \end{array} \right ]+ \left [ \begin{array}{c}  W^{-1} \delta(t)  \\ 0_{n_l-n_{l+1}} \end{array} \right ] \right ) \in \RR_{>0}^{n_{l}\times N}$. 
\end{proof}

Combining Lemma~\ref{lma:ctsPath2} and \ref{lma:ctsPathInductive}, we can realize any path of activation values $\av^{l+1}_\Gamma(t)$ in layer $l+1$ by a path of parameters, if arbitrary paths of activation values $\av^{l}_\Gamma(t)$ can be realized in the previous layer $l$.  By Lemma~\ref{lma:ctsPath1} and~\ref{lma:ctsPath2}, arbitrary paths of activation values can be realized in the layer following the extremely wide layer. Hence, by induction over the layers we find that any path at the output is realizable. In the following result, we denote the dependence of the network function of its parameters $\w$ on the training sample $x_\alpha$ by $f(\w;x_\alpha)$.

\finalPath*

\begin{proof}
As outlined before the statement of the lemma, the proof only requires a composition of previous lemmas. We first show by induction over $l$ that, for each $l^*<l \leq L-1$, every continuous path $\av^{l}_\Gamma(t)\in C([0,1],\RR^{n_{l}\times N})$  can be realized by a continuous change of parameters from previous layers. That is, for all $l^*<l \leq L-1$ and all continuous paths $\av^{l}_\Gamma(t)\in C([0,1],\RR^{n_{l}\times N})$ starting at the activations values in layer $l$ for parameters $\w^l$, i.e., $\av^{l}_\Gamma(0)=\av^l(\w)$, there is a continuous path of parameters $\w_\Gamma(t)$ with $\w_\Gamma(0)=\w$ such that the activation values $\av$ as a function of $\w_\Gamma(t)$ satisfy $\av(\w_\Gamma(t))=\av_\Gamma(t)$. 

The base case for the induction, $l=l^*+1$, holds true by combining Lemma~\ref{lma:ctsPath1} and~\ref{lma:ctsPath2}. The induction step is further shown by combining Lemma~\ref{lma:ctsPath2} and \ref{lma:ctsPathInductive}. 

This guarantees all necessary assumptions to also apply Lemma~\ref{lma:ctsPathInductive} to the last layer, showing that any path $f_\Gamma(t)$ can be realized at the output.
\end{proof}

\begin{proof}\textbf{(Theorem~\ref{thm:pathToGlobal})}
Let $\w$ be a given set of parameters for the neural network and $\epsilon>0$. Applying Lemma~\ref{lma:ctsPath1} we find $\w'$ such that (i) $||\w-\w'||<\epsilon$, the activation vectors {$\av^{l^*}_k$} of the extremely wide layer {$l^*$} (containing more neurons than the number of training samples $N$) at parameters $\w'$ satisfy {$$\spn_k \av^{l^*}_k=\RR ^N,$$} {and (iii) the weight matrices $(\w')^l$ have full rank for all $l> l^*+1$.}

Part (ii) and (iii) together with the assumption on the architecture of the network guarantee the assumptions of Lemma~\ref{lma:finalPath} for $\w'$, so that for any continuous path $f_\Gamma:[0,1]\rightarrow \RR^N$ with $f_\Gamma(0)=[f(\w'; x_\alpha)]_\alpha$ there is a continuous path $\w'_\Gamma(t)$ with $\w'_\Gamma(0)=\w'$ and $f_\Gamma(t)=[f( \w'_\Gamma(t); x_\alpha)]_\alpha$. So we only need to specify a desired path at the output, which we can then realize by a continuous change of parameters of the neural network.

With fixed $z_\alpha=f(\w'; x_\alpha)$, the prediction for the current weights, let $\mathbf{z}=[z_\alpha]_\alpha$ denote the vector of predictions, and let $\mathbf{y}=[y_\alpha]_\alpha$ denote the vector of all target values. An obvious path of decreasing loss at the output layer is then given by $t\in [0,1] \mapsto \mathbf{z}+t\cdot (\mathbf{y}-\mathbf{z})$, inducing the loss $\Loss=||\mathbf{z}+t\cdot (\mathbf{y}-\mathbf{z})-\mathbf{y}||_2^2=(1-t)||\mathbf{y}-\mathbf{z}||_2^2$. 
\end{proof}

\bibliography{localVsglobal}


\newpage

\subsection{Calculations for Lemma~\ref{lma:hessian}}\label{app:calculations}

For the calculations we may assume without loss of generality that $r=1$. If we want to consider a different $\n{l,r}{x}$ and its corresponding $\gamma_\lambda^r$, then this can be achieved by a reordering of the indices of neurons.)

We let $\varphi$ denote the network function of the smaller neural network and $f$ the neural network function of the larger network after adding one neuron according to the map $\gamma_\lambda^1$. To distinguish the parameters of $f$ and $\varphi$, we write $w^\varphi$ for the parameters of the network before the embedding. This gives for all $i,s$ and all $m\geq 2$:
\begin{center}
\begin{tabular}{ccccccc}
$u_{-1,i}=u^\varphi_{1,i}$ & &  $u_{1,i}=u^\varphi_{1,i}$ & & 
$v_{s,-1}=\lambda v^\varphi_{s,1}$ & &  $v_{s,1}=(1-\lambda) v^\varphi_{s,1}$\\
$u_{m,i}=u^\varphi_{m,i}$ & &  $v_{s,m}=v^\varphi_{s,m}$ & & 
$\wrest = \wrest^\varphi$ & &  \\
\end{tabular}
\end{center}

We do the same for neuron vectors and activation vectors. Using that $f$ can be considered a composition of functions from consecutive layers, we denote the function from $\n{k}{x}$ to the output by $\h{k}{x}$ and we also specify by usage of an upper-script $\varphi$ when the function $h_{\parm,l+1}$ belongs to the network before the embedding. 

Key to the computation is the fact that all derivatives of $f$ can be naturally written as derivatives of $\varphi$. Concretely, implied by the embedding, all values at neurons $\n{l,i}{x}$ and their activation values $\act{l,i}{x}$ remain unchanged, i.e., we have for all $m\geq 1$ and all $\tilde{l}\neq l$ that 

\begin{center}
\begin{tabular}{ccccc}
$\act{l,-1}{x}=\actphi{l,1}{x}$ & &  $\act{l,m}{x}=\actphi{l,m}{x}$ & & $\act{\tilde{l},m}{x}=\actphi{\tilde{l},m}{x}$ \\
$\n{l,-1}{x}=\nphi{l,1}{x}$ & &  $\n{l,m}{x}=\nphi{l,m}{x}$ & & $\n{\tilde{l},m}{x}=\nphi{\tilde{l},m}{x}$ \\
\end{tabular}
\end{center}
%
%
%
%

\subsubsection{First order derivatives of network functions $f$ and $\varphi$.}
For the function $f$ we have the following partial derivatives.
$$\frac{\partial f(x)}{\partial u_{p,i}} = \sum_k \frac{\partial \h{l+1}{\n{l+1}{x}}}{\partial \n{l+1,k}{x}} \cdot v_{k,p}\cdot \sigma'(\n{l,p}{x} ) \cdot \act{l-1,i}{x}$$
and 
$$\frac{\partial f(x)}{\partial v_{s,q}} =  \frac{\partial \h{l+1}{\n{l+1}{x}}}{\partial \n{l+1,s}{x}} \cdot \act{l,q}{x}$$
The analogous equations hold for $\varphi$. 

\subsubsection{Relating first order derivatives of network functions $f$ and $\varphi$}

Therefore, at $([u_{1,i}]_i,[v_{s,1}]_s,\wrest)$ and $\gamma_\lambda^1([u_{1,i}]_i,[v_{s,1}]_s,\wrest)$ respectively, we get for $k=-1,1$ that 
\begin{equation*}
\frac{\partial f(x)}{\partial u_{-1,i}} = \lambda \frac{\partial \varphi(x)}{\partial u^\varphi_{1,i}}\ \text{, and }\
\frac{\partial f(x)}{\partial u_{1,i}} = (1-\lambda) \frac{\partial \varphi(x)}{\partial u^\varphi_{1,i}}\ \text{, and }\
\frac{\partial f(x)}{\partial v_{s,k}} =  \frac{\partial \varphi(x)}{\partial v^\varphi_{s,1}} \\
\end{equation*}
and for $k\geq 2$ we get that 
\begin{equation*}
\frac{\partial f(x)}{\partial u_{k,i}} =  \frac{\partial \varphi(x)}{\partial u^\varphi_{k,i}}\text{, and}\
\frac{\partial f(x)}{\partial v_{s,k}} =  \frac{\partial \varphi(x)}{\partial v^\varphi_{s,k}}.
\end{equation*}

\subsubsection{Second order derivatives of network functions $f$ and $\varphi$.}

For the second derivatives we get (with $\delta(a,a)=1$ and $\delta(a,b)=0$ for $a\neq b$) 

\begin{equation*}
\begin{split}\frac{\partial^2 f(x)}{\partial u_{p,i} \partial u_{q,j}} & = \frac{\partial}{\partial u_{q,j}}\left (
\sum_k \frac{\partial \h{l+1}{\n{l+1}{x}}}{\partial \n{l+1,k}{x}} \cdot v_{k,p}\cdot \sigma'(\n{l,p}{x} ) \cdot \act{l-1,i}{x}
 \right )\\
&=\sum_m \sum_k  \frac{\partial^2 \h{l+1}{\n{l+1}{x}}}{\partial \n{l+1,m}{x} \partial \n{l+1,k}{x}}  \cdot v_{m,q}\cdot \sigma'(\n{l,q}{x} ) \cdot  \act{l-1,j}{x}  \\ &  \cdot v_{k,p}\cdot \sigma'(\n{l,p}{x} ) \cdot \act{l-1,i}{x} \\
& + \delta(p,q)  \sum_k \frac{\partial \h{l+1}{\n{l+1}{x}}}{\partial \n{l+1,k}{x}} \cdot v_{k,p}\cdot \sigma''(\n{l,p}{x} ) \\ \cdot & \act{l-1,i}{x}\cdot \act{l-1,j}{x}
\end{split}
\end{equation*}
and
\begin{equation*}
\begin{split}\frac{\partial^2 f(x)}{\partial v_{s,p} \partial v_{t,q}} & = \frac{\partial}{\partial v_{t,q}}\left (  \frac{\partial \h{l+1}{\n{l+1}{x}}}{\partial \n{l+1,s}{x}} \cdot \act{l,p}{x} \right )\\
& = \frac{\partial^2 \h{l+1}{\n{l+1}{x}}}{\partial \n{l+1,s}{x}\partial \n{l+1,t}{x}} \cdot \act{l,p}{x} \cdot \act{l,q}{x}
\end{split}
\end{equation*}
and
\begin{equation*}
\begin{split}\frac{\partial^2 f(x)}{\partial u_{p,i} \partial v_{s,q}} & = \frac{\partial}{\partial v_{s,q}}\left (
\sum_k \frac{\partial \h{l+1}{\n{l+1}{x}}}{\partial \n{l+1,k}{x}} \cdot v_{k,p}\cdot \sigma'(\n{l,p}{x} ) \cdot \act{l-1,i}{x}
 \right )\\
&= \sum_k  \frac{\partial^2 \h{l+1}{\n{l+1}{x}}}{\partial \n{l+1,s}{x} \partial \n{l+1,k}{x}}  \cdot \act{l,q}{x} \cdot v_{k,p} \\ & \cdot \sigma'(\n{l,p}{x}) \cdot \act{l-1,i}{x}  \\
& +  \delta (q,p) \cdot \frac{\partial \h{l+1}{\n{l+1}{x}}}{\partial \n{l+1,s}{x}} \cdot \sigma'(\n{l,p}{x}) \cdot \act{l-1,i}{x}
\end{split}
\end{equation*}
For a parameter $w$ closer to the input than $[u_{p,i}]_{p,i},[v_{s,q}]_{s,q}$, we have
\begin{equation*}
\begin{split}\frac{\partial^2 f(x)}{\partial u_{p,i} \partial w} & = \frac{\partial}{\partial w}\left (
\sum_k \frac{\partial \h{l+1}{\n{l+1}{x}}}{\partial \n{l+1,k}{x}} \cdot v_{k,p}\cdot \sigma'(\n{l,p}{x} ) \cdot \act{l-1,i}{x}
 \right )\\
 & = \sum_m \sum_k \frac{\partial \h{l+1}{\n{l+1}{x}}}{\partial \n{l+1,k}{x} \partial \n{l+1,m}{x}} \cdot \frac{\partial \n{l+1,m}{x}}{\partial w} \cdot v_{k,p}\\ & \cdot \sigma'(\n{l,p}{x} ) \cdot \act{l-1,i}{x}\\ & +
\sum_k \frac{\partial \h{l+1}{\n{l+1}{x}}}{\partial \n{l+1,k}{x}} \cdot v_{k,p}\cdot \sigma''(\n{l,p}{x} ) \cdot  \frac{\partial \n{l,p}{x}}{\partial w}\cdot  \act{l-1,i}{x} \\
&+  \sum_k \frac{\partial \h{l+1}{\n{l+1}{x}}}{\partial \n{l+1,k}{x}} \cdot v_{k,p}\cdot \sigma'(\n{l,p}{x} ) \cdot \frac{\partial \act{l-1,i}{x}}{\partial w}
\end{split}
\end{equation*}
and 
\begin{equation*}
\begin{split}\frac{\partial^2 f(x)}{\partial v_{s,q} \partial w} & = \frac{\partial}{\partial w}\left (  \frac{\partial \h{l+1}{\n{l+1}{x}}}{\partial \n{l+1,s}{x}} \cdot \act{l,q}{x} \right )\\
& = \sum_n \frac{\partial^2 \h{l+1}{\n{l+1}{x}}}{\partial \n{l+1,s}{x}\partial \n{l+1,n}{x}}\cdot \frac{ \partial \n{l+1,n}{x}}{\partial w} \cdot \act{l,q}{x} \cdot \act{l,q}{x} \\
& + \frac{\partial \h{l+1}{\n{l+1}{x}}}{\partial \n{l+1,s}{x}} \cdot \frac{\partial \act{l,q}{x}}{\partial w}
\end{split}
\end{equation*}
For a parameter $w$ closer to the output than $[u_{p,i}]_{p,i},[v_{s,q}]_{s,q}$, we have
\begin{equation*}
\begin{split}\frac{\partial^2 f(x)}{\partial u_{p,i} \partial w} & = \frac{\partial}{\partial w}\left (
\sum_k \frac{\partial \h{l+1}{\n{l+1}{x}}}{\partial \n{l+1,k}{x}} \cdot v_{k,p}\cdot \sigma'(\n{l,p}{x} ) \cdot \act{l-1,i}{x}
 \right )\\
 &= \sum_k \frac{\partial^2 \h{l+1}{\n{l+1}{x}}}{\partial \n{l+1,k}{x}\partial w} \cdot v_{k,p}\cdot \sigma'(\n{l,p}{x} ) \cdot \act{l-1,i}{x}
\end{split}
\end{equation*}


 

\subsubsection{Relating second order derivatives of network functions $f$ and $\varphi$}

To relate the second derivatives of $f$ at $\gamma_\lambda^1([u_{1,i}]_i,[v_{s,1}]_s,\wrest)$ to the second derivatives of $\varphi$ at $([u_{1,i}]_i,[v_{s,1}]_s,\wrest)$, we define 
\begin{equation*}
\begin{split}A_{i,j}^{p,q}(x) & := \sum_m \sum_k \frac{\partial^2 \hphi{l+1}{\nphi{l+1}{x}}}{\partial \nphi{l+1,m}{x} \partial \nphi{l+1,k}{x}} \\ &  \cdot v^\varphi_{m,q}\cdot \sigma'(\nphi{l,q}{x} ) \cdot  \actphi{l-1,j}{x} \cdot  v^\varphi_{k,p}\cdot \sigma'(\nphi{l,p}{x} ) \cdot \actphi{l-1,i}{x}\\
B_{i,j}^{p}(x)& := \sum_k \frac{\partial \hphi{l+1}{\nphi{l+1}{x}}}{\partial \nphi{l+1,k}{x}} \cdot v^\varphi_{k,p}\cdot \sigma''(\nphi{l,p}{x} ) \cdot \actphi{l-1,i}{x}\cdot \actphi{l-1,j}{x}\\
C_{i,q}^{p,s}(x)& := \sum_k  \frac{\partial^2 \hphi{l+1}{\n{l+1}{x}}}{\partial \nphi{l+1,s}{x} \partial \nphi{l+1,k}{x}}  \cdot \actphi{l,q}{x} \cdot  v^\varphi_{k,p} \cdot \sigma'(\nphi{l,p}{x}) \cdot \actphi{l-1,i}{x} \\
D_{i}^{p,s}(x)& := \frac{\partial \hphi{l+1}{\nphi{l+1}{x}}}{\partial \nphi{l+1,s}{x}} \cdot \sigma'(\nphi{l,p}{x}) \cdot \actphi{l-1,i}{x}\\
E_{p,q}^{s,t}(x)&:= \frac{\partial^2 \hphi{l+1}{\nphi{l+1}{x}}}{\partial \nphi{l+1,s}{x}\partial \nphi{l+1,t}{x}} \cdot \actphi{l,p}{x} \cdot \actphi{l,q}{x}
\end{split}
\end{equation*}
Then for all $i,j,p,q,s,t$, we have
\begin{equation*}
\begin{split}\frac{\partial^2 \varphi (x)}{\partial u^\varphi_{p,i} \partial u^\varphi_{q,j}} &=A_{i,j}^{p,q}(x) + \delta(q,p) B_{i,j}^{p}(x)\\
\frac{\partial^2 \varphi(x)}{\partial u^\varphi_{p,i} \partial v^\varphi_{s,q}} &= C_{i,q}^{p,s}(x) + \delta(q,p) D_{i}^{p,s}(x)\\
\frac{\partial^2 \varphi(x)}{\partial v_{s,p} \partial v_{t,q}} & = E_{p,q}^{s,t}(x)
\end{split}
\end{equation*}

For $f$ we get for $p,q\in\{-1,1\}$ and all $i,j,s,t$ 
\begin{equation*}
\begin{split}\frac{\partial^2 f (x)}{\partial u_{-1,i} \partial u_{-1,j}} &=\lambda^2 A_{i,j}^{1,1}(x) + \lambda B_{i,j}^{1}(x)  \\ \frac{\partial^2 f (x)}{\partial u_{1,i} \partial u_{1,j}} &=(1-\lambda)^2 A_{i,j}^{1,1}(x) + (1-\lambda) B_{i,j}^{1}(x)\\
\frac{\partial^2 f (x)}{\partial u_{-1,i} \partial u_{1,j}}  &= \frac{\partial^2 f (x)}{\partial u_{1,i} \partial u_{-1,j}}= \lambda(1-\lambda)\cdot A_{i,j}^{1,1}(x)\\
\frac{\partial^2 f (x)}{\partial u_{-1,i} \partial v_{s,-1}} &= \lambda C_{i,1}^{1,s}(x) +  D_{i}^{1,s}(x)\\
\frac{\partial^2 f (x)}{\partial u_{1,i} \partial v_{s,1}} &= (1-\lambda) C_{i,1}^{1,s}(x) +  D_{i}^{1,s}(x)\\
\frac{\partial^2 f (x)}{\partial u_{-1,i} \partial v_{s,1}}  & = \lambda \cdot C_{i,1}^{1,s}(x)=\lambda \cdot \frac{\partial^2 \varphi (x)}{\partial u^\varphi_{1,i} \partial v^\varphi_{s,1}} \\
\frac{\partial^2 f (x)}{\partial u_{1,i} \partial v_{s,-1}}  & = (1-\lambda)\cdot  C_{i,1}^{1,s}(x)=(1-\lambda)\cdot \frac{\partial^2 \varphi (x)}{\partial u^\varphi_{1,i} \partial v^\varphi_{s,1}} \\
\frac{\partial^2 f(x)}{\partial v_{s,p} \partial v_{t,q}} & = E_{1,1}^{s,t}(x)=\frac{\partial^2 \varphi(x)}{\partial v^\varphi_{s,1} \partial v^\varphi_{t,1}}
\end{split}
\end{equation*}
and for $q\geq 2$ and $p\in\{-1,1\}$ and all $i,j,s,t$ 
\begin{equation*}
\begin{split}\frac{\partial^2 f (x)}{\partial u_{-1,i} \partial u_{q,j}}  & = \lambda A_{i,j}^{1,q}(x)  = \lambda\cdot  \frac{\partial^2 \varphi (x)}{\partial u^\varphi_{1,i} \partial u^\varphi_{q,j}}\\
\frac{\partial^2 f (x)}{\partial u_{1,i} \partial u_{q,j}}  & = (1-\lambda) A_{i,j}^{1,q}(x) =(1-\lambda)\cdot  \frac{\partial^2 \varphi (x)}{\partial u^\varphi_{1,i} \partial u^\varphi_{q,j}}\\
\frac{\partial^2 f (x)}{\partial u_{-1,i} \partial v_{s,q}}  & = \lambda C_{i,q}^{1,s}(x)  = \lambda \cdot \frac{\partial^2 \varphi (x)}{\partial u^\varphi_{1,i} \partial v^\varphi_{s,q}}\\
\frac{\partial^2 f (x)}{\partial u_{1,i} \partial v_{s,q}}  & = (1-\lambda) C_{i,q}^{1,s}(x)  = (1-\lambda) \frac{\partial^2 \varphi (x)}{\partial u^\varphi_{1,i} \partial v^\varphi_{s,q}}\\
\frac{\partial^2 f (x)}{\partial u_{q,i} \partial v_{s,p}}  & = C_{i,1}^{q,s}(x)  =\frac{\partial^2 \varphi (x)}{\partial u^\varphi_{q,i} \partial v^\varphi_{s,1}} \\
\frac{\partial^2 f(x)}{\partial v_{s,p} \partial v_{t,q}} & = E_{1 ,q}^{s,t}(x)=\frac{\partial^2 \varphi(x)}{\partial v^\varphi_{s,1} \partial v^\varphi_{t,q}}\\
\end{split}
\end{equation*}
and for $p,q\geq 2$ and all $i,j,s,t$ 
\begin{equation*}
\begin{split}\frac{\partial^2 f (x)}{\partial u_{p,i} \partial u_{q,j}} &= A_{i,j}^{p,q}(x) + \delta(q,p) B_{i,j}^{p}(x)=\frac{\partial^2 \varphi (x)}{\partial u^\varphi_{p,i} \partial u^\varphi_{q,j}}\\
\frac{\partial^2 f (x)}{\partial u_{p,i} \partial v_{s,q}} &= C_{i,q}^{p,s}(x) + \delta(q,p) D_{i}^{p,s}(x)=\frac{\partial^2 \varphi (x)}{\partial u^\varphi_{p,i} \partial v^\varphi_{s,q}}\\
\frac{\partial^2 f(x)}{\partial v_{s,p} \partial v_{t,q}} & = E_{p ,q}^{s,t}(x)=\frac{\partial^2 \varphi(x)}{\partial v^\varphi_{s,p} \partial v^\varphi_{t,q}}
\end{split}
\end{equation*}

\subsubsection{Derivatives of $\Loss$ and $\loss$}

Let 
$$A_{i,j}^{p,q} :=\sum_\alpha (\varphi(x_\alpha)- y_\alpha) \cdot A_{i,j}^{p,q}(x_\alpha)$$
$$B_{i,j}^{p} := \sum_\alpha (\varphi(x_\alpha)- y_\alpha) \cdot B_{i,j}^{p}(x_\alpha)$$
$$C_{i,q}^{p,s} :=\sum_\alpha (\varphi(x_\alpha)- y_\alpha) \cdot C_{i,q}^{p,s}(x_\alpha) $$
$$D_{i}^{p,s}  :=\sum_\alpha (\varphi(x_\alpha)- y_\alpha) \cdot D_{i}^{p,s}(x_\alpha)$$
$$E_{p,q}^{s,t} :=\sum_\alpha (\varphi(x_\alpha)- y_\alpha) \cdot E_{p,q}^{s,t}(x_\alpha) $$
and
$$\mathcal{A}_{i,j}^{p,q}:= \sum_\alpha \left ( \frac{\partial \varphi(x_\alpha)}{\partial u^\varphi_{p,i}} \right ) \cdot \left ( \frac{\partial \varphi(x_\alpha)}{\partial u^\varphi_{q,j}}\right ) $$
$$\mathcal{C}_{i,q}^{p,s}:= \sum_\alpha \left ( \frac{\partial \varphi(x_\alpha)}{\partial u^\varphi_{p,i}} \right ) \cdot \left ( \frac{\partial \varphi(x_\alpha)}{\partial v^\varphi_{s,q}}\right ) $$
$$\mathcal{E}_{p,q}^{s,t}:= \sum_\alpha \left ( \frac{\partial \varphi(x_\alpha)}{\partial v^\varphi_{s,p}} \right ) \cdot \left ( \frac{\partial \varphi(x_\alpha)}{\partial v^\varphi_{t,q}}\right ) $$

Now we have everything together to compute the derivatives of the loss. For the first derivative of the loss, we have for any variables $w,r$ that $$\frac{\partial \Loss}{\partial w}=\sum_\alpha (f(x_\alpha)-y_\alpha)\cdot \frac{\partial f(x_\alpha )}{\partial w}$$
and
$$\frac{\partial \loss}{\partial w^\varphi}=\sum_\alpha (\varphi(x_\alpha)-y_\alpha)\cdot \frac{\partial \varphi(x_\alpha )}{\partial w^\varphi}.$$
From this it follows immediately that if $\frac{\partial \loss}{\partial w^\varphi}(\w^\varphi)=0$, then $\frac{\partial \Loss}{\partial w}(\gamma_\lambda^1(\w^\varphi))=0$ for all $\lambda$ \citep[cf.][]{FukumizuAmari, Nitta}.

For the second derivative we get
$$\frac{\partial^2 \Loss}{\partial w \partial r}=\sum_\alpha (f(x_\alpha)-y_\alpha)\cdot \frac{\partial^2 f(x_\alpha )}{\partial w \partial r} + \sum_\alpha \left ( \frac{\partial f(x_\alpha )}{\partial w}\right ) \cdot \left (  \frac{\partial f(x_\alpha )}{\partial r}\right ) $$
This leads to the following equations for $\loss$
\begin{equation*}
\begin{split}
\frac{\partial^2 \loss}{\partial u^\varphi_{p,i} \partial u^\varphi_{q,j}}&= A_{p,q}^{1,1} +  \delta(p,q) B_{i,j}^p + \mathcal{A}_{i,j}^{p,q}\\
\frac{\partial^2 \loss}{\partial u^\varphi_{p,i} \partial v^\varphi_{s,q}}&= C_{i,q}^{p,s} + \delta(p,q) D_{i}^{p,s} +  \mathcal{C}_{i,q}^{p,s}\\
\frac{\partial^2 \loss}{\partial v^\varphi_{s,p} \partial v_{t,q}}&=  E_{p,q}^{s,t} + \mathcal{E}_{p,q}^{s,t} \\
\end{split}
\end{equation*}
For $\Loss$ we get for $p,q\in\{-1,1\}$ and all $i,j,s,t$ at $\gamma_\lambda^1([u_{1,i}]_i,[v_{s,1}]_s,\wrest)$
\begin{equation*}
\begin{split}
\frac{\partial^2 \Loss}{\partial u_{-1,i} \partial u_{-1,j}}&= \lambda^2 A_{i,j}^{1,1} + \lambda B_{i,j}^1 + \lambda^2  \mathcal{A}_{i,j}^{1,1}\\
\frac{\partial^2 \Loss}{\partial u_{1,i} \partial u_{1,j}}&= (1-\lambda)^2 A_{i,j}^{1,1} + (1-\lambda) B_{i,j}^1 + (1-\lambda)^2 \mathcal{A}_{i,j}^{1,1}\\
\frac{\partial^2 \Loss}{\partial u_{-1,i} \partial u_{1,j}}&= \lambda(1-\lambda) A_{i,j}^{1,1} + \lambda(1-\lambda) \mathcal{A}_{i,j}^{1,1}\\
\frac{\partial^2 \Loss}{\partial u_{-1,i} \partial v_{s,-1}}&= \lambda C_{i,1}^{1,s} + D_{i}^{1,s} + \lambda \mathcal{C}_{i,1}^{1,s}\\
\frac{\partial^2 \Loss}{\partial u_{1,i} \partial v_{s,1}}&= (1-\lambda) C_{i,1}^{1,s} + D_{i}^{1,s} + (1-\lambda) \mathcal{C}_{i,1}^{1,s}\\
\frac{\partial^2 \Loss}{\partial u_{-1,i} \partial v_{s,1}}&= \lambda C_{i,1}^{1,s}  + \lambda \mathcal{C}_{i,1}^{1,s}\\
\frac{\partial^2 \Loss}{\partial u_{1,i} \partial v_{s,-1}}&= (1-\lambda) C_{i,1}^{1,s}  + (1-\lambda) \mathcal{C}_{i,1}^{1,s}\\
\frac{\partial^2 \Loss}{\partial v_{s,p} \partial v_{t,q}}&=  E_{1,1}^{s,t} + \mathcal{E}_{1,1}^{s,t} \\
\end{split}
\end{equation*}
and for $q\geq 2$ and $p\in\{-1,1\}$ and all $i,j,s,t$ 
\begin{equation*}
\begin{split}\frac{\partial^2 \Loss}{\partial u_{-1,i} \partial u_{q,j}}  & = \lambda A_{i,j}^{1,q} +\lambda  \mathcal{A}_{i,j}^{1,1} \\
\frac{\partial^2 \Loss}{\partial u_{1,i} \partial u_{q,j}}  & = (1-\lambda) A_{i,j}^{1,q} + (1-\lambda) \mathcal{A}_{i,j}^{1,q}\\
\frac{\partial^2 \Loss}{\partial u_{-1,i} \partial v_{s,q}}  & = \lambda C_{i,q}^{1,s} + \lambda \mathcal{C}_{i,q}^{1,s} \\
\frac{\partial^2 \Loss}{\partial u_{1,i} \partial v_{s,q}}  & = (1-\lambda) C_{i,q}^{1,s} +(1-\lambda) \mathcal{C}_{i,q}^{1,s} \\
\frac{\partial^2 \Loss }{\partial u_{q,i} \partial v_{s,p}}  & = C_{i,p}^{q,s} + \mathcal{C}_{i,p}^{q,s} \\
\frac{\partial^2 \Loss}{\partial v_{s,p} \partial v_{t,q}} & = E_{1 ,q}^{s,t} + \mathcal{E}_{1 ,q}^{s,t} \\
\end{split}
\end{equation*}
and for $p,q\geq 2$ and all $i,j,s,t$ 
\begin{equation*}
\begin{split}\frac{\partial^2 \Loss }{\partial u_{p,i} \partial u_{q,j}} &= A_{i,j}^{p,q} + \delta(q,p) B_{i,j}^{p}(x)+ \mathcal{A}_{i,j}^{p,q}=\frac{\partial^2 \loss}{\partial u^\varphi_{p,i} \partial u^\varphi_{q,j}}\\
\frac{\partial^2 \Loss }{\partial u_{p,i} \partial v_{s,q}} &= C_{i,q}^{p,s} + \delta(q,p) D_{i}^{p,s}+\mathcal{C}_{i,q}^{p,s}=\frac{\partial^2 \loss}{\partial u^\varphi_{p,i} \partial v^\varphi_{s,q}}\\
\frac{\partial^2 \Loss}{\partial v_{s,p} \partial v_{t,q}} & = E_{p ,q}^{s,t}+\mathcal{E}_{p ,q}^{s,t}=\frac{\partial^2 \loss}{\partial v^\varphi_{s,p} \partial v^\varphi_{t,q}}
\end{split}
\end{equation*}

\subsubsection{Change of basis}

Choose any real numbers $\alpha\neq - \beta$ such that $\lambda=\frac{\beta}{\alpha+\beta}$ (equivalently $\alpha\lambda-\beta (1-\lambda)=0$) and set 

\begin{center}
\begin{tabular}{ccc}
$\mu_{-1,i}=u_{-1,i}+u_{1,i}$& & $\mu_{1,i}=\alpha \cdot u_{-1,i} - \beta\cdot u_{1,i}$ \\
 $\nu_{s,-1}=v_{s,-1}+v_{s,1}$ & & $\nu_{s,1}=v_{s,-1}-v_{s,1}$. 
\end{tabular}
\end{center}




Then at $\gamma_\lambda^1([u_{1,i}]_i,[v_{s,1}]_s,\wrest)$,
\begin{equation*}
\begin{split}
\frac{\partial^2 \Loss}{\partial \mu_{-1,i} \partial \mu_{-1,j}} &=\left ( \frac{\partial  }{\partial u_{-1,i}} + \frac{\partial}{\partial u_{1,i}}\right ) \left ( \frac{\partial \Loss(x)}{\partial u_{-1,j}} + \frac{\partial \Loss(x) }{\partial u_{1,j}}\right ) 
\\ & = \frac{\partial^2 L(x)}{\partial u_{-1,i} \partial u_{-1,j}} + \frac{\partial^2 \Loss(x) }{\partial u_{-1,i}\partial u_{1,j}}
+ \frac{\partial^2 \Loss(x)}{\partial u_{1,i}\partial u_{-1,j}} + \frac{\partial^2 \Loss (x) }{\partial u_{1,i}\partial u_{1,j}}
\\ & = \left ( \lambda^2 A_{i,j}^{1,1} + \lambda B_{i.j}^1 + \lambda^2 \mathcal{A}_{i,j}^{1,1} \right ) + \left ( \lambda(1-\lambda) A_{i,j}^{1,1} + \lambda(1-\lambda) \mathcal{A}_{i,j}^{1,1} \right ) \\
& + \left ( \lambda(1-\lambda) A_{i,j}^{1,1} +\lambda(1-\lambda) \mathcal{A}_{i,j}^{1,1} \right )  + \left ( (1-\lambda)^2 A_{i,j}^{1,1} + (1-\lambda) B_{i.j}^1 + (1-\lambda)^2 \mathcal{A}_{i,j}^{1,1} \right )
\\ & =  A_{i,j}^{1,1} + B_{i.j}^1 + \mathcal{A}_{i,j}^{1,1}
\end{split}
\end{equation*}

\begin{equation*}
\begin{split}
\frac{\partial^2 \Loss}{\partial \mu_{1,i} \partial \mu_{1,j}} &=\left (\alpha \frac{\partial  }{ \partial u_{-1,i}} -\beta \frac{\partial}{\partial u_{1,i}}\right )\left (\alpha \frac{\partial \Loss(x)}{\partial u_{-1,j}} -\beta \frac{\partial \Loss(x) }{\partial u_{1,j}}\right )
\\ & = \alpha^2 \frac{\partial^2 \Loss(x)}{\partial u_{-1,i} \partial u_{-1,j}} -\alpha\beta \frac{\partial^2 \Loss(x) }{\partial u_{-1,i}\partial u_{1,j}}
- \alpha\beta\frac{\partial^2 \Loss(x)}{\partial u_{1,i}\partial u_{-1,j}} + \beta^2 \frac{\partial^2 \Loss(x) }{\partial u_{1,i}\partial u_{1,j}}
\\ & =\alpha^2 \left ( \lambda^2 A_{i,j}^{1,1} + \lambda B_{i.j}^1 + \lambda^2 \mathcal{A}_{i,j}^{1,1} \right ) -\alpha\beta \left ( \lambda(1-\lambda) A_{i,j}^{1,1} + \lambda(1-\lambda) \mathcal{A}_{i,j}^{1,1} \right ) \\ 
& - \alpha\beta \left ( \lambda(1-\lambda) A_{i,j}^{1,1} + \lambda(1-\lambda) \mathcal{A}_{i,j}^{1,1} \right ) \\ &  +\beta^2 \left ( (1-\lambda)^2 A_{i,j}^{1,1} + (1-\lambda) B_{i.j}^1 + (1-\lambda)^2 \mathcal{A}_{i,j}^{1,1} \right )
\\ & =   \alpha\beta B_{i.j}^1 
\end{split}
\end{equation*}

\begin{equation*}
\begin{split}
\frac{\partial^2 \Loss}{\partial \mu_{-1,i} \partial \mu_{1,j}} &=\left ( \frac{\partial  }{\partial u_{-1,i}} + \frac{\partial}{\partial u_{1,i}}\right )\left (\alpha \frac{\partial \Loss(x)}{\partial u_{-1,j}} -\beta \frac{\partial \Loss(x) }{\partial u_{1,j}}\right )
\\ & =\alpha \frac{\partial^2 \Loss(x)}{\partial u_{-1,i} \partial u_{-1,j}} -\beta \frac{\partial^2 \Loss(x) }{\partial u_{-1,i}\partial u_{1,j}}
+ \alpha \frac{\partial^2 \Loss(x)}{\partial u_{1,i}\partial u_{-1,j}} -\beta \frac{\partial^2 \Loss(x) }{\partial u_{1,i}\partial u_{1,j}}
\\ & = \alpha \left ( \lambda^2 A_{i,j}^{1,1} + \lambda B_{i.j}^2 + \lambda^2 \mathcal{A}_{i,j}^{1,1} \right ) - \beta \left ( \lambda(1-\lambda) A_{i,j}^{1,1} + \lambda(1-\lambda) \mathcal{A}_{i,j}^{1,1} \right )\\
& + \alpha \left ( \lambda(1-\lambda) A_{i,j}^{1,1} + \lambda(1-\lambda) \mathcal{A}_{i,j}^{1,1} \right ) -\beta \left ((1-\lambda)^2  A_{i,j}^{1,1} + (1-\lambda) B_{i.j}^2 + (1-\lambda)^2  \mathcal{A}_{i,j}^{1,1} \right )
\\ & = 0
\end{split}
\end{equation*}

\begin{equation*}
\begin{split}
\frac{\partial^2 \Loss}{\partial \nu_{s,-1} \partial \nu_{t,-1}} &=\left ( \frac{\partial  }{\partial v_{s,-1}} + \frac{\partial}{\partial v_{s,1}}\right )\left ( \frac{\partial \Loss(x)}{\partial v_{t,-1}} + \frac{\partial \Loss(x) }{\partial v_{t,1}}\right )
\\ & =\frac{\partial^2 \Loss(x)}{\partial v_{s,-1} \partial v_{t,-1}} + \frac{\partial^2 \Loss(x) }{\partial v_{s,-1}\partial v_{t,1}}
+ \frac{\partial^2 \Loss(x)}{\partial v_{s,1}\partial v_{t,-1}} + \frac{\partial^2 \Loss(x) }{\partial v_{s,1}\partial v_{t,1}}
\\ & = \left ( E_{1,1}^{s,t} + \mathcal{E}_{1,1}^{s,t}\right ) + \left ( E_{1,1}^{s,t} + \mathcal{E}_{1,1}^{s,t}\right ) + \left ( E_{1,1}^{s,t} + \mathcal{E}_{1,1}^{s,t} \right ) + \left ( E_{1,1}^{s,t} + \mathcal{E}_{1,1}^{s,t} \right )
\\ & = 4 E_{1,1}^{s,t} + 4 \mathcal{E}_{1,1}^{s,t}
\end{split}
\end{equation*}

\begin{equation*}
\begin{split}
\frac{\partial^2 \Loss}{\partial \nu_{s,1} \partial \nu_{t,1}} &=\left ( \frac{\partial  }{\partial v_{s,-1}} - \frac{\partial}{\partial v_{s,1}}\right )\left ( \frac{\partial \Loss(x)}{\partial v_{t,-1}} - \frac{\partial \Loss(x) }{\partial v_{t,1}}\right )
\\ & =\frac{\partial^2 \Loss(x)}{\partial v_{s,-1} \partial v_{t,-1}} - \frac{\partial^2 \Loss(x) }{\partial v_{s,-1}\partial v_{t,1}}
- \frac{\partial^2 \Loss(x)}{\partial v_{s,1}\partial v_{t,-1}} + \frac{\partial^2 \Loss(x) }{\partial v_{s,1}\partial v_{t,1}}
\\ & = \left ( E_{1,1}^{s,t} + \mathcal{E}_{1,1}^{s,t} \right ) -\left (  E_{1,1}^{s,t} + \mathcal{E}_{1,1}^{s,t} \right )
- \left ( E_{1,1}^{s,t} + \mathcal{E}_{1,1}^{s,t}  \right ) + \left ( E_{1,1}^{s,t} + \mathcal{E}_{1,1}^{s,t} \right )
\\ & = 0
\end{split}
\end{equation*}

\begin{equation*}
\begin{split}
\frac{\partial^2 \Loss}{\partial \nu_{s,-1} \partial \nu_{t,1}} &=\left ( \frac{\partial  }{\partial v_{s,-1}} + \frac{\partial}{\partial v_{s,1}}\right )\left ( \frac{\partial \Loss(x)}{\partial v_{t,-1}} - \frac{\partial \Loss(x) }{\partial v_{t,1}}\right )
\\ & =\frac{\partial^2 \Loss(x)}{\partial v_{s,-1} \partial v_{t,-1}} - \frac{\partial^2 \Loss(x) }{\partial v_{s,-1}\partial v_{t,1}}
+ \frac{\partial^2 \Loss(x)}{\partial v_{s,1}\partial v_{t,-1}} - \frac{\partial^2 \Loss(x) }{\partial v_{s,1}\partial v_{t,1}}
\\ & =\left ( E_{1,1}^{s,t} + \mathcal{E}_{1,1}^{s,t} \right ) - \left ( E_{1,1}^{s,t} + \mathcal{E}_{1,1}^{s,t} \right )
+ \left ( E_{1,1}^{s,t} + \mathcal{E}_{1,1}^{s,t} \right )  - \left ( E_{1,1}^{s,t} + \mathcal{E}_{1,1}^{s,t} \right )
\\ & = 0
\end{split}
\end{equation*}

\begin{equation*}
\begin{split}
\frac{\partial^2 \Loss}{\partial \mu_{-1,i} \partial \nu_{s,-1}} &=\left ( \frac{\partial  }{\partial u_{-1,i}} + \frac{\partial}{\partial u_{1,i}}\right )\left ( \frac{\partial \Loss(x)}{\partial v_{s,-1}} + \frac{\partial \Loss(x) }{\partial v_{s,1}}\right )
\\ & =\frac{\partial^2 \Loss(x)}{\partial u_{-1,i} \partial v_{s,-1}} + \frac{\partial^2 \Loss(x) }{\partial u_{-1,i}\partial v_{s,1}}
+ \frac{\partial^2 \Loss(x)}{\partial u_{1,i}\partial v_{s,-1}} + \frac{\partial^2 \Loss(x) }{\partial u_{1,i}\partial v_{s,1}}
\\ & = \left ( \lambda C_{i,1}^{1,s} + D_{i}^{1,s} + \lambda \mathcal{C}_{i,1}^{1,s} \right ) + \left ( \lambda C_{i,1}^{1,s}  + \lambda \mathcal{C}_{i,1}^{1,s} \right )\\ & + \left ( (1-\lambda) C_{i,1}^{1,s}  + (1-\lambda) \mathcal{C}_{i,1}^{1,s} \right ) + \left ( (1-\lambda) C_{i,1}^{1,s} + D_{i}^{1,s} + (1-\lambda) \mathcal{C}_{i,1}^{1,s} \right )
\\ & = 2 C_{i,1}^{1,s} + 2 D_{i}^{1,s} + 2 \mathcal{C}_{i,1}^{1,s}
\end{split}
\end{equation*}

\begin{equation*}
\begin{split}
\frac{\partial^2 \Loss}{\partial \mu_{-1,i} \partial \nu_{s,1}} &=\left ( \frac{\partial  }{\partial u_{-1,i}} + \frac{\partial}{\partial u_{1,i}}\right )\left ( \frac{\partial \Loss(x)}{\partial v_{s,-1}} - \frac{\partial \Loss(x) }{\partial v_{s,1}}\right )
\\ & =\frac{\partial^2 \Loss(x)}{\partial u_{-1,i} \partial v_{s,-1}} - \frac{\partial^2 \Loss(x) }{\partial u_{-1,i}\partial v_{s,1}}
+ \frac{\partial^2 \Loss(x)}{\partial u_{1,i}\partial v_{s,-1}} - \frac{\partial^2 \Loss(x) }{\partial u_{1,i}\partial v_{s,1}}
\\ & = \left ( \lambda C_{i,1}^{1,s} + D_{i}^{1,s} + \lambda \mathcal{C}_{i,1}^{1,s} \right )  -  \left ( \lambda C_{i,1}^{1,s}  + \lambda \mathcal{C}_{i,1}^{1,s} \right ) \\ &+ \left ( (1-\lambda) C_{i,1}^{1,s}  + (1-\lambda) \mathcal{C}_{i,1}^{1,s} \right )  -  \left ( (1-\lambda) C_{i,1}^{1,s} + D_{i}^{1,s} + (1-\lambda) \mathcal{C}_{i,1}^{1,s} \right )
\\ & = 0
\end{split}
\end{equation*}

\begin{equation*}
\begin{split}
\frac{\partial^2 \Loss}{\partial \mu_{1,i} \partial \nu_{s,-1}} &=\left ( \alpha \frac{\partial  }{\partial u_{-1,i}} -\beta \frac{\partial}{\partial u_{1,i}}\right )\left ( \frac{\partial \Loss(x)}{\partial v_{s,-1}} + \frac{\partial \Loss(x) }{\partial v_{s,1}}\right )
\\ & =\alpha \frac{\partial^2 \Loss(x)}{\partial u_{-1,i} \partial v_{s,-1}} +\alpha \frac{\partial^2 \Loss(x) }{\partial u_{-1,i}\partial v_{s,1}}
- \beta \frac{\partial^2 \Loss(x)}{\partial u_{1,i}\partial v_{s,-1}} - \beta \frac{\partial^2 \Loss(x) }{\partial u_{1,i}\partial v_{s,1}}
\\ & = \alpha \left ( \lambda C_{i,1}^{1,s} + D_{i}^{1,s} + \lambda \mathcal{C}_{i,1}^{1,s} \right ) + \alpha \left ( \lambda C_{i,1}^{1,s}  + \lambda \mathcal{C}_{i,1}^{1,s} \right ) \\ & -\beta \left ( (1-\lambda) C_{i,1}^{1,s}  + (1-\lambda) \mathcal{C}_{i,1}^{1,s} \right ) - \beta \left ( (1-\lambda) C_{i,1}^{1,s} + D_{i}^{1,s} + (1-\lambda) \mathcal{C}_{i,1}^{1,s} \right )
\\ & = (\alpha - \beta) D_{i}^{1,s}
\end{split}
\end{equation*}

\begin{equation*}
\begin{split}
\frac{\partial^2 \Loss}{\partial \mu_{1,i} \partial \nu_{s,1}} &=\left (\alpha  \frac{\partial  }{\partial u_{-1,i}} -\beta  \frac{\partial}{\partial u_{1,i}}\right )\left ( \frac{\partial \Loss(x)}{\partial v_{s,-1}} - \frac{\partial \Loss(x) }{\partial v_{s,1}}\right )
\\ & =\alpha \frac{\partial^2 \Loss(x)}{\partial u_{-1,i} \partial v_{s,-1}} -\alpha \frac{\partial^2 \Loss(x) }{\partial u_{-1,i}\partial v_{s,1}}
- \beta \frac{\partial^2 \Loss(x)}{\partial u_{1,i}\partial v_{s,-1}} +\beta \frac{\partial^2 \Loss(x) }{\partial u_{1,i}\partial v_{s,1}}
\\ & = \alpha \left ( \lambda C_{i,1}^{1,s} + D_{i}^{1,s} + \lambda \mathcal{C}_{i,1}^{1,s} \right ) - \alpha \left ( \lambda C_{i,1}^{1,s}  + \lambda \mathcal{C}_{i,1}^{1,s} \right )  \\- &\beta \left ( (1-\lambda) C_{i,1}^{1,s}  +  (1-\lambda) \mathcal{C}_{i,1}^{1,s} \right ) + \beta  \left (  (1-\lambda) C_{i,1}^{1,s} + D_{i}^{1,s} +  (1-\lambda) \mathcal{C}_{i,1}^{1,s} \right )
\\ & = (\alpha+\beta) D_{i}^{1,s}
\end{split}
\end{equation*}
For $q\geq 2 $ and $p\in\{-1,1\}$
\begin{equation*}
\begin{split}
\frac{\partial^2 \Loss}{\partial \mu_{-1,i} \partial u_{q,j}} &=\left ( \frac{\partial  }{\partial u_{-1,i}} +  \frac{\partial}{\partial u_{1,i}}\right )\left ( \frac{\partial \Loss(x)}{\partial u_{q,j}}\right )
\\
& = \lambda A_{i,j}^{1,q} +\lambda  \mathcal{A}_{i,j}^{1,1} +  (1-\lambda) A_{i,j}^{1,q} + (1-\lambda) \mathcal{A}_{i,j}^{1,q}\\
&=A_{i,j}^{1,q} + \mathcal{A}_{i,j}^{1,q}\\
\end{split}
\end{equation*}

\begin{equation*}
\begin{split}
\frac{\partial^2 \Loss}{\partial \mu_{1,i} \partial u_{q,j}} &=\left (\alpha  \frac{\partial  }{\partial u_{-1,i}} -\beta  \frac{\partial}{\partial u_{1,i}}\right )\left ( \frac{\partial \Loss(x)}{\partial u_{q,j}}\right )
\\
& = \alpha (\lambda A_{i,j}^{1,q} +\lambda  \mathcal{A}_{i,j}^{1,1}) + \beta ((1-\lambda) A_{i,j}^{1,q} + (1-\lambda) \mathcal{A}_{i,j}^{1,q})\\
&=0\\
\end{split}
\end{equation*}

\begin{equation*}
\begin{split}
\frac{\partial^2 \Loss}{\partial \mu_{-1,i} \partial v_{s,q}} &=\left ( \frac{\partial  }{\partial u_{-1,i}} +  \frac{\partial}{\partial u_{1,i}}\right )\left ( \frac{\partial \Loss(x)}{\partial v_{s,q}}\right )
\\
&= \lambda C_{i,q}^{1,s} + \lambda \mathcal{C}_{i,q}^{1,s} + (1-\lambda) C_{i,q}^{1,s} +(1-\lambda) \mathcal{C}_{i,q}^{1,s}\\
&= C_{i,q}^{1,s} + \mathcal{C}_{i,q}^{1,s}
\end{split}
\end{equation*}

\begin{equation*}
\begin{split}
\frac{\partial^2 \Loss}{\partial \mu_{1,i} \partial v_{s,q}} &=\left (\alpha  \frac{\partial  }{\partial u_{-1,i}} -\beta  \frac{\partial}{\partial u_{1,i}}\right )\left ( \frac{\partial \Loss(x)}{\partial v_{s,q}}\right )
\\
&= \alpha (\lambda C_{i,q}^{1,s} + \lambda \mathcal{C}_{i,q}^{1,s}) -\beta ((1-\lambda) C_{i,q}^{1,s} +(1-\lambda) \mathcal{C}_{i,q}^{1,s})\\
& = 0
\end{split}
\end{equation*}

\begin{equation*}
\begin{split}
\frac{\partial^2 \Loss}{\partial \nu_{s,-1} \partial u_{q,i}} &=\left ( \frac{\partial  }{\partial v_{s,-1}} +  \frac{\partial}{\partial v_{s,1}}\right )\left ( \frac{\partial \Loss(x)}{\partial u_{q,i}}\right )
\\
&=  C_{i,p}^{q,s} + \mathcal{C}_{i,p}^{q,s}+ C_{i,p}^{q,s} + \mathcal{C}_{i,p}^{q,s}\\
&=2  C_{i,p}^{q,s} + 2\mathcal{C}_{i,p}^{q,s}\\ 
\end{split}
\end{equation*}

\begin{equation*}
\begin{split}
\frac{\partial^2 \Loss}{\partial \nu_{s,1} \partial u_{q,i}} &=\left ( \frac{\partial  }{\partial v_{s,-1}} - \frac{\partial}{\partial v_{s,1}}\right )\left ( \frac{\partial \Loss(x)}{\partial u_{q,i}}\right )
\\
&=  C_{i,p}^{q,s} + \mathcal{C}_{i,p}^{q,s}- C_{i,p}^{q,s} - \mathcal{C}_{i,p}^{q,s}\\
&=0\\ 
\end{split}
\end{equation*}

\begin{equation*}
\begin{split}
\frac{\partial^2 \Loss}{\partial \nu_{s,-1} \partial v_{t,q}} &=\left ( \frac{\partial  }{\partial v_{s,-1}} +  \frac{\partial}{\partial v_{s,1}}\right )\left ( \frac{\partial \Loss(x)}{\partial v_{t,q}}\right )
\\
&= E_{1 ,q}^{s,t} + \mathcal{E}_{1 ,q}^{s,t} + E_{1 ,q}^{s,t} + \mathcal{E}_{1 ,q}^{s,t}\\
&= 2E_{1 ,q}^{s,t} + 2\mathcal{E}_{1 ,q}^{s,t}
\end{split}
\end{equation*}

\begin{equation*}
\begin{split}
\frac{\partial^2 \Loss}{\partial \nu_{s,1} \partial v_{t,q}} &=\left ( \frac{\partial  }{\partial v_{s,-1}} - \frac{\partial}{\partial v_{s,1}}\right )\left ( \frac{\partial \Loss(x)}{\partial v_{t,q}}\right )
\\
&= E_{1 ,q}^{s,t} + \mathcal{E}_{1 ,q}^{s,t} -E_{1 ,q}^{s,t} - \mathcal{E}_{1 ,q}^{s,t}\\
&=0\\
\end{split}
\end{equation*}

We also need to consider the second derivative with respect to the other variables of $\wrest$. If $w$ is closer to the output than $[u_{p,i}]_{p,i},[v_{s,q}]_{s,q}$ belonging to layer $\gamma$ where $\gamma>l+1$, then we get 
\begin{equation*}
\begin{split}
\frac{\partial^2 \Loss}{\partial w \partial \mu_{-1,i} } & =  \frac{\partial  }{\partial w} \left ( \frac{\partial \Loss}{\partial u_{-1,i}} + \frac{\partial \Loss }{\partial u_{1,i}} \right )\\
& =  \sum_\alpha \frac{\partial f(x_\alpha)}{\partial w} \left ( 
\sum_k \frac{\partial \h{l+1}{\n{l+1}{x_\alpha}}}{\partial \n{l+1,k}{x_\alpha}} \cdot v_{k,-1}\cdot \sigma'(\n{l,-1}{x_\alpha} ) \cdot \act{l-1,i}{x_\alpha}
 \right )\\
 & + \sum_\alpha \frac{\partial f(x_\alpha)}{\partial w}\left (    
\sum_k \frac{\partial \h{l+1}{\n{l+1}{x_\alpha}}}{\partial \n{l+1,k}{x_\alpha}} \cdot v_{k,1}\cdot \sigma'(\n{l,1}{x_\alpha} ) \cdot \act{l-1,i}{x_\alpha}
 \right )\\
   & = \sum_\alpha (f(x_\alpha)-y_\alpha) \cdot 
\sum_k \frac{\partial^2 \h{l+1}{\n{l+1}{x_\alpha}}}{\partial w \partial \n{l+1,k}{x_\alpha}} \cdot v_{k,-1}\cdot \sigma'(\n{l,-1}{x_\alpha} ) \cdot \act{l-1,i}{x_\alpha}
\\
 & + \sum_\alpha (f(x_\alpha)-y_\alpha)  \cdot 
\sum_k \frac{\partial^2 \h{l+1}{\n{l+1}{x_\alpha}}}{\partial w  \partial \n{l+1,k}{x_\alpha}} \cdot v_{k,1}\cdot \sigma'(\n{l,1}{x_\alpha} ) \cdot \act{l-1,i}{x_\alpha}
\\
& = \frac{\partial^2 \loss}{\partial w^\varphi \partial u^\varphi_{1,i} }
\end{split}
\end{equation*}
and
\begin{equation*}
\begin{split}
\frac{\partial^2 \Loss}{\partial w \partial \mu_{-1,i} } & =  \frac{\partial  }{\partial w} \left ( \alpha \frac{\partial \Loss}{\partial u_{-1,i}} -\beta  \frac{\partial \Loss }{\partial u_{1,i}} \right )\\
& = \sum_\alpha \frac{\partial f(x_\alpha)}{\partial w}\cdot  \left (
\sum_k \frac{\partial \h{l+1}{\n{l+1}{x_\alpha}}}{\partial \n{l+1,k}{x_\alpha}} \cdot \alpha v_{k,-1}\cdot \sigma'(\n{l,-1}{x_\alpha} ) \cdot \act{l-1,i}{x_\alpha}
 \right )\\
 & - \sum_\alpha \frac{\partial f(x_\alpha)}{\partial w}  \cdot \left (
\sum_k \frac{\partial \h{l+1}{\n{l+1}{x_\alpha}}}{\partial \n{l+1,k}{x_\alpha}} \cdot \beta v_{k,1}\cdot \sigma'(\n{l,1}{x_\alpha} ) \cdot \act{l-1,i}{x_\alpha}
 \right )\\ & + \sum_\alpha (f(x_\alpha)-y_\alpha) \cdot 
\sum_k \frac{\partial^2 \h{l+1}{\n{l+1}{x_\alpha}}}{\partial w \partial \n{l+1,k}{x_\alpha}} \cdot \alpha v_{k,-1}\cdot \sigma'(\n{l,-1}{x_\alpha} ) \cdot \act{l-1,i}{x_\alpha}
\\
 & - \sum_\alpha (f(x_\alpha)-y_\alpha)  \cdot 
\sum_k \frac{\partial^2 \h{l+1}{\n{l+1}{x_\alpha}}}{\partial w  \partial \n{l+1,k}{x_\alpha}} \cdot \beta v_{k,1}\cdot \sigma'(\n{l,1}{x_\alpha} ) \cdot \act{l-1,i}{x_\alpha}
\\
&=  0
\end{split}
\end{equation*}
and
\begin{equation*}
\begin{split}
\frac{\partial^2 \Loss}{\partial w \partial \nu_{s,-1} } & =  \frac{\partial  }{\partial w} \left ( \frac{\partial \Loss}{\partial v_{s,-1}} + \frac{\partial \Loss }{\partial v_{s,1}} \right )\\
& = \sum_\alpha \frac{\partial f(x_\alpha)}{\partial w} \cdot \left ( \frac{\partial \h{l+1}{\n{l+1}{x_\alpha}}}{\partial \n{l+1,s}{x_\alpha}} \cdot \act{l,-1}{x_\alpha} ) \right )\\
 & + \sum_\alpha \frac{\partial f(x_\alpha)}{\partial w} \cdot \left ( \frac{\partial \h{l+1}{\n{l+1}{x_\alpha}}}{\partial \n{l+1,s}{x_\alpha}} \cdot \act{l,1}{x_\alpha} 
 \right )\\ & + \sum_\alpha (f(x_\alpha)-y_\alpha) \cdot 
\frac{\partial^2 \h{l+1}{\n{l+1}{x_\alpha}}}{\partial w \partial \n{l+1,s}{x_\alpha}} \cdot \act{l,-1}{x_\alpha}  \\
 & + \sum_\alpha (f(x_\alpha)-y_\alpha)  \cdot 
\frac{\partial^2 \h{l+1}{\n{l+1}{x_\alpha}}}{\partial w  \partial \n{l+1,s}{x_\alpha}} \cdot \act{l,1}{x_\alpha} 
\\
& = 2\cdot \frac{\partial^2 \loss}{\partial w^\varphi \partial v^\varphi_{s,1} }
\end{split}
\end{equation*}
and
\begin{equation*}
\begin{split}
\frac{\partial^2 \Loss}{\partial w \partial \nu_{s,1} } & =  \frac{\partial  }{\partial w} \left ( \frac{\partial \Loss}{\partial v_{s,-1}} - \frac{\partial \Loss }{\partial v_{s,1}} \right )\\
& = \sum_\alpha \frac{\partial f(x_\alpha)}{\partial w} \cdot \left ( \frac{\partial \h{l+1}{\n{l+1}{x_\alpha}}}{\partial \n{l+1,s}{x_\alpha}} \cdot \act{l,-1}{x_\alpha} ) \right )\\
 & - \sum_\alpha \frac{\partial f(x_\alpha)}{\partial w}  \cdot \left ( \frac{\partial \h{l+1}{\n{l+1}{x_\alpha}}}{\partial \n{l+1,s}{x_\alpha}} \cdot \act{l,1}{x_\alpha} 
 \right )\\ & + \sum_\alpha (f(x_\alpha)-y_\alpha) \cdot 
\frac{\partial^2 \h{l+1}{\n{l+1}{x_\alpha}}}{\partial w \partial \n{l+1,s}{x_\alpha}} \cdot \act{l,-1}{x_\alpha}  \\
 & - \sum_\alpha (f(x_\alpha)-y_\alpha)  \cdot 
\frac{\partial^2 \h{l+1}{\n{l+1}{x_\alpha}}}{\partial w  \partial \n{l+1,s}{x_\alpha}} \cdot \act{l,1}{x_\alpha} 
\\
& = 0
\end{split}
\end{equation*}

If $w$ is closer to the input than $[u_{p,i}]_{p,i},[v_{s,q}]_{s,q}$ connecting neuron $j$ of layer $\gamma-1$ with neuron $r$ of layer $\gamma$ where $\gamma<l$, then we get
\begin{equation*}
\begin{split}
\frac{\partial^2 \Loss}{\partial \mu_{-1,i} \partial w  } & =  \left ( \frac{\partial  }{\partial u_{-1,i}} +  \frac{\partial  }{\partial u_{1,i}} \right )  \left ( \frac{\partial \Loss}{\partial w}  \right )\\
& = \sum_\alpha (f(x_\alpha)-y_\alpha) \cdot  \left ( \frac{\partial  }{\partial u_{-1,i}} +  \frac{\partial  }{\partial u_{1,i}} \right )\left ( \frac{\partial \h{\gamma}{\n{\gamma}{x_\alpha}}}{\partial \n{\gamma,r}{x_\alpha}} \cdot \act{\gamma-1,j}{x_\alpha}
 \right )\\
 & + \sum_\alpha  \left ( \frac{\partial f(x_\alpha) }{\partial u_{-1,i}} +  \frac{\partial f(x_\alpha)  }{\partial u_{1,i}} \right )  \cdot \left ( \frac{\partial \h{\gamma}{\n{\gamma}{x_\alpha}}}{\partial \n{\gamma,r}{x_\alpha}} \cdot \act{\gamma-1,j}{x_\alpha}
 \right )\\
& = \sum_\alpha (f(x_\alpha)-y_\alpha) \cdot  \sum_k \frac{\partial^2 \h{\gamma}{\n{\gamma}{x_\alpha}}}{\partial \n{l+1,k}{x_\alpha} \partial \n{\gamma,r}{x_\alpha}} \cdot v_{k,-1}\cdot \sigma'(\n{l,-1}{x_\alpha} ) \\ &\cdot \act{l-1,i}{x_\alpha}   \cdot \act{\gamma-1,j}{x_\alpha}
\\
 & + \sum_\alpha (f(x_\alpha)-y_\alpha) \cdot  \sum_k \frac{\partial^2 \h{\gamma}{\n{\gamma}{x_\alpha}}}{\partial \n{l+1,k}{x_\alpha} \partial \n{\gamma,r}{x_\alpha}} \cdot v_{k,1}\cdot \sigma'(\n{l,1}{x_\alpha} )  \\ & \cdot \act{l-1,i}{x_\alpha} \cdot \act{\gamma-1,j}{x_\alpha}
\\
 & + \sum_\alpha  \left ( \frac{\partial \varphi(x_\alpha) }{\partial u^\varphi _{1,i}} \right )  \cdot \left ( \frac{\partial \h{\gamma}{\n{\gamma}{x_\alpha}}}{\partial \n{\gamma,r}{x_\alpha}} \cdot \act{\gamma-1,j}{x_\alpha}
 \right )\\
& = \frac{\partial^2 \loss}{\partial w^\varphi \partial u^\varphi_{1,i} }
\end{split}
\end{equation*}
and
\begin{equation*}
\begin{split}
\frac{\partial^2 \Loss}{\partial \mu_{-1,i} \partial w  } & =  \left ( \alpha \frac{\partial  }{ \partial u_{-1,i}} - \beta  \frac{\partial  }{\partial u_{1,i}} \right )  \left ( \frac{\partial \Loss}{\partial w}  \right )\\
& = \sum_\alpha (f(x_\alpha)-y_\alpha) \cdot  \left (\alpha \frac{\partial  }{\partial u_{-1,i}} - \beta  \frac{\partial  }{\partial u_{-1,i}} \right )\left ( \frac{\partial \h{\gamma}{\n{\gamma}{x_\alpha}}}{\partial \n{\gamma,r}{x_\alpha}} \cdot \act{\gamma-1,j}{x_\alpha}
 \right )\\
& + \sum_\alpha   \left (\alpha \frac{\partial f(x_\alpha)}{\partial u_{-1,i}} - \beta  \frac{\partial f(x_\alpha)}{\partial u_{-1,i}} \right )\cdot \left ( \frac{\partial \h{\gamma}{\n{\gamma}{x_\alpha}}}{\partial \n{\gamma,r}{x_\alpha}} \cdot \act{\gamma-1,j}{x_\alpha}
 \right )\\
 & = \sum_\alpha (f(x_\alpha)-y_\alpha) \cdot  \sum_k \frac{\partial^2 \h{\gamma}{\n{\gamma}{x_\alpha}}}{\partial \n{l+1,k}{x_\alpha} \partial \n{\gamma,r}{x_\alpha}} \cdot v_{k,-1}\cdot \sigma'(\n{l,-1}{x_\alpha} ) \\ &\cdot \act{l-1,i}{x_\alpha}   \cdot \act{\gamma-1,j}{x_\alpha}
\\
 & - \sum_\alpha (f(x_\alpha)-y_\alpha) \cdot  \sum_k \frac{\partial^2 \h{\gamma}{\n{\gamma}{x_\alpha}}}{\partial \n{l+1,k}{x_\alpha} \partial \n{\gamma,r}{x_\alpha}} \cdot v_{k,1}\cdot \sigma'(\n{l,1}{x_\alpha} )  \\ & \cdot \act{l-1,i}{x_\alpha} \cdot \act{\gamma-1,j}{x_\alpha}
\\
& + \sum_\alpha   \left ((\alpha \lambda - \beta (1-\lambda) )\frac{\partial \varphi(x_\alpha)}{\partial u^\varphi_{1,i}}  \right )\cdot \left ( \frac{\partial \h{\gamma}{\n{\gamma}{x_\alpha}}}{\partial \n{\gamma,r}{x_\alpha}} \cdot \act{\gamma-1,j}{x_\alpha}
 \right )\\
& =0
\end{split}
\end{equation*}
and
\begin{equation*}
\begin{split}
\frac{\partial^2 \Loss}{\partial \nu_{s,-1} \partial w  } & =  \left ( \frac{\partial  }{\partial v_{s,-1}} +  \frac{\partial  }{\partial v_{s,1}} \right )  \left ( \frac{\partial \Loss}{\partial w}  \right )\\
& = \sum_\alpha (f(x_\alpha)-y_\alpha) \cdot  \left ( \frac{\partial  }{\partial v_{s,-1}} +  \frac{\partial  }{\partial v_{s,1}} \right )\left ( \frac{\partial \h{\gamma}{\n{\gamma}{x_\alpha}}}{\partial \n{\gamma,r}{x_\alpha}} \cdot \act{\gamma-1,j}{x_\alpha}
 \right )\\
& + \sum_\alpha \left ( \frac{\partial f(x_\alpha)}{\partial v_{s,-1}} +  \frac{\partial f(x_\alpha)}{\partial v_{s,1}} \right ) \cdot  \left ( \frac{\partial \h{\gamma}{\n{\gamma}{x_\alpha}}}{\partial \n{\gamma,r}{x_\alpha}} \cdot \act{\gamma-1,j}{x_\alpha}
 \right )\\
& = \sum_\alpha (f(x_\alpha)-y_\alpha) \cdot \frac{\partial^2 \h{\gamma}{\n{\gamma}{x_\alpha}}}{\partial \n{l+1,s}{x_\alpha} \partial \n{\gamma,r}{x_\alpha}} \cdot \act{l,-1}{x_\alpha} ) \cdot \act{\gamma-1,j}{x_\alpha}
\\
 & + \sum_\alpha (f(x_\alpha)-y_\alpha) \cdot  \frac{\partial^2 \h{\gamma}{\n{\gamma}{x_\alpha}}}{\partial \n{l+1,s}{x_\alpha} \partial \n{\gamma,r}{x_\alpha}} \cdot \act{l,1}{x_\alpha} )  \cdot \act{\gamma-1,j}{x_\alpha}
\\
& + \sum_\alpha \left ( \frac{\partial \varphi(x_\alpha)}{\partial v^\varphi_{s,1}} +  \frac{\partial \varphi(x_\alpha)}{\partial v^\varphi_{s,1}} \right ) \cdot  \left ( \frac{\partial \h{\gamma}{\n{\gamma}{x_\alpha}}}{\partial \n{\gamma,r}{x_\alpha}} \cdot \act{\gamma-1,j}{x_\alpha}
 \right )\\
& =2\cdot  \frac{\partial^2 \loss}{\partial w^\varphi \partial v^\varphi_{s,1} }
\end{split}
\end{equation*}
and
\begin{equation*}
\begin{split}
\frac{\partial^2 \Loss}{\partial \nu_{s,1} \partial w  } & =  \left ( \frac{\partial  }{\partial v_{s,-1}} -  \frac{\partial  }{\partial v_{s,1}} \right )  \left ( \frac{\partial \Loss}{\partial w}  \right )\\
& = \sum_\alpha (f(x_\alpha)-y_\alpha) \cdot  \left ( \frac{\partial  }{\partial v_{s,-1}} -  \frac{\partial  }{\partial v_{s,1}} \right )\left ( \frac{\partial \h{\gamma}{\n{\gamma}{x_\alpha}}}{\partial \n{\gamma,r}{x_\alpha}} \cdot \act{\gamma-1,j}{x_\alpha}
 \right )\\
& + \sum_\alpha\left ( \frac{\partial f(x_\alpha)}{\partial v_{s,-1}} -  \frac{\partial f(x_\alpha) }{\partial v_{s,1}} \right ) \cdot  \left ( \frac{\partial \h{\gamma}{\n{\gamma}{x_\alpha}}}{\partial \n{\gamma,r}{x_\alpha}} \cdot \act{\gamma-1,j}{x_\alpha}
 \right )\\
& = \sum_\alpha (f(x_\alpha)-y_\alpha) \cdot \frac{\partial^2 \h{\gamma}{\n{\gamma}{x_\alpha}}}{\partial \n{l+1,s}{x_\alpha} \partial \n{\gamma,r}{x_\alpha}} \cdot \act{l,-1}{x_\alpha} ) \cdot \act{\gamma-1,j}{x_\alpha}
\\
 & - \sum_\alpha (f(x_\alpha)-y_\alpha) \cdot  \frac{\partial^2 \h{\gamma}{\n{\gamma}{x_\alpha}}}{\partial \n{l+1,s}{x_\alpha} \partial \n{\gamma,r}{x_\alpha}} \cdot \act{l,1}{x_\alpha} )  \cdot \act{\gamma-1,j}{x_\alpha}
\\
& =0
\end{split}
\end{equation*}

Finally, if $w$ and $w'$ are parameters different from $[u_{p,i}]_{p,i},[v_{s,q}]_{s,q},\mu$ and $\nu$, then 
\begin{equation*}
\frac{\partial^2 \Loss}{\partial w \partial w' }  =  \frac{\partial^2 \loss}{\partial w^\varphi \partial w'^\varphi } \\
\end{equation*}

\subsubsection{The Hessian}
Putting things together, the matrix for the second derivative of $\Loss$ with respect to $\mu_-1, \nu_-1, \wrest, \mu_1, \nu_1$, where $\wrest$ stands for the collection of all other parameters, at $\gamma_\lambda^1([u_{1,i}^*]_i,[v_{s,1}^*]_s,\wrest*)$ is given by:
$$
\begin{pmatrix}
[\frac{\partial^2 \loss}{\partial u_{1,i} \partial u_{1,j} }]_{i,j}  & 2 [\frac{\partial^2 \loss}{\partial u_{1,i} \partial v_{s,1} }]_{i,s}& [\frac{\partial^2 \loss}{\partial \wrest\ \partial u_{1,i} }]_{i,\wrest}  &0  &0  \\ 
2[\frac{\partial^2 \loss}{\partial u_{1,i} \partial v_{s,1} }]_{s,i} & 4 [\frac{\partial^2 \loss}{\partial v_{s,1} \partial v_{t,1} }]_{s,t} &  2[\frac{\partial^2 \loss}{\partial \wrest\ \partial v_{s,1} }]_{s,\wrest}  & (\alpha-\beta)[D_{i}^{1,s}]_{s,i} &  0  \\
[\frac{\partial^2 \loss}{\partial \wrest\ \partial u_{1,i} }]_{\wrest,i} &  2[ \frac{\partial^2 \loss}{\partial \wrest\ \partial v_{s,1} }]_{\wrest,s} & [\frac{\partial^2 \loss}{\partial \wrest\ \partial \wrest' }]_{\wrest,\wrest'} &  0  & 0   \\
0 & (\alpha-\beta)[D_{i}^{1,s}]_{i,s}& 0 & \alpha \beta [B_{i,j}^1]_{i,j}  &   (\alpha + \beta) [ D_{i}^{1,s}]_{i,s}  \\
0 &  0 & 0 & (\alpha + \beta) [ D_{i}^{1,s}]_{s,i} & 0  \\
\end{pmatrix}
$$

\end{document}

\end{document}